\documentclass[11pt]{article}
\usepackage{fullpage}

\usepackage[utf8]{inputenc} 
\usepackage[T1]{fontenc}    
\usepackage{hyperref}       
\usepackage{url}            
\usepackage{booktabs}       
\usepackage{amsfonts}       
\usepackage{nicefrac}       
\usepackage{microtype}      
\usepackage{enumerate}
\usepackage[shortlabels]{enumitem}
\usepackage{algorithmic}
\usepackage{algorithm}
\usepackage{subfigure}
\usepackage{graphicx} 
\usepackage{caption}
\usepackage{amsmath}
\usepackage{amsthm}
\usepackage{amssymb}
\usepackage{tikz}
\usepackage{multirow}
\usepackage{enumerate}
\usepackage{color}
\usepackage{xcolor}
\usepackage[round]{natbib}
\usepackage{verbatim}
\usepackage{xcolor}
\usepackage{amsmath}
\usepackage{amssymb}
\usepackage{bm}
\usepackage{hyperref}
\usepackage{algorithm}
\usepackage{algorithmic}
\usepackage{subfigure}
\usepackage{graphicx}
\usepackage{multirow}
\usepackage[all]{hypcap}
\usepackage{mathrsfs}
\usepackage{hyperref}
\usepackage{bm,todonotes}
\usepackage{footnote}

\usetikzlibrary{arrows}

\allowdisplaybreaks[4]
\allowdisplaybreaks

\definecolor{darkgreen1}{rgb}{0,0.7,0.3}

 \hypersetup{
     colorlinks=true,
     linkcolor=magenta,
     filecolor=blue,
     citecolor = green!60!black,
     urlcolor=cyan,
     }

\newcommand*{\QEDA}{\hfill\ensuremath{\blacksquare}}

\newtheorem{thm}{Theorem}

\newtheorem{lem}[thm]{Lemma}
\newtheorem{prop}[thm]{Proposition}
\newtheorem{defn}{Definition}
\newtheorem{rem}{Remark}
\newtheorem{exam}{Example}

\newtheorem*{sche}{Sampling and Aggregation}
\newtheorem{fact}{Fact}

\begin{document}

\title{\LARGE \textbf{FedPower: Privacy-Preserving Distributed Eigenspace Estimation~\thanks{The shorter version of this paper is published in \emph{Proceedings of the 38th International Conference on Machine Learning}, PMLR 139:6504-6514.}
\thanks{Xiao Guo and Xiang Li make an equal contribution to this paper. Xiangyu Chang is the corresponding author (\texttt{xiangyuchang@xjtu.edu.cn}).}}}
\author{ Xiao Guo$^1$, Xiang Li$^2$, Xiangyu Chang$^3$, Shusen Wang$^4$, Zhihua Zhang$^2$\hspace{.2cm}  \\
    $1$ School of Mathematics, Northwest University, China\\
    $2$  School of Mathematical Sciences, Peking University, China \\
    $3$ School of Management, Xi'an Jiaotong University, China\\
    $4$ Xiaohongshu Inc., China
    }

\maketitle

\begin{abstract}%
    Eigenspace estimation is fundamental in machine learning and statistics, which has found applications in PCA, dimension reduction, and clustering, among others.
    The modern machine learning community usually assumes that data come from and belong to different organizations. The low communication power and the possible privacy breaches of data make the computation of eigenspace challenging. To address these challenges, we propose a class of algorithms called \textsf{FedPower} within the federated learning (FL) framework. \textsf{FedPower} leverages the well-known power method by alternating multiple local power iterations and a global aggregation step, thus improving communication efficiency. In the aggregation, we propose to weight each local eigenvector matrix with {\it Orthogonal Procrustes Transformation} (OPT) for better alignment. To ensure strong privacy protection, we add Gaussian noise in each iteration by adopting the notion of \emph{differential privacy} (DP). We provide convergence bounds for \textsf{FedPower} that are composed of different interpretable terms corresponding to the effects of Gaussian noise, parallelization, and random sampling of local machines. Additionally, we conduct experiments to demonstrate the effectiveness of our proposed algorithms.


\end{abstract}

{\it Keywords:} Communication Efficiency, Federated Learning, Power Method, Stragglers' Effect

\section{Introduction}\label{sec:introduction}
Modern machine learning tasks involve massive data that come from different sources, such as hospitals, banks, companies, etc.
The communication power is thus limited, which makes large-scale data applications challenging. Further, the data from local sources often contain sensitive information about individuals, hence, privacy issues become more and more prominent \citep{bhowmick2018protection,dwork2014algorithmic}.

\emph{Federated learning} (FL) has emerged as a prominent paradigm for distributed learning in large-scale problems involving multi-source data. For further background and recent advancements, please refer to \citet{kairouz2021advances} and related literature. In FL, each machine typically trains its model locally and sends parameter updates to the central server whenever communication is required. The server then aggregates these updates, potentially with randomization, and broadcasts them to synchronize all local parameters. This process is iterated until convergence or the fulfillment of specific conditions. To address the aforementioned challenges including large-scale data and unreliable communication, autonomy, and privacy issues; {see \citet{mcmahan2017communication,smith2017federated,sattler2019robust,li2020challenges}, a good FL algorithm should meet the following requirements. First, a FL algorithm should be communication efficient with more local computations and fewer communications. Second, it should be able to deal with the scheme when some of the local machines are inactive or the organizations decide not to participate in the following training procedures. Third, individual's privacy should be protected, which is ensured apparently by leaving the original data at local machines.

Nevertheless, even if only the updates but not the original data are transmitted to the central server, individuals' privacy can still be compromised through via delicately designed attacks \citep{dwork2017exposed,melis2018inference,zhou2020differentially}. To address this issue, the framework of \emph{differential privacy} (DP) \citep{dwork2006calibrating,dwork2014algorithmic} has gained widespread attention in private data analysis. A differentially private (DP) algorithm pursues that if the data is changed by one row (entry) with pre-specified limits, then the algorithm's output appears similar in probability. Such algorithms protect the individuals' privacy from any adversary who knows the algorithm's output and even the rest of the data and can resist any kind of attack. Typically, a differentially private algorithm is obtained by adding calibrated noise to the non-differentially private algorithm.

This paper focuses on the problem of eigenspace estimation within the private-preserving federated learning framework. Eigen-decomposition is a widely used technique in various machine learning tasks, such as dimension reduction \citep{wold1987principal}, clustering \citep{von2007tutorial}, ranking \citep{negahban2017rank}, matrix completion \citep{candes2009exact}, multiple testing \citep{fan2019farmtest}, and factor analysis \citep{bai2013principal}. It also finds applications in fields like finance, biology, and neurosciences \citep{izenman2008modern}. The computation of eigenvectors has evolved since the 1960s, with seminal works by \citet{golub1965calculating,golub1970singular} that provided the basis for the \emph{EISPACK} and \emph{LAPACK} routines. For computing leading eigenspace of matrices, iterative algorithms such as the power iteration and its variants \citep{Golub2012matrix,hardt2014noisy} flourished. Recently, to solve large-scale problems, distributed learning of eigenvectors or principle components is receiving more and more attention; see \citet{fan2019distributed,chen2021distributed}, among others. However, as far as we are aware, most existing works can not meet the aforementioned challenges simultaneously.

To tackle the challenges of large-scale computation, unreliable communication, and privacy breaches in modern data analysis, we propose a set of algorithms called the \emph{Federated Power} method (\textsf{FedPower}). Building upon the well-known single-machine power method, \textsf{FedPower} assumes a distributed data setting, where each machine performs local power iterations using its data. After several local steps, the local machines send their updates to the central server, which aggregates and returns the results back to the machines. Due to the orthogonal ambiguity of subspaces, we employ the \emph{Orthogonal Procrustes Transformation} (OPT) during the aggregation. As connectivity issues may occur during the training process where each local machine may lose connection to the server actively or passively, we present two protocols: \emph{full participation} and \emph{partial participation}. In the partial participation protocol, the server may collect the first few responded local machines within a certain time range. We model this by assuming that the local machines are sampled with the replacement for fixed times.
Moreover, to avoid privacy leakage, we take advantage of the notion of DP to add Gaussian noise to the updates in each iteration. This is based on our assumption that the server is \emph{honest-but-curious (semi-honest)}.


In addition to the algorithms, we study how the \textsf{FedPower} performs theoretically. Firstly, we provide rigorous analysis demonstrating the DP guarantees of the algorithms corresponding to both the full and partial participation schemes. These guarantees are established using the notion of \emph{R\'{e}nyi differential privacy} \citep{mironov2017renyi}.
Secondly, we analyze the
convergence bound of \textsf{FedPower} in terms of the subspace distance between the estimated and the true eigenspace using advanced random matrix theory and delicate error decomposition techniques.
For the full participation scheme, the convergence error comprises two components. One component arises from the Gaussian noise, while the other comes from the parallelization and synchronization. For the partial participation scheme, the resulting convergence error bound consists of three components. Besides the two components appearing in the error of the full participation scheme, there exists an additional component that comes from the sampling of local machines, which could be regarded as the sampling bias term. The more local machines that are sampled, the smaller the bias term would be. As expected, the final error bounds can be made sufficiently small by ensuring a sufficiently large-sample and high-quality local data.

The remainder of the paper is organized as follows. Section \ref{sec:preliminary} introduces the power method for the centralized and distributed eigenspace estimation, and two notions of DP. Section \ref{sec:method} includes the proposed algorithms \textsf{FedPower} and the corresponding convergence analysis under two schemes, namely, the full participation and the partial participation. Section \ref{sec:related} reviews and discusses the related works, and also summarizes the main contributions of this work. Section \ref{sec:experiment} presents the experimental results. Section \ref{sec:conclusion} concludes the paper. Technical proofs and supplementary materials are all included in the Appendix.


\section{Preliminaries}
\label{sec:preliminary}
In this section, we first present the power method for computing the eigenspace, and a naive distributed power method designed for the distributed eigenspace computation. Then, we pose the demand, namely, the communication efficiency and privacy preservation, that the modern machine learning tasks call for. In particular, we propose one adversary model to show how privacy can be leaked during the communication rounds of the naive distributed power method. Last, we present the preliminaries of $(\epsilon,\delta)$-DP and its variant R\'{e}nyi differential privacy.


\subsection{Eigen-decomposition and Power Method}
Given a positive semi-definite matrix $A\in \mathbb R^{d\times d}$, its full eigen-decomposition is defined as
$${A=U{\Sigma} U^\intercal =\sum_{i=1}^d {\sigma_i}u_iu_i^\intercal,}$$
where $U=[u_1,u_2,\dots,u_d]\in\mathbb R^{ d\times d}$ are orthogonal matrices that contain the eigenvectors of $A$, and $\Sigma={\rm diag}\{\sigma_1,\sigma_2, \ldots,\sigma_d\}$ is a diagonal matrix with the eigenvalues in decreasing order on the diagonal. The partial or truncated eigen-decomposition aims to compute the top $k~(k\leq d)$ eigenvectors $U_k=[u_1, \ldots, u_k]$ and use the truncated decomposition $U_k\Sigma_k U_k^\intercal$ to approximate $A$, where $\Sigma_k={\rm diag}\{\sigma_1, \ldots, \sigma_k\}$.

The power method \citep{Golub2012matrix} computes $U_k$ by iterating
\begin{equation}
\phantomsection
\label{2.1}
Y\leftarrow A Z \quad{\rm and}\quad Z\leftarrow {\rm \textsf{orth}}(Y),
\tag{2.1}
\end{equation}
where $Y$ and $Z$ are $d\times k$ matrices, and ${\rm \textsf{orth}}(Y)$ means orthogonalizing the columns of $Y$ via QR-factorization.

\subsection{Distributed Power Method}
\label{subsec: distributed}
Suppose $A$ is stored at $m$ different local machines such that
\begin{equation}
\label{2.2}
A=\frac{1}{m}\sum_{i=1}^m A_i.
\tag{2.2}
\end{equation}
Typically, we consider the scenario where $A=\frac{1}{n}M^\intercal M$ for some $M\in \mathbb R^{n\times d}$. Suppose that $M$ is partitioned to $m$ blocks by row such that $M^\intercal =[M_1^\intercal,...,M_m^\intercal]$, where $M_i\in \mathbb R ^{s_i\times d}$ includes $s_i$ rows of $M$ and $\sum_{i=1}^m s_i=n$. For simplicity, we assume that $M$ is equally partitioned and $s_i=\frac{n}{m}$ for all $i$'s, yet we note that the results can be easily extended to more general cases. It is then clear that
\begin{equation}
\label{2.3}
A_i=\frac{m}{n}M_i^\intercal M_i.
\tag{2.3}
\end{equation}
For notational simplicity, we refer $A_i$ in what follows but one should keep (\ref{2.3}) in mind.

With (\ref{2.2}), $Y$ in (\ref{2.1}) can be written as
\begin{equation*}
Y=\frac{1}{m}\sum_{i=1}^m A_i Z\in\mathbb R^{d\times k},
\end{equation*}
which implies that the power method can be parallelized. See Figure \ref{dispowerpic} and Algorithm \ref{dispower}, which is called the distributed power method. Note that Algorithm \ref{dispower} is identical to the power method except that the summations therein come from different workers. The following theorem is a well-known result on the convergence of the power method as well as the distributed power method \citep{arbenz2012lecture}.
\begin{thm}
\label{dispowerthm}
{Let $\sigma_k$ be the $k$-th largest eigenvalue of $A$ and assume $\sigma_{k+1}>0$, where $1\leq k< d$.} Then for any $\epsilon>0$, with high probability, after $T=O(\frac{\sigma_k}{\sigma_{k+1}}{\rm log}(\frac{d}{\epsilon}))$ iterations, the output ${Z}_T$ of Algorithm \ref{dispower} satisfies
$${\rm sin}\theta_k({Z}_T,U_k)=\|(\mathbb I_d-{Z}_T{Z}_T^\intercal)U_k\|_2\leq \epsilon,$$
where ${\rm sin}\theta_k$ denotes the $k$-th \emph{principle angles} between two subspaces which can be regarded as a subspace distance \footnote{The formal definition can be found in Section \ref{sec:notes}.}, and $\mathbb I_d$ denotes the identity matrix of dimension $d$.
\end{thm}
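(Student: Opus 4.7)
The plan is to reduce the theorem to the classical convergence guarantee for orthogonal (subspace) iteration, since the decomposition in equation (\ref{2.3}) shows that Algorithm \ref{dispower} produces exactly the same sequence of iterates $\{Z_t\}$ as the single-machine power method applied to $M$. In other words, the fact that the sum $\sum_{i=1}^m p_i M_i Z$ is computed across workers rather than a single machine is immaterial for the mathematical trajectory, so I only need to prove the stated convergence for the classical orthogonal iteration $Y_{t+1}\leftarrow M Z_t$, $Z_{t+1}\leftarrow \textsf{orth}(Y_{t+1})$. This observation should be stated up front so the remainder of the argument is purely linear-algebraic.

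Next I would invoke the eigendecomposition $M=V\Sigma V^\intercal$ (recall $M$ is PSD) and write any column-orthogonal $Z_0\in\mathbb{R}^{d\times k}$ in the eigenbasis: $Q_0:=V^\intercal Z_0$, partitioned into the top and bottom blocks $Q_0=\bigl[\,Q_0^{(1)\,\intercal},Q_0^{(2)\,\intercal}\,\bigr]^\intercal$ of sizes $k\times k$ and $(d-k)\times k$. The first step is to verify that the QR steps do not affect the column span, so $\mathrm{span}(Z_T)=\mathrm{span}(M^T Z_0)=\mathrm{span}\bigl(V\,\Sigma^T Q_0\bigr)$. The second step is the standard sine-theta identity
\[
\sin\theta_k(Z_T,V_k)\;\le\;\bigl\|\Sigma_2^{T}Q_0^{(2)}\bigr\|_2\,\bigl\|(\Sigma_1^{T}Q_0^{(1)})^{-1}\bigr\|_2\;\le\;\Bigl(\tfrac{\sigma_{k+1}}{\sigma_k}\Bigr)^{T}\,\|Q_0^{(2)}\|_2\,\bigl\|(Q_0^{(1)})^{-1}\bigr\|_2,
\]
where $\Sigma_1=\mathrm{diag}(\sigma_1,\ldots,\sigma_k)$ and $\Sigma_2=\mathrm{diag}(\sigma_{k+1},\ldots,\sigma_d)$. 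This contraction factor $(\sigma_{k+1}/\sigma_k)^T$ is where the iteration count enters; taking logarithms and solving for $T$ gives the advertised complexity, modulo the initialization-dependent constant in front.

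The main obstacle, and where the phrase ``with high probability'' appears, is controlling the prefactor $\|Q_0^{(2)}\|_2\,\|(Q_0^{(1)})^{-1}\|_2$ for a random starting matrix $Z_0$. Under a standard Gaussian or uniformly random orthonormal initialization, $Q_0^{(1)}=V_k^\intercal Z_0$ is (up to rotation) itself a $k\times k$ block of a Haar/Gaussian $d\times k$ matrix, and $\|Q_0^{(2)}\|_2\le 1$ trivially. The nontrivial step is a lower bound on $\sigma_{\min}(Q_0^{(1)})$, for which I would cite the classical smallest singular value bound for Gaussian matrices (e.g.\ $\sigma_{\min}(Q_0^{(1)})\gtrsim 1/\sqrt{d}$ with constant probability, or with a $\mathrm{poly}(1/\delta)$ scaling at failure probability $\delta$). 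Plugging this into the sine-theta bound shows that demanding $(\sigma_{k+1}/\sigma_k)^T\cdot\mathrm{poly}(d)\le\epsilon$ is enough, which rearranges to $T=O\!\bigl(\tfrac{\sigma_k}{\sigma_k-\sigma_{k+1}}\log(d/\epsilon)\bigr)$ and hence, absorbing constants, to the $T=O(\tfrac{\sigma_k}{\sigma_{k+1}}\log(d/\epsilon))$ rate stated in the theorem.

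Finally I would note that this mirrors the argument in \citet{arbenz2012lecture}, so the proof can be compressed to: (i) equivalence of Algorithm \ref{dispower} with the classical orthogonal iteration; (ii) the eigenbasis decomposition and sine-theta contraction; (iii) the probabilistic lower bound on $\sigma_{\min}(V_k^\intercal Z_0)$. Steps (i) and (ii) are routine; step (iii) is the one place where randomness enters and is what makes the guarantee hold ``with high probability'' rather than deterministically.
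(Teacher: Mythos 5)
The paper never proves this statement: it is presented as a known result with a citation to \citet{arbenz2012lecture}, so your reconstruction is supplying an argument the paper omits rather than paralleling one. Your three-step plan is the standard one and is essentially sound: (i) the identity $\sum_i p_i M_i Z = MZ$ in (\ref{2.3}) does make Algorithm \ref{dispower} trajectory-identical to centralized orthogonal iteration; (ii) since \textsf{orth} preserves column span, $\mathrm{span}(Z_T)=\mathrm{span}(M^T Z_0)$ and the tangent-type bound $\|\Sigma_2^{T}Q_0^{(2)}(\Sigma_1^{T}Q_0^{(1)})^{-1}\|_2\le(\sigma_{k+1}/\sigma_k)^T\|Q_0^{(2)}\|_2\|(Q_0^{(1)})^{-1}\|_2$ gives the contraction; (iii) a smallest-singular-value bound on the Gaussian block $V_k^\intercal Z_0$ supplies the ``with high probability'' prefactor. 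One technical point to repair: Algorithm \ref{dispower} iterates with rank $r\ge k$, so $Q_0^{(1)}=V_k^\intercal Z_0$ is $k\times r$, not square; you need the pseudo-inverse form of the tangent bound (cf. Fact \ref{fact} in the paper's Appendix E) and a lower bound on the $k$-th singular value of a $k\times r$ Gaussian matrix — this only helps you, but your square-block write-up covers just $r=k$.

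The genuine flaw is the last sentence of your third paragraph. From $(\sigma_{k+1}/\sigma_k)^T\cdot\mathrm{poly}(d)\le\epsilon$ you correctly obtain $T=O\bigl(\tfrac{\sigma_k}{\sigma_k-\sigma_{k+1}}\log(d/\epsilon)\bigr)$, but this is \emph{not} ``absorbed into'' $T=O\bigl(\tfrac{\sigma_k}{\sigma_{k+1}}\log(d/\epsilon)\bigr)$: when $\sigma_{k+1}$ is close to $\sigma_k$ the first quantity blows up while the second stays near $1$, so no constant bridges them. The gap-dependent rate you derived is the correct classical one, and it is also what the paper's own machinery uses later (Lemma \ref{lemma1}, restating \citet{hardt2014noisy}, has $T=O(\tfrac{\sigma_k}{\sigma_k-\sigma_{k+1}}\log(d\tau/\epsilon))$); the $\tfrac{\sigma_k}{\sigma_{k+1}}$ in the theorem statement (and the hypothesis $\sigma_{k+1}>0$ rather than $\sigma_k>\sigma_{k+1}$) is best read as a typo. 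State your conclusion with the gap-dependent rate and note the discrepancy, rather than claiming the two expressions coincide up to constants.
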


The distributed power method can deal with data distributed across multiple workers. However, it falls short of addressing the challenges and concerns encountered in modern data applications. Firstly, Algorithm \ref{dispower} necessitates two communication rounds in each iteration, resulting in substantial communication costs when simple parallelizing the power method. Furthermore, the issues of the straggler's effect and privacy breaches are yet to be resolved. In the subsequent subsections, we will delve into further explanations of these two concerns.

\begin{figure*}[!htbp]{}
\centering
{\includegraphics[height=5.6cm,width=11.2cm,angle=0]{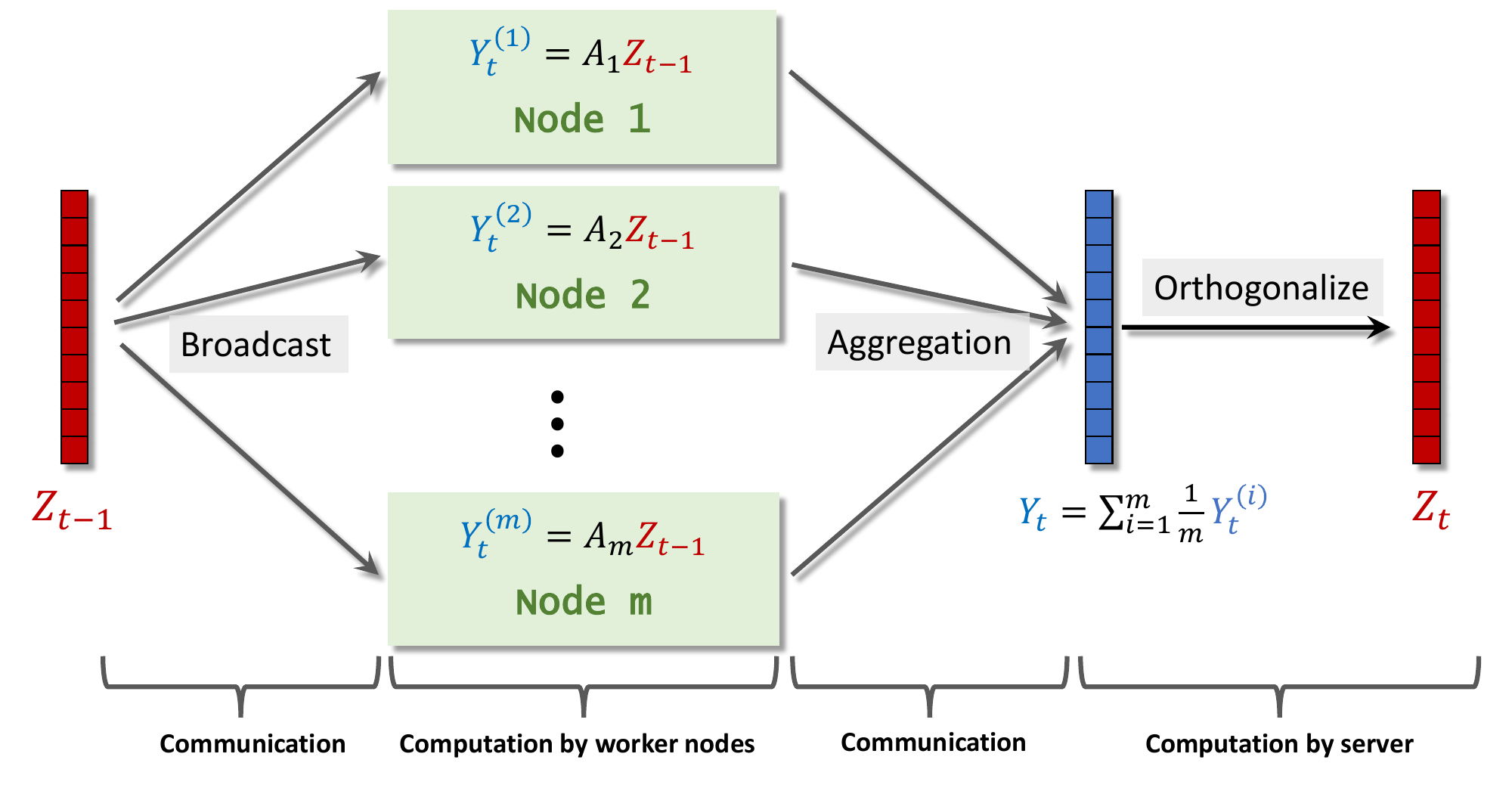}}
\caption{Protocol of the distributed power method. In every iteration, there are two rounds of communications.}\label{dispowerpic}
\end{figure*}

\begin{algorithm}[htb]
\small

\renewcommand{\algorithmicrequire}{\textbf{Input:}}

\renewcommand\algorithmicensure {\textbf{Output:} }

\caption{Distributed Power Method }

\label{dispower}

\begin{algorithmic}[1]
\STATE \textbf{Input:} distributed dataset $\{A_i\}_{i=1}^m$, target rank $k$, number of iterations $T$. \\
\STATE \textbf{Initialization:} orthonormal $Z_0^{(i)}=Z_0\in \mathbb R^{d\times k}$ by QR decomposition on a random Gaussian matrix with i.i.d. entries from $\mathcal N(0,1)$.\\
\FOR{$t = 1$ to $T$}
\STATE The $i$-th worker independently performs $Y_t^{(i)}=A_iZ_{t-1}^{(i)}$ for all $i\in [m]$;\\
\STATE Each worker $i$ sends $Y_t^{(i)}$ to the server and the server performs aggregation: $Y_t=\sum_{i=1}^m \frac{1}{m} Y_t^{(i)}$;\\
\STATE The server performs orthogonalization: $Z_t={\rm \textsf{orth}}(Y_t)$ and broadcast $Z_t$ to each worker such that $Z_{t}^{(i)}=Z_t$;
\ENDFOR
\STATE \textbf{Output:} approximated eigen-space ${Z}_{T}\in\mathbb R^{d\times k}$ with orthonormal columns.
\end{algorithmic}
\end{algorithm}

\subsection{Straggler's Effect}

Unlike traditional distributed learning, the FL system operates under the premise that the server does not have control over the local machines, leading to potential disconnections between the server and the local machines \citep{kairouz2021advances,li2020convergence}. This introduces two key considerations. Firstly, certain local machines may experience issues such as being powered off, encountering technical failures, or having limited internet connectivity, thereby becoming stragglers. Secondly, each local machine retains its autonomy, with the owners having the discretion to opt out of specific training steps for various reasons. Consequently, waiting for responses from all local machines becomes unfeasible for the server. Instead, the server can utilize the updates from the first few responsive local machines within a predetermined timeframe. Our proposed algorithm effectively captures and accounts for the impact of stragglers.

\subsection{Adversary Model}
\label{adv}

Though the data are not shared by all the participants in  FL or more general distributed learning systems, privacy breaches remain possible. In this paper, we consider a specific adversary termed as the \emph{curious onlooker}, who can eavesdrop on the communication between the server and the local machines and possesses knowledge of the learning tasks and protocols. We consider the server as a potential onlooker, characterized as honest-but-curious (semi-honest). This means that while the server does not violate the protocol to attack the raw data, it is curious and will attempt to learn all possible information from its received messages \citep{goldreichfoundations}. For instance, in a company, internal employees responsible for model training may attempt to infer personal information about the users. Additionally, we assume that each local machine is honest and does not attempt to infer information from other machines. The following example demonstrates how the distributed power method (Algorithm \ref{dispower}) can potentially cause privacy threats to curious onlookers.

\begin{exam}[Privacy breaches via curious onlookers]
\label{attack1}
Consider Algorithm \ref{dispower} and assume that the server knows the updates $Y_t^{(i)}$ for each $i\in [m]$ and $t\in T$.
In addition, the server could infer $Z_t^{(i)}$ from $Y_t^{(i)}$ because it also knows the learning rule that $Z_t^{(i)}$ is obtained from $Y_t^{(i)}$ via the QR decomposition (Line 6 in Algorithm \ref{dispower}). Then by
\begin{equation}
\phantomsection
\label{2.4}
Y_t^{(i)}=A_iZ_{t-1}^{(i)}\; {\mbox{(Line 4 in Algorithm \ref{dispower})}},
\tag{2.4}
\end{equation}
and the results in Theorem \ref{dispowerthm} that the distributed power method converges after $T=O(\frac{\sigma_k}{\sigma_{k+1}}{\rm log}(\frac{d}{\epsilon}))$ iterations, the server can infer $A_i$ using $d\times k\times T$ equations provided that there are at most $O(d \log d)$ unknown elements in $A_i$. Indeed, the server may participate in the data collection, therefore they may know some prior information about $A_i$.

\end{exam}

We have presented a simple example to illustrate the potential privacy leakage that can occur through curious onlookers. It is important to note that more sophisticated attacks can be found in studies such as~\citet{madry2017towards,chakraborty2018adversarial}, among others.


\subsection{Differential Privacy}


We have seen that privacy concerns are prevailing in modern data analysis. How to quantitatively describe privacy is the key point to understanding and designing privacy-preserving algorithms. Differential privacy, first introduced in \citet{dwork2006calibrating}, is a rigorous and most widely adopted notion of privacy, which generally guarantees that a randomized algorithm behaves similarly on similar input databases.
The $(\varepsilon,\delta)$-DP \citep{dwork2014algorithmic} is defined as follows. Throughout this paper, we use the term "DP" as an abbreviation for ``differential privacy'' or ``differentially private'', following common usage in the field.

\begin{defn}[$(\varepsilon,\delta)$-DP]
\label{dp}
A randomized algorithm $\mathcal M$: $\mathcal X^n\rightarrow \Theta$ is called $(\varepsilon,\delta)$-DP if for all pairs of \emph{neighboring databases} $X, X'\in \mathcal X^{n}$, and for all subsets of range $S\subseteq \Theta$:
$$\mathbb P(\mathcal M(X)\in S)\leq {\rm exp}(\varepsilon)\mathbb P(\mathcal M(X')\in S)+\delta.$$
\end{defn}



In literature, the definition of neighboring datasets $X, X'$ varies case by case.
In this work, dataset $X$ is the data matrix $M_i$ introduced in Section \ref{subsec: distributed}, and we consider the scenario where ``neighboring'' means $M_i$ and $M'_i$ differ in one row, with a Euclidean norm of 1.
DP achieves the privacy goal that anything can be learned about an individual from the released information can also be learned without that individual's participation. The $\varepsilon$ is often called the privacy budget which is a small constant measuring the privacy loss and it should be no larger than 1 typically \citep{dwork2014algorithmic}. The $\delta$ is also a small constant and it can be thought of as a tolerance of the more stringent $\varepsilon$-DP, i.e., $(\varepsilon,\delta)$-DP with $\delta$ being 0.

While $(\varepsilon,\delta)$-DP is interpretable and user-friendly, it can not tightly handle composition, that is, how much the privacy loss accumulates under repeated quires on the same data. To ensure the exact composition, many new notions of DP have been developed, such as concentrated differential privacy \citep{dwork2016concentrated}, R\'{e}nyi differential privacy \citep{mironov2017renyi}, truncated concentrated differential privacy \citep{bun2018composable}, and Gaussian differential privacy \citep{dong2019gaussian}, among others. In this work, we specifically focus on R\'{e}nyi differential privacy (RDP).

\begin{defn}[$(\alpha, \varepsilon)$-RDP]
A randomized algorithm $\mathcal M$: $\mathcal X^n\rightarrow \Theta$ is called $\epsilon$-R\'{e}nyi DP of order $\alpha$, or $(\alpha, \varepsilon)$-RDP, if for all pairs of \emph{neighboring databases} $X, X'\in \mathcal X^{n}$,
\begin{equation}
D_\alpha (\mathcal M (X)\|\mathcal M (X')):=\frac{1}{\alpha-1} \log \mathbb E_{x\sim \mathcal M(X')} \left[\left(\frac{\mathcal M(X)(x)}{\mathcal M(X')(x)}\right)^\alpha\right]\leq \varepsilon.\nonumber
\end{equation}
\end{defn}
RDP can be achieved using the following Gaussian mechanism \citep{mironov2017renyi}.
\begin{prop}[RDP Gaussian mechanism]
\label{gaussrdp}
For any $f: \mathcal X^n\rightarrow \mathbb R^d$, its $l_2$-sensitivity is defined by
\[\triangle_2(f)=\underset{X,X'\,\mbox{\tiny neighboring}}\sup\| f(X)- f(X')\|_2.
\]
If ${\triangle_2(f)<\Delta}$,
then the Gaussian mechanism given by \[\mathcal M(X):=f(X)+(\xi_1,\xi_2,\dots,\xi_d)^\intercal,
\]
where the $\xi_i$ are  i.i.d.~ drawn from {$\mathcal N(0,\nu^2)$}, achieves $(\alpha,\frac{\alpha \Delta^2}{2\nu^2})$-RDP for any $\alpha>1$.
\end{prop}

RDP enjoys the following two nice properties \citep{mironov2017renyi}. One is the composition property, that is, the privacy degrade of repeated mechanisms is just the summation of the privacy budgets of each mechanism. The other is the post-processing property, meaning that an $(\alpha,\varepsilon)$-RDP algorithm is still $(\alpha,\varepsilon)$-RDP after any post-processing procedure provided that no additional knowledge about the database is used.


\begin{prop}[Adaptive composition of RDP]
\label{composition}
For $\alpha \in (1,\infty)$, suppose $\mathcal M_1: \mathcal X^n\rightarrow \Theta_1$ is $(\alpha, \varepsilon_1)$-RDP and $\mathcal M_2: \mathcal X^n\times \Theta_1\rightarrow \Theta_2$ is $(\alpha, \varepsilon_2)$-RDP, which takes as input the output of the first mechanism $\mathcal M_1$ in addition to the dataset. Then, the joint mechanism $\mathcal M: \mathcal X^n \rightarrow \Theta_1\times \Theta_2$ defined as
$$\mathcal M(X)=(\theta_1, \mathcal M_2 (X,\theta_1))$$
achieves $(\alpha, \varepsilon_1+\varepsilon_2)$-RDP, where $X\subset \mathcal X^n$ and $\theta_1=\mathcal M_1(X)$.
\end{prop}
The composition of RDP can be generalized to multiple compositions.
\begin{prop}[Post-processing of RDP]
\label{post}
{Let $\mathcal M: \mathcal X^n\rightarrow \Theta$ be an $(\alpha,\varepsilon)$-RDP algorithm, and $g:\Theta\rightarrow \Theta'$ be an arbitrary randomized mapping. Then $g\circ\mathcal M :\mathcal X^n\rightarrow \Theta'$ is $(\alpha,\varepsilon)$-RDP.}
\end{prop}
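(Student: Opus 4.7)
The plan is the standard two-step reduction: first handle deterministic post-processors via pullback of events, then lift to randomized $g$ by conditioning on its internal randomness.

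First I would reduce to the deterministic case. Assume $g$ is a deterministic measurable map $\Theta \to \Theta'$. Fix an arbitrary measurable $S' \subseteq \Theta'$ and let $S := g^{-1}(S') = \{\theta \in \Theta : g(\theta) \in S'\}$, which is measurable since $g$ is. For any neighboring databases $X, X' \in \mathcal{X}^n$, the event $\{g(\mathcal M(X)) \in S'\}$ is identical to $\{\mathcal M(X) \in S\}$, so
\begin{equation*}
\mathbb{P}(g(\mathcal M(X)) \in S') = \mathbb{P}(\mathcal M(X) \in S) \leq e^{\varepsilon}\, \mathbb{P}(\mathcal M(X') \in S) + \delta = e^{\varepsilon}\, \mathbb{P}(g(\mathcal M(X')) \in S') + \delta,
\end{equation*}
where the inequality uses the assumed $(\varepsilon, \delta)$-DP of $\mathcal M$ applied to the pulled-back set $S$. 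This is exactly the $(\varepsilon, \delta)$-DP condition for $g \circ \mathcal M$.

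Next I would extend to a randomized $g$ by viewing its randomness as an auxiliary random seed $R$, independent of both the data and the internal randomness of $\mathcal M$, so that $g(\theta) = g_R(\theta)$ where $g_r$ is deterministic for each fixed realization $r$. Conditioning on $R = r$, the first step yields, for every $r$,
\begin{equation*}
\mathbb{P}(g_r(\mathcal M(X)) \in S') \leq e^{\varepsilon}\, \mathbb{P}(g_r(\mathcal M(X')) \in S') + \delta.
\end{equation*}
Taking expectation over $R$ (which is legitimate because $R \perp \mathcal M$ and $R \perp X, X'$), the left-hand side becomes $\mathbb{P}(g(\mathcal M(X)) \in S')$ and the right-hand side becomes $e^{\varepsilon}\, \mathbb{P}(g(\mathcal M(X')) \in S') + \delta$, giving the desired inequality.

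The main obstacle is the second step: one must be careful that no side information about $X$ leaks into the randomness of $g$. The statement's phrase ``provided that no additional knowledge about the database is used'' is exactly the independence assumption $R \perp (X, X', \mathcal M)$, and this is what legitimizes pulling the conditional DP bound through the expectation. Aside from this conceptual point, the measurability of $g^{-1}(S')$ for Borel $S'$ is routine, and the argument is otherwise a direct application of Definition \ref{dp}; no further machinery is required.
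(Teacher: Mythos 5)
Your proof is correct and is exactly the canonical argument: the paper itself states this proposition without proof, citing \citet{dwork2014algorithmic}, and the standard proof there proceeds just as you do—pull back the event through a deterministic $g$ and apply Definition \ref{dp} to $g^{-1}(S')$, then handle randomized $g$ by conditioning on its seed (equivalently, writing $g$ as a convex combination of deterministic maps) and averaging, which preserves the bound since $e^{\varepsilon}$ and $\delta$ are constants. Your closing observation that the independence of $g$'s randomness from the data is what licenses this averaging correctly captures the paper's caveat that no additional knowledge about the database may be used.
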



We also have the conversion of RDP to $(\varepsilon,\delta)$-DP to enhance the interpretability of DP.

\begin{prop}[RDP to DP]
\label{rdptodp}
If $\mathcal M$ is $(\alpha, \varepsilon)$-RDP, then $\mathcal M$ is $(\varepsilon + \frac{\log (1/\delta)}{\alpha-1}, \delta)$-DP for $\forall 0<\delta<1$.
\end{prop}

All the properties enhance the utility of RDP in practical applications.

\subsection{Notation}
\label{sec:notes}

We summarize the notation and notions  used in the following sections of this paper. Given a target matrix $A \in \mathbb{R}^{d{\times} d}$,
the $k$-th largest eigenvalue of $A$ is denoted by $\sigma_k$. Let $k$ and $r\ (r\geq k)$ be the target rank and iteration rank of partial eigen-decomposition, respectively.  The numbers of total iterations is denoted by $T$ and the set $\{1, \ldots, T\}$ is denoted by $[T]$. $\|\cdot\|_2$ denotes the spectral norm of a matrix or the Euclidean norm of a vector, $\|\cdot\|_{\tiny {\rm max}}$ denotes the entry-wise maximum absolute value of a matrix or a vector, $\|\cdot\|_\infty$ denotes the matrix operator $\ell_\infty$ norm, and $\|\cdot\|_{\tiny {\rm m}}$ denotes the minimum singular value of a matrix. Let $\kappa =\|A\|_2\|A^\dagger\|_2$ denote target matrix $A$'s condition number. $\mathcal O_r$ denotes the set of $r\times r$ orthogonal matrices and $\mathbb I_r$ denotes the identity matrix with dimension $r$.

{In addition, we use the following standard notation for asymptotics. We write $f(n)\asymp g(n)$ if $c g(n)\leq f(n)\leq C g(n)$ for some constants $0<c<C<\infty$. $f(n)\lesssim g(n)$ or $f(n)=O(g(n))$ if $f(n)\leq Cg(n)$ for some constant $C<\infty$. $f(n)=\Omega(g(n))$ if $f(n)\geq cg(n)$ for some constant $c>0$.} Finally, we provide the definition of \emph{projection distance}, which measures the distance of two subspaces.

\begin{defn}[Projection distance]
\label{dfdis}
Given two column-orthonormal matrices $U,\tilde{U}\in \mathbb R^{d\times k}$, the projection distance between the two subspaces spanned by their columns is defined as
\begin{equation}
\phantomsection
\label{2.6}
{\rm dist}(U,\tilde{U}):=\|UU^\intercal-\tilde{U}\tilde{U}^\intercal\|_2=\|\tilde{U}^\intercal U^\bot\|_2=\|{U}^\intercal \tilde{U}^\bot\|_2={\rm sin }\,\theta_k(U,\tilde{U}),
\tag{2.6}
\end{equation}
where $U^\bot$ and $\tilde{U}^\bot$ denote the complement subspaces of $U$ and $\tilde{U}$, respectively.
Here $\theta_k$ denotes the $k$-th {principle angle} between two subspaces; see Appendix E for the formal definition.
\end{defn}

\section{Privacy-Preserving Distributed Eigenspace Estimation}
\label{sec:method}
In this section, we develop a set of power-iteration-based algorithms, called \textsf{FedPower}, for the eigenvector computation which is communication efficient, privacy-preserving, and allows for partial participation of the local machines. In particular, depending on whether the local machines all participate in the communication rounds, we study two protocols, namely, the full participation protocol and the partial participation protocol. The privacy bounds and convergence rates will also be established.


Before delving into the details, let us provide an overview of the basic idea behind \textsf{FedPower}, as depicted in Figure \ref{fedpowerpic}. The setup is consistent with that described in Section \ref{subsec: distributed}. To improve the communication efficiency of the naive distributed power method, \textsf{FedPower} trades more local computations for fewer communications. Specifically, each worker locally runs $$Y_t^{(i)}=A_iZ_{t-1}^{(i)}$$ multiple times between two communication rounds. Let $T$ be the total number of iterations performed by each worker. Let $\mathcal I_T$, a subset of $[T]$, index the iterations that call for communications. $|\mathcal I_T |$ denotes its cardinality. If $\mathcal I_T = [T]$, synchronization happens at every iteration as in the distributed power method (see Figure \ref{dispowerpic}). If $\mathcal I_T = \{ T \}$, synchronization happens only at the end, and \textsf{FedPower} is similar to the one-shot divide-and-conquer eigenspace \citep{fan2019distributed}.  An important example that we will focus on latter is $\mathcal I_T^p$. It is defined by
\begin{equation}
\phantomsection
\label{3.1}
\mathcal I_T^p = \{ t \in [T]: t  \ \text{mod}  \ p = 0  \}
= \{ 0, p, 2p, \cdots, p \lfloor T/p \rfloor \},
\tag{3.1}
\end{equation}
where $p$ is a positive integer and $\lfloor T/p \rfloor$ is the largest integer which is smaller than $T/p$. \textsf{FedPower} with $\mathcal I_T^p$ only performs communications every $p$ iterations.

In order to address the orthogonal ambiguity of subspaces, the central server performs an average of the orthogonally transformed local eigenvectors during communication, denoted as $Y_{t}^{(i)}D_{t}^{(i)}$, instead of using the raw $Y_{t}^{(i)}$ values.
The orthogonal matrices $D_{t}^{(i)}$ are constructed using the following procedure.  First, we choose a baseline local machine that to be aligned with. Without loss of generality, we assume the first machine is chosen in the following paper. Next, we compute

\begin{equation}
\label{3.2}
D_{t}^{(i)}=\underset{D\in \mathcal F \cap\mathcal O_r }{{\rm argmin}}\;\|Z_{t-1}^{(i)}D-Z_{t-1}^{(1)}\|_F,
\tag{3.2}
\end{equation}
where $\mathcal O_{r}(r\geq k)$ denotes the set of $r\times r$ orthogonal matrices.
The choice of $\mathcal F$ can vary depending on the specific scenario.
When $\mathcal F=\{\mathbb I_r\}$, the feasible set contains a single element, and as a result, $D_{t}^{(i)}=\mathbb I_r$.
When $\mathcal F=\mathcal O_r$, (\ref{3.2}) represents the classical matrix approximation problem known as the \emph{Procrustes problem} \citep{schonemann1966generalized,cape2020orthogonal}. The solution to (\ref{3.2}), referred to as the \emph{Orthogonal Procrustes Transformation} (OPT), can be obtained in closed form:
\begin{equation*}
    D_{t}^{(i)} \; = \; W_1W_2^\intercal,
\end{equation*}
where we assume that the SVD of $(Z^{(i)}_{t-1} )^\intercal Z^{(1)}_{t-1} $ is $W_1 \Lambda W_2^\intercal$.
The concurrent work \citep{charisopoulos2021communication} also uses OPT.


Recall that we consider a strong adversary termed as honest-but-curious onlooker (see Section \ref{adv}) who does not violate the rules to peep the raw data but is curious to infer data information from the communicative messages and the training rule. To prevent the potential privacy breaches, we consider the \emph{record-level} DP that protects each row of one device's data $M_i$'s which has Euclidean norm 1; recall also (\ref{2.3}). We note that the \emph{client-level} DP that protects against the whole data of each device has also been studied in FL \citep{mcmahan2018learning}. However, this kind of privacy guarantee is too strong if the devices represent different organizations, say hospitals, companies, banks, etc \citep{li2019differentially,zheng2021federated}. In the record-level DP's, we add calibrated Gaussian noise after each local steps on each active devices. After tight composition analysis of DP, the procedure can be shown to be DP at given privacy budget.

Finally, to take into account the straggler's effect, we also consider the partial participation protocol, that is, each aggregation only involves the first $K$ responded (not necessarily different) machines before a certain time. Specifically, we model such a setting using sampling with replacement strategy.

\begin{figure*}[htb]{}
\centering
\subfigure{\includegraphics[height=6.8cm,width=13cm,angle=0]{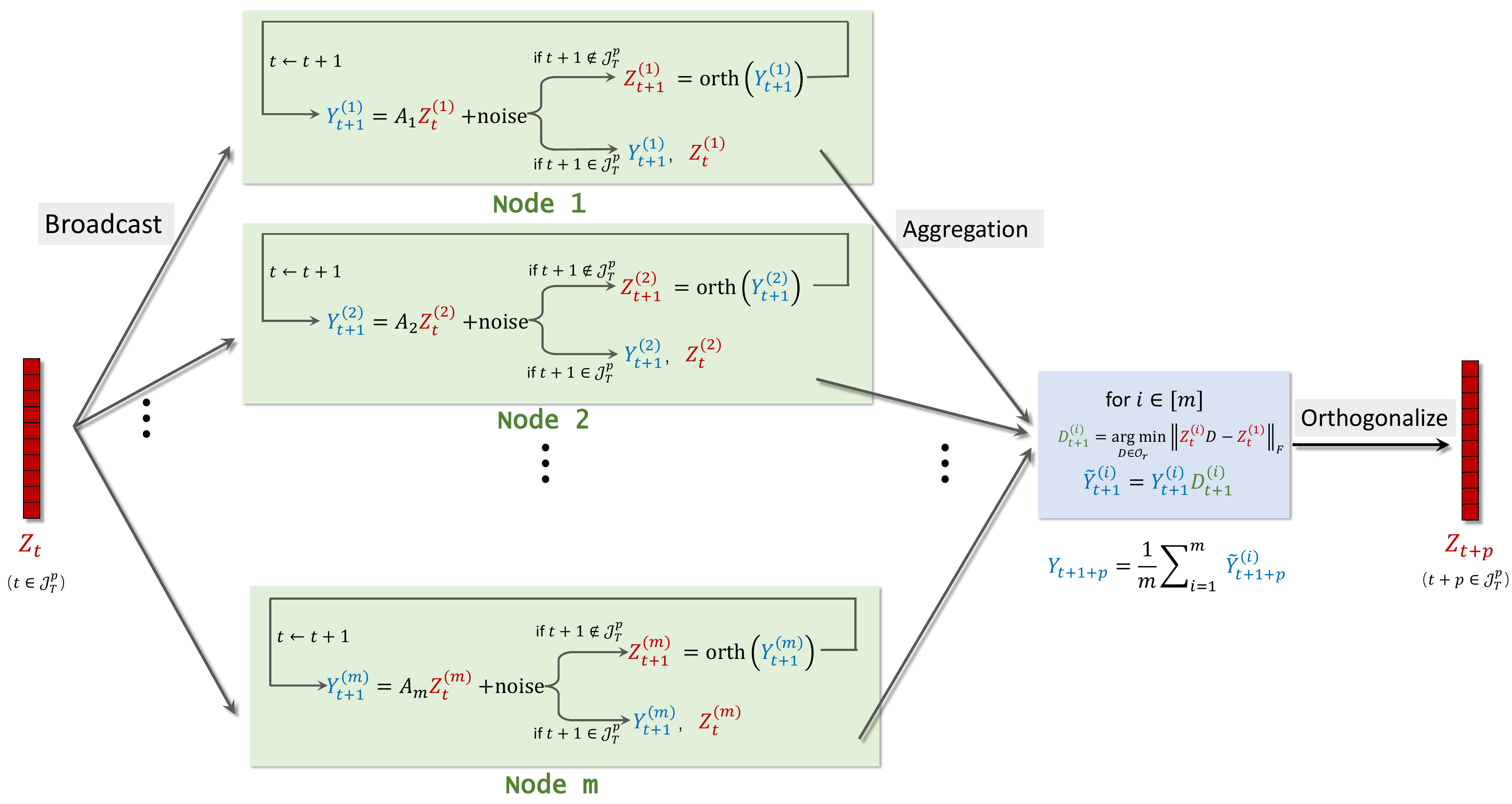}}
\caption{The full participation protocol of \textsf{FedPower} involves each local machine conducting noisy power iterations locally until a communication step is triggered. When communication occurs ($t+1 \in \mathcal I_{T}^p$), the local machines transmit the updates $Y_{t+1}^{(i)}$ and previous $Z_{t}^{(i)}$ to the server. Subsequently, the server performs alignment and aggregation procedures. }\label{fedpowerpic}
\end{figure*}

\subsection{Federated Power Method under Full Participation Protocol}

 The \textsf{FedPower} under the full participation protocol is shown in Algorithm \ref{fedpowerfull}, where we allow the iteration rank $r$ (i.e., the dimension of the output) being equal or larger than the target rank $k$, and we use $\mathcal N(0, \nu^2)^{d\times r}$ to denote a $d\times r$ random matrix with each entry being i.i.d. $\mathcal N(0,\nu^2)$. We have the following remark on Algorithm \ref{fedpowerfull}.

\begin{rem}
If the data of the first local machine is publicly available, the OPT can be performed at each local machine. However, if all local machines' data is private, additional communication of $Z_{t}^{(i)}$ is required to compute $D_{t+1}^{(i)}$. Nonetheless, the communication cost associated with transmitting $Z_{t}^{(i)}$ is order-wise comparable to that of sending $Y_{t+1}^{(i)}$ during the aggregation step. Furthermore, the computation of $D_{t+1}^{(i)}$ and the communication of $Y_{t+1}^{(i)}$ only occur when $t+1 \in \mathcal I_T^p$. These factors make the additional communication cost affordable. Additionally, the computation of $D_{t}^{(i)}$ relies on the previous $Z_{t-1}^{(i)}$ and $Z_{t-1}^{(1)}$, which already satisfy DP. Hence, the adaptive composition theorem of DP guarantees that OPT does not introduce additional privacy leakage.
\end{rem}



Algorithm \ref{fedpowerfull} satisfies the strong notion of DP, shown as follows.

\begin{algorithm}[!htbp]
\small

\renewcommand{\algorithmicrequire}{\textbf{Input:}}

\renewcommand\algorithmicensure {\textbf{Output:} }

\caption{FedPower: Full Participation}

\label{fedpowerfull}

\begin{algorithmic}[1]

\STATE \textbf{Input:} distributed dataset $\{A_i\}_{i=1}^m$, iteration rank $r\geq k$, number of iterations $T$, synchronous set $\mathcal I_T^p$, the privacy parameters $(\varepsilon,\delta)$, the variance of Gaussian noise $\nu$.\\
\STATE \textbf{Initialization:} $Z_0^{(i)}=Z_0={\rm orth}(\mathcal N(0,1)^{d\times r})$.\\
\FOR{$t = 0$ to $T-1$}
\STATE  \textbf{Broadcast:} If $t\in \mathcal I_T^p$, the server sends $Z_t$ to every workers and for all $i\in [m]$, let $Z_t^{(i)}=Z_t$;
\STATE \textbf{Local training:}
                                 \STATE \quad\label{local} For all $i\in [m]$, the $i$-th worker independently performs $Y_{t+1}^{(i)}\leftarrow A_iZ_{t}^{(i)}+_{i.i.d.}\mathcal N(0, \nu^2)^{d\times r}$;
                                 \STATE \quad If $t+1\notin \mathcal I_T^p$,  $Z_{t+1}^{(i)}\leftarrow \textsf{orth}(Y_{t+1}^{(i)})$; return to line \ref{local}; $t\leftarrow t+1$;
                                 \STATE \label{computeD}\quad If $t+1\in \mathcal I_T^p$,  move to line \ref{agg1};
\STATE \label{agg1} \textbf{Aggregation:}
                                    \STATE \quad Each worker $i$ sends $Y_{t+1}^{(i)}$ and $Z_{t}^{(i)}$ to the server;
                                     \STATE \quad The server updates $Y_{t+1}^{(i)}\leftarrow Y_{t+1}^{(i)}D_{t+1}^{(i)}$ with $D_{t+1}^{(i)}=\underset{D\in \mathcal F \cap\mathcal O_r }{{\rm argmin}}\ \|Z_{t}^{(i)}D-Z_{t}^{(1)}\|_F$;
                                    \STATE \quad The server performs aggregation: $Y_{t+1}\leftarrow\sum_{i=1}^m {Y_{t+1}^{(i)}}/{m}$ and $Z_{t+1}\leftarrow\textsf{orth}(Y_{t+1})$.\\
\ENDFOR
\STATE \textbf{Output:} orthogonalize the approximated eigen-space:\\
\begin{equation}
\overline{Z}_{T}: = \left\{\begin{array}{cc}
 \sum_{i=1}^m  Z^{(i)}_TD^{(i)}_{T+1}/m, & T\notin \mathcal I_T^p \\\nonumber
 \sum_{i=1}^m Z^{(i)}_T/m, &T\in \mathcal I_T^p
\end{array}\right.
\end{equation}

\end{algorithmic}
\end{algorithm}

\begin{thm}
\label{dpthm}
If we set the noise variance $\nu^2$ sufficiently large such that
$$\nu\geq \frac{2\sqrt{r}m}{n} \max \left\{\sqrt{\frac{T}{\varepsilon}},\, \frac{2\sqrt{2T \log(1/\delta) }}{\varepsilon}\right\},$$
then Algorithm \ref{fedpowerfull} achieves $(\varepsilon, \delta)$-differential privacy towards a honest-but-curious onlooker(or the server).
\end{thm}

We now proceed to provide the utility guarantee for Algorithm \ref{fedpowerfull}.
To that end, we introduce several additional definitions.

\begin{defn}[Data similarity]
\label{assu1}
\[\eta :=\underset{i \in [m]}{\max}\frac{\|A_i-A\|_2}{\|A\|_2},\]
\end{defn}
 This quantity $\eta$ serves as a measure of the distance between the local matrices $A_1, \cdots, A_m$, and the reference matrix $A$. It quantifies the extent to which the local matrices deviate from $A$ in terms of their similarity.
\begin{defn} [Residual error]  \label{def:rho}
Define
\begin{equation}
\phantomsection
\label{3.3}
    \rho_t :=
    {\rm max}_{i \in [m]} \|Z_t^{(i)}D_{t+1}^{(i)} - Z_t^{(1)} \|_2,
\tag{3.3}
\end{equation}
In this equation, if OPT is used, then $D_{t+1}^{(i)}$ is computed via (\ref{3.2}) with $t=t+1$ and $\mathcal F=\mathcal O_r$. On the other hand, if OPT is not used, then $D_{t+1}^{(i)}$ is computed via (\ref{3.2}) with $t=t+1$ and $\mathcal F={\mathbb I_r}$.
\end{defn}

The purpose of $\rho_t$ is to measure the maximum distance between the aligned eigenspace $Z_t^{(i)}D_{t+1}^{(i)}$ and the reference eigenspace $Z_t^{(1)}$ across all local machines.
Its value indicates the extent of misalignment among the local estimators during the aggregation step.
When OPT is used, the value of $\rho_t$ tends to be smaller compared to when OPT is not used. This is because OPT aims to align the local estimators more effectively, reducing the discrepancy between $Z_t^{(i)}D_{t+1}^{(i)}$ and $Z_t^{(1)}$.
When $t \in \mathcal I_T^p$, all local machines reach synchronization, and the local estimators become identical, resulting in $\rho_t = 0$. However, when $t \notin \mathcal I_T^p$, each local update can contribute to enlarging $\rho_t$, leading to increased misalignment among the estimators.
Intuitively, the value of $\rho_t$ is expected to depend on the number of local iterations, denoted by $p$, between two communication rounds. However, our later analysis will show that when OPT is employed, $\rho_t$ does not depend on $p$, while without OPT, it does depend on $p$.
The presence of a residual error, represented by $\rho_t$, is a common phenomenon in previous literature on empirical risk minimization, where local updates are used to improve communication efficiency. Several studies have explored this in the context of distributed optimization \citep{stich2018local, wang2018cooperative, yu2019parallel, li2020convergence, li2019communication, li2021delayed}.

\begin{thm}
\label{utilitythm}
Let $\epsilon':= I_1 + I_2$ with
\begin{equation}
\phantomsection
\label{3.4}
I_1:=\frac{(\sigma_{k}-\sigma_{k+1})^{-1}}{1-(1-1/m){\rm max}_t\rho_{t}}\cdot{\nu }\max\left\{\sqrt{{d \log(dT)}/{m}},\; {\log(dT)}/{m}\right\},
\tag{3.4}
\end{equation}
and
\begin{equation}
\phantomsection
\label{3.5}
I_2:=\frac{(\sigma_{k}-\sigma_{k+1})^{-1}}{1-(1-1/m){\rm max}_t\rho_{t}}\cdot{2}\sigma_1\left(\eta+(2+\eta){\rm max}_t\rho_{t}\right),
\tag{3.5}
\end{equation}
where $\rho_t$ in defined in (\ref{3.3})
and $\nu$ is defined in Theorem \ref{dpthm}.
If ${\epsilon'\lesssim {\rm min}\{\frac{1}{2},\frac{\sqrt{r}-\sqrt{k-1}}{\sqrt{d}}\}}$, then after $T=O(\frac{\sigma_k}{\sigma_k-\sigma_{k+1}}{\rm log}(\frac{d}{\epsilon'}))$ iterations,
the output $\overline{Z}_T$ of Algorithm \ref{fedpowerfull} satisfies
$${\rm sin}\theta_k(\overline{Z}_T,U_k)=\|(\mathbb I_d-\overline{Z}_T\overline{Z}_T^\intercal)U_k\|_2\lesssim \epsilon',$$
with probability at least  $1-(dT)^{-\alpha}-\tau^{-\Omega (r+1-k)}-e^{-\Omega(d)}$, for some positive constants $\alpha$ and $\tau$.
\end{thm}




Theorem \ref{utilitythm} establishes the convergence of Algorithm \ref{fedpowerfull}.
The convergence bound $\epsilon'$ can be divided into two components. $I_1$ is induced by the Gaussian noise required by DP. In particular, we use the tool in \citet{lei2022bias} to bound the sum of the Gaussian random matrices multiplied by orthonormal matrices. $I_1$ decreases as the number of local machines $m$ increases. Apparently, $I_1$ depends on $\sqrt{d}$. However, we note that the denominator $\sigma_{k}-\sigma_{k+1}$ may also depend on $d$. On the other hand, $I_2$ is associated with local computations, which are inevitable in previous research on empirical risk minimization that incorporates local updates to enhance communication efficiency \citep{stich2018local, wang2018cooperative, yu2019parallel, li2020convergence, li2019communication}. In Theorem \ref{thm:rho}, we demonstrate that $\rho_t$ is a function of $\eta$ and becomes sufficiently small when $\eta$ is adequately small. Consequently, this leads to a reduction in the value of $I_2$. Notably, Algorithm \ref{fedpowerfull} simplifies to \textsf{LocalPower}, as introduced in \citet{li2021communication}, when the Gaussian noise is not incorporated, and only $I_2$ remains in the convergence bound.

\begin{thm}
 \label{thm:rho}
Let $\tau(t) \in \mathcal I_T^p$ be the nearest communication time before $t$ and $p=t - \tau(t)$.
Let $\mathrm{e}$ be the natural constant and $\kappa = \|M\|_2\|M^\dagger\|_2$ be the condition number of $M$. Suppose $\eta\leq 1/p$ and $\eta\kappa\leq 1/3$. Then $\rho_t$ is a monotone increasing function of $\eta$. Moreover, when $\epsilon'$ in Theorem \ref{utilitythm} is small enough, we have the following upper bound for $\rho_t$.
\begin{itemize}
	\item With OPT, $\rho_t$ is bounded by
\begin{equation}
\phantomsection
\label{3.6}
{\rm min} \left\{2\mathrm{e}^2\kappa^pp \eta, \frac{\eta\sigma_1}{\delta_k} + 2 \gamma_k^{p/4}  C_t \right\} = O(\eta),
\tag{3.6}
\end{equation}
where $\gamma_k \in (0, 1)$, ${\delta_k \asymp (\sigma_{k}-\sigma_{k+1})}$, and $\limsup_{t} C_t = O(\eta)$.
	\item Without OPT, $\rho_t$ is bounded by
\begin{equation}
\phantomsection
\label{3.7}
4\mathrm{e}\sqrt{k}p \kappa^p \eta=O(\sqrt{k}p \kappa^p \eta).
\tag{3.7}
\end{equation}
\end{itemize}
\end{thm}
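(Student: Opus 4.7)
My plan is to exploit the fact that at the most recent synchronization time $\tau := \tau(t)$ all workers share a common iterate $Z_\tau := Z^{(1)}_\tau = \cdots = Z^{(m)}_\tau$, so over the ensuing $p = t - \tau$ local steps worker $i$ produces $Z^{(i)}_t$ whose column span coincides with that of $M_i^p Z_\tau$ (the intermediate QR factorizations within a worker only alter an upper-triangular right factor, and $\mathrm{orth}(\cdot)$ is invariant under right-multiplication by invertible matrices). Thus $\rho_t$ reduces to comparing $M_i^p Z_\tau$ with $M_1^p Z_\tau$. The common perturbation input is the telescoping identity
\[
M_i^p - M^p \;=\; \sum_{j=0}^{p-1} M_i^{p-1-j}(M_i - M)\, M^j,
\]
which together with Assumption \ref{assu1} and $\eta \leq 1/p$ yields $\|M_i^p - M^p\|_2 \leq \mathrm{e}\, p\eta\, \sigma_1^p$ and hence $\|M_i^p - M_1^p\|_2 \leq 2\mathrm{e}\, p\eta\, \sigma_1^p$. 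Monotonicity of $\rho_t$ in $\eta$ is then inherited from this chain of estimates.

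For the \emph{without-OPT} case, $D^{(i)}_{t+1} = \mathbb{I}_r$, so I would write $Z^{(i)}_t = M_i^p Z_\tau\, R_i^{-1}$ with $R_i$ the cumulative upper-triangular factor and apply a standard QR-perturbation estimate of the form $\|Z^{(i)}_t - Z^{(1)}_t\|_2 \lesssim \|M_i^p - M_1^p\|_2 / \sigma_{\min}(M_1^p Z_\tau)$. The denominator is at least $\sigma_d^p = \sigma_1^p/\kappa^p$ (up to a factor controlled by $\|V^\intercal Z_\tau\|_{\mathrm{m}}$), which injects the $\kappa^p$ factor. A conversion $\|\cdot\|_2 \leq \sqrt{k}\|\cdot\|_{\mathrm{max}}$ applied to the alignment block between the two un-aligned orthonormal bases accounts for the dimensional $\sqrt{k}$, yielding (\ref{3.7}).

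For the \emph{with-OPT} case I would use the well-known bound $\min_{D\in\mathcal O_r}\|Z^{(i)}_t D - Z^{(1)}_t\|_2 \asymp \mathrm{dist}(Z^{(i)}_t, Z^{(1)}_t)$, so the analysis passes to the projection distance and the dimensional factor disappears; repeating the QR-perturbation argument at the level of column spans then produces the crude OPT bound $2\mathrm{e}^2 \kappa^p p \eta$. The refined bound requires a different strategy: I would invoke Davis-Kahan on $M_i$ versus $M$ to get $\mathrm{dist}(V^{(i)}_k, V_k) \leq \eta \sigma_1 / \delta_k$ for the local top-$k$ eigenspace $V^{(i)}_k$, and combine this with the standard geometric contraction of the power method --- the projection distance from $Z^{(i)}_t$ to $V^{(i)}_k$ decays at rate $\gamma_k^{p/4}$, where $\gamma_k \in (0,1)$ is determined by the local eigen-gap. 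The triangle inequality along the chain $Z^{(i)}_t \to V^{(i)}_k \to V_k \to V^{(1)}_k \to Z^{(1)}_t$ then gives $\rho_t \leq \eta \sigma_1 / \delta_k + 2\gamma_k^{p/4} C_t$, and an induction across communication rounds keeps the iterates within an $O(\eta)$-neighborhood of $V_k$, producing $\limsup_t C_t = O(\eta)$. Taking the minimum with $2\mathrm{e}^2 \kappa^p p \eta$ completes (\ref{3.6}).

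The main obstacle will be making the refined OPT bound quantitative and uniform in $t$: I must verify that the geometric contraction acts on the OPT-aligned iterates (not merely on subspaces) and that $C_t$ genuinely decays --- rather than accumulates --- across communication rounds, which forces a coupled induction pairing the within-burst contraction with the cross-burst residual recursion defined by (\ref{3.3}). A subsidiary technicality, that intermediate QR normalizations within a single local burst of $p$ steps do not inflate constants, follows from the invariance of $\mathrm{orth}(\cdot)$ under right-multiplication by invertible matrices, so the $p$-step recursion collapses effectively to a single QR of $M_i^p Z_\tau$.
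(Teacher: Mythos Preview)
Your proposal is correct and follows essentially the same route as the paper: the OPT case goes through Lemma~\ref{lem:orth} to reduce to projection distance, then combines a perturbation-theoretic bound (Lemma~\ref{lem:PT-PJ}) for the crude $\kappa^p p\eta$ term with Davis--Kahan plus noiseless power-method contraction (Lemma~\ref{lemhardt}) for the refined term, while the non-OPT case uses a QR-perturbation estimate (the paper's Lemma~\ref{lem:diff_r}, based on Sun's theorem). One small correction: the $\sqrt{k}$ factor in (\ref{3.7}) does not come from a $\|\cdot\|_2 \leq \sqrt{k}\|\cdot\|_{\max}$ inequality (which is false in general) but from the Frobenius-to-spectral conversion $\|E\|_F \leq \sqrt{k}\|E\|_2$ inside the QR-perturbation bound.
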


Theorem \ref{thm:rho} reveals that when OPT is employed (i.e., $\mathcal F=\mathcal O_r$), $\rho_t = O(\eta)$ without any dependence on $p$. However, in the absence of OPT (i.e., $\mathcal F={\mathbb I_r}$), $\rho_{t} = O(\sqrt{k} p \kappa^p \eta)$ exhibits an exponential dependence on $p$. Theorem \ref{thm:rho} indicates why using OPT has such an exponential improvement on the dependence on $p$. This is mainly because of the property of OPT. Let $O^* = {\rm arg\,min}_{O \in \mathcal O_{r}} \|U - \tilde{U} O\|_F$ for $U, \tilde{U} \in \mathcal O_{d \times r}$.
Then, up to some universal constant, we have
$\|U - \tilde{U} O^*\|_2 \approxeq {\rm dist}(U, \tilde{U}).$
See Lemma~\ref{lem:orth} in  Appendix for a formal statement and detailed proof.
It implies up to a tractable orthonormal transformation, the difference between the orthonormal bases of two subspaces is no larger than the projection distance between the subspaces.
By the Davis-Kahan theorem (see Lemma~\ref{lem:DK}), their projection distance is not larger than $O(\eta)$ up to some problem-dependent constants.
However, without OPT, we have to use perturbation theory to bound $\rho_t$, which inevitably results in exponential dependence on $p$ (see Lemma \ref{lem:rho2}).


\subsection{Federated Power Method under Partial Participation Protocol}
\label{subsec:partial}

To address the limitation of the full participation protocol in accounting for stragglers, we introduce a partial participation protocol. In this protocol, during each communication round, the server collects the outputs from the first $K$ (where $K \leq m$) local machines that respond. Once the server has collected $K$ outputs, it stops waiting for the remaining machines. The local machines that do not respond within the given round are referred to as "stragglers" for that particular iteration. We denote $\mathcal{S}_t$ (where $|\mathcal{S}_t|=K$) as the set of indices corresponding to the local machines participating in the $t$-th iteration (when $t \in \mathcal{I}_T^p$). To model this behavior, we employ the following sampling and aggregation schemes, which have also been utilized in prior works such as \citet{li2020federated}.

\begin{sche}
\label{sch1}
The server generates $\mathcal S_t$ by {i.i.d.} sampling with replacement from $\{1,\dots,m\}$ for $K$ times. Specifically, index $i$ is selected with probability $\frac{1}{m}$, and the elements in $\mathcal S_t$ may occur more than once.  The server then aggregates according to $$Y_t= \frac{1}{K}\underset{i\in \mathcal S_t}\sum Y_t^{(i)}.$$
\end{sche}

Such an aggregation policy ensures that the partial participation protocol agrees with the full participation protocol in expectation. Indeed, considering only the randomness that comes from $\mathcal S_t$, we observe that
$$\mathbb E_{\mathcal S_t}(Y_t)=\frac{1}{K}\mathbb E_{\mathcal S_t}(\sum_{k=1}^K Y_t^{(i_k)})=\mathbb E_{\mathcal S_t}Y_t^{(i_1)}=\frac{1}{m}\sum_{i=1}^m Y_t^{(i)}.$$
It is worth noting that the proof does not strictly require the expectation of the partial aggregation to be identical to that of the full aggregation.

The proposed \textsf{FedPower} under the partial participation protocol is summarized in Algorithm \ref{fedpowerpartial}.
In this algorithm, we use $\tau(t)$ to denote the latest synchronization step that occurs before iteration $t$.
\begin{rem}
For simplicity, we assume that the first local machine is always active and keeps updating the parameters no matter whether it is selected or not. Alternatively, one can choose any one of the active machines in each iteration as the reference machine.
\end{rem}


Similar to the proof of Theorem \ref{dpthm}, we can easily demonstrate that Algorithm \ref{fedpowerpartial} achieves $(\varepsilon, \delta)$-DP against an honest-but-curious onlooker for the same requirement of $\nu$.
The following theorem provides the convergence bound for Algorithm \ref{fedpowerpartial}.

\begin{algorithm}[t!]
\small

\renewcommand{\algorithmicrequire}{\textbf{Input:}}

\renewcommand\algorithmicensure {\textbf{Output:} }

\caption{FedPower: Partial Participation}

\label{fedpowerpartial}

\begin{algorithmic}[1]

\STATE \textbf{Input:} distributed dataset $\{A_i\}_{i=1}^m$, iteration rank $r\geq k$, number of iterations $T$, synchronous set $\mathcal I_T^p$, the number of participated machines $K$, the privacy parameters $(\varepsilon,\delta)$, the variance of Gaussian noise $\nu$.\\
\STATE \textbf{Initialization:} $Z_0^{(i)}=Z_0={\rm orth}(\mathcal N(0,1)^{d\times r})$.\\
\FOR{$t = 0$ to $T-1$}
\STATE  \textbf{Broadcast:} If $t\in \mathcal I_T^p$, the server sends $Z_t$ to every workers and for all $i\in [m]$, let $Z_t^{(i)}=Z_t$;
\STATE \textbf{Local training:}
                                 \STATE \quad\label{local} For all $i\in [m]$, the $i$-th worker independently performs $Y_{t+1}^{(i)}\leftarrow A_iZ_{t}^{(i)}+_{i.i.d.}\mathcal N(0, \nu^2)^{d\times r}$;
                                 \STATE \quad If $t+1\notin \mathcal I_T^p$,  $Z_{t+1}^{(i)}\leftarrow \textsf{orth}(Y_{t+1}^{(i)})$; return to line \ref{local}; $t\leftarrow t+1$;
                                 \STATE\quad If $t+1\in \mathcal I_T^p$; move to line \ref{agg};
\STATE \label{agg}  \textbf{Sampling and Aggregation:}
\STATE \quad The server generates $\mathcal S_{t+1}$ with cardinality $K$ according to the sampling with replacement strategy;
\STATE \quad For $i\in \mathcal S_{t+1} \cup \{1\}$, each worker $i$ sends $Y_{t+1}^{(i)}$ and $Z_{t}^{(i)}$ to the server;
\STATE \quad For $i\in \mathcal S_{t+1} \cup \{1\}$, the server updates $Y_{t+1}^{(i)}\leftarrow Y_{t+1}^{(i)}D_{t+1}^{(i)}$ with $D_{t+1}^{(i)}=\underset{D\in \mathcal F \cap\mathcal O_r }{{\rm argmin}}\ \|Z_{t}^{(i)}D-Z_{t}^{(1)}\|_F$;
\STATE \quad The server performs partial aggregation: $Y_{t+1}\leftarrow\sum_{i\in \mathcal S_{t+1}} {Y_{t+1}^{(i)}}/{K}$ and $Z_{t+1}\leftarrow\textsf{orth}(Y_{t+1})$;\\
\ENDFOR
\STATE \textbf{Output:} orthogonalize the approximated eigenspace:\\
\begin{equation}
\overline{Z}_{T}: = \left\{\begin{array}{cc}
 \sum_{i\in \mathcal S_{\tau(T)}} Z^{(i)}_TD^{(i)}_{T+1}/K, & T\notin \mathcal I_T^p \\\nonumber
 \sum_{i\in \mathcal S_{\tau(T)}} Z^{(i)}_T/K, &T\in \mathcal I_T^p
\end{array}\right.
\end{equation}
\end{algorithmic}
\end{algorithm}


\begin{thm}
\label{utilitythmpartial}
Let $\epsilon'':= I_1 + I_2 + I_3$ with
\begin{equation}
\phantomsection
\label{3.8}
I_1:=\frac{(\sigma_{k}-\sigma_{k+1})^{-1}}{1-{\rm max}_t\rho_{t}}\cdot {\nu }\max\left\{\sqrt{\frac{d \log(dT)}{K}},\; \frac{\log(dT)}{K}\right\},
\tag{3.8}
\end{equation}
\begin{equation}
\phantomsection
\label{3.9}
I_2:=\frac{(\sigma_{k}-\sigma_{k+1})^{-1}}{1-{\rm max}_t\rho_{t}}\cdot{2}\sigma_1\left(\eta+(2+\eta){\rm max}_t\rho_{t}\right),
\tag{3.9}
\end{equation}
and
\begin{equation}
\phantomsection
\label{3.10}
I_3:=\frac{(\sigma_{k}-\sigma_{k+1})^{-1}}{1-{\rm max}_t\rho_{t}}\cdot \sigma_1\max\left\{\sqrt{\frac{\log(dT)}{K}},\; \frac{\log(dT)}{K}\right\},
\tag{3.10}
\end{equation}
where $\rho_t$ in defined in (\ref{3.3})
and $\nu$ is defined in Theorem \ref{dpthm}.
If ${\epsilon''\lesssim {\rm min}\{\frac{1}{2},\frac{\sqrt{r}-\sqrt{k-1}}{\sqrt{d}}\}}$, then after $T=O(\frac{\sigma_k}{\sigma_k-\sigma_{k+1}}{\rm log}(\frac{d}{\epsilon''}))$ iterations,
the output $\overline{Z}_T$ of Algorithm \ref{fedpowerpartial} satisfies
$${\rm sin}\theta_k(\overline{Z}_T,U_k)=\|(\mathbb I_d-\overline{Z}_T\overline{Z}_T^\intercal)U_k\|_2\lesssim \epsilon',$$
with probability at least $1-(dT)^{-\beta}-\tau^{-\Omega (r+1-k)}-e^{-\Omega(d)}$ for some positive constants $\beta$ and $\tau$.
\end{thm}

Unlike Theorem \ref{utilitythm}, the convergence bound of Algorithm \ref{fedpowerpartial} in Theorem~\ref{utilitythmpartial} is divided into three components. As before, $I_1$ arises from the Gaussian noise. Notably, in contrast to the full participation protocol, we observe that $I_1$ depends on the number of active machines $K$ rather than the total number of local machines $m$. $I_2$ is incurred by the local iterates, which has the same effects as in the full participation protocol. $I_3$ can be regarded as the bias that the sampling brings, where a larger $K$ yields a smaller $I_3$. In addition, $\rho_t$ can be upper bounded differently depending on whether OPT is used (see Theorem \ref{thm:rho}).


\subsection{Discussion}
\label{subsec:dis}

\paragraph{Bound for $\eta$.}

The convergence of \textsf{FedPower} depends on the smallness of $\eta$, which indicates that each local data set $A_i$ is a typical representative of the whole data matrix $A$. Previous work \citep{gittens2016revisiting,woodruff2014sketching,wang2016spsd} has demonstrated that uniform sampling and partition sizes described in Lemma~\ref{lem:uniform} are sufficient for $A_i$ to provide a good approximation of $A$. It would be interesting to explore whether the requirement of small $\eta$ can be eliminated, similar to the work of \citet{karimireddy2020scaffold} in the context of regression, by using gradient-based methods \citep{ammad2019federated,chai2020secure,chen2021distributed} for eigen-decomposition.

\paragraph{Effect of $p$.}

Theorems \ref{utilitythm} and \ref{utilitythmpartial} demonstrate that for a given error tolerance $\epsilon$, the number of required communications is approximately $\lfloor T/p\rfloor$, where $T=O(\frac{\sigma_k}{\sigma_{k+1}}{\rm log}(\frac{d}{\epsilon}))$. Thus, increasing the number of local iterations (larger $p$) enhances communication efficiency. However, this advantage of larger $p$ comes at a cost. As shown in Theorem \ref{thm:rho}, the residual error $\rho_t$ (with OPT) is bounded by ${\rm min} \left\{a_1, a_2\right\}$, where $a_1=2\mathrm{e}^2\kappa^pp \eta$ and $a_2=\frac{\eta\sigma_1}{\delta_k} + 2 \gamma_k^{p/4} C_t$. Here, $\gamma_k \in (0, 1)$ and $\limsup_{t} C_t = O(\eta)$. For moderate values of $p$, $a_2$ is smaller than $a_1$ and is decreasing in $p$, indicating that larger $p$ leads to a smaller final error. However, it is important to note that its limit (i.e., $\frac{\eta\sigma_1}{\delta_k}$) does not depend on $p$ when the final error $\epsilon',\epsilon''$ is sufficiently small.

\paragraph{Decay $p$ gradually.}
We have observed that using a large value of $p$ accelerates the initial convergence but leads to a larger final error. On the other hand, setting $p = 1$ achieves the lowest error, although it also has the slowest convergence rate. Similar observations have been made in the context of distributed empirical risk minimization \citep{wang2019scalable,li2019communication}. To strike a balance between fast initial convergence and achieving a vanishing final error, we propose gradually decaying the value of $p$. In particular, we set
\begin{equation}
\phantomsection
\label{3.11}
\mathcal I_T^{p, \text{decay}} = \bigg\{ t \in [T]:  t = \sum_{i=0}^l \max(p-i, 1), \ l \geq 0 \bigg\}.
\tag{3.11}
\end{equation}

\paragraph{Dependence on $\sigma_{k} {-}\sigma_{k+1}$.}

Our result depends on the difference $\sigma_k - \sigma_{k+1}$ even when $r > k$, where $r$ represents the number of columns used in the subspace iteration. By leveraging the technique introduced by \citet{balcan2016improved} instead of \citet{hardt2014noisy}, we might be able to further improve the result to achieve a slightly milder dependency on $\sigma_k - \sigma_{q+1}$, where $q$ is an intermediate integer between $k$ and $r$. It is worth noting that certain work (such as \citet{musco2015randomized}) have obtained gap-free results, but their goal is to find a good subspace that captures nearly as much variance as the top eigenvectors of $A$. In contrast, our focus is on quantifying the deviation of estimated eigenvectors from the true eigenvectors of $A$.

\section{Related Works and Contributions}
\label{sec:related}

Partial eigen-decomposition or PCA is one of the most important and popular techniques in modern statistics and machine learning.
A multitude of researches focus on iterative algorithms such as power iterations or its variants \citep{Golub2012matrix,saad2011numerical}.
These deterministic algorithms inevitably depend on the spectral gap, which can be quite large in large-scale problems.
Another branch of algorithm seek alternatives in stochastic and incremental algorithms \citep{oja1985stochastic,arora2013stochastic,shamir2015stochastic,shamir2016convergence,de2018accelerated}.
Some work could achieve gap-free convergence rate and low-iteration-complexity \citep{musco2015randomized,shamir2016convergence,allen2016lazysvd}. Other work seeks to accelerate the eigen-decomposition via randomization \citep{halko2011finding,witten2015randomized,guo2020randomized,zhang2022randomized}.

Large-scale problems and large decentralized datasets necessitate cooperation among multiple worker nodes to overcome the obstacles of data storage and heavy computation. A review of distributed algorithms for PCA can be found in \citet{wu2018review}. One line of work employs divide-and-conquer algorithms that require only one round of communication \citep{garber2017communication,fan2019distributed,bhaskara2019distributed,charisopoulos2021communication}. However, such algorithms often require large local datasets to achieve a certain level of accuracy. Another line of work uses iterative algorithms for distributed eigenspace estimation, which involve multiple communication rounds. These algorithms typically require a much smaller sample size and can achieve arbitrary accuracy. Some work utilizes the shift-and-invert framework for PCA, which transforms the problem of computing the leading eigenvector into the approximate solution of a small linear system of equations \citep{garber2015fast,garber2016faster,allen2016lazysvd,garber2017communication,gang2019fast,chen2021distributed}.

The technique of local updates has proven to be a simple yet powerful tool in distributed empirical risk minimization \citep{mcmahan2017communication,zhou2017convergence,stich2018local,wang2018cooperative,yu2019parallel,li2020convergence,li2019communication,khaled2019first}. However, it is important to note that our analysis of \textsf{FedPower} significantly differs from the local SGD algorithms commonly used in empirical risk minimization \citep{zhou2017convergence,stich2018local,wang2018cooperative,yu2019parallel,li2020convergence,li2019communication,khaled2019first}.
The main challenge in analyzing \textsf{FedPower} arises from the fact that local SGD algorithms for empirical risk minimization often involve explicit forms of (stochastic) gradients. However, in the case of SVD or PCA, the gradient cannot be explicitly expressed, making existing techniques inapplicable.

In terms of privacy preservation, several privacy-preserving algorithms for PCA or eigen-decomposition have been proposed in the single-machine setting \citep{chaudhuri2012near,hardt2013beyond,dwork2014analyze,hardt2014noisy,upadhyay2018price,amin2019differentially,singhal2021privately,dong2022differentially}. In the distributed setting, \citet{ge2018minimax} introduced a power-method-based privacy-preserving distributed sparse PCA algorithm. However, the communication cost of that algorithm is higher compared to \textsf{FedPower}. \citet{gra2020federated} proposed a federated, asynchronous, and input-perturbed differentially private algorithm for decentralized PCA, where the theoretical justification of the estimated eigen-space is lacked. \citet{chai2022practical} proposed a federated SVD method using a lossless matrix masking scheme, which is weaker than differential privacy. There are also works on related matrix factorization problems, such as \citet{ammad2019federated,chai2020secure}.


\paragraph{Our Contributions.}
The main contributions of this work can be summarized as follows.
\begin{itemize}
\item First, we develop a set of algorithms called \textsf{FedPower}, which is communication efficient, privacy-preserving, and adapts to  stragglers.
\item Second, we propose to decay the communication interval $p$, over time. In this way, the loss drops fast in the beginning and converges to the optimal solution in the end. We use OPT to post-processes the output matrices of the $m$ nodes after each iteration so that the $m$ nodes are close to each other.
\item Last but not least, we provide a rigorous analysis of the privacy bound by using the tools of RDP. We provide a framework to analyze the convergence bound of \textsf{FedPower}. We make a delicate analysis of the error influenced by local iterates, DP's perturbation, and random straggling of local machines.
\end{itemize}
Compared with the shorter version of this paper \citep{li2021communication}, the novelty of this work lie in the following aspects.
\begin{itemize}
\item Besides the communication efficiency considered in \citet{li2021communication}, we also consider the privacy breaches and stragglers' effect which are real issues faced by distributed eigenspace estimation.
\item We analyze the convergence of the virtual sequence in the form of $\overline{Z}_t$ instead of $\overline{Y}_t$  in \citet{li2021communication}, which improves the convergence bound by a factor of $\kappa$.
\item We utilize the advanced tools of random matrix theory to bound the sum of OPT-transformed Gaussian matrices that comes from DP perturbation and the sampling bias that comes from a random sampling of local machines.
\end{itemize}

\section{Experiments}
\label{sec:experiment}
In this section, we numerically evaluate the efficacy of the proposed algorithm \textsf{FedPower}. Specifically, we generate the data from two low-rank statistical models, namely, the spiked covariance model \citep{cai2015optimal} and the stochastic block model (SBM) \citep{lei2022bias}.


\paragraph{Model I (Spiked covariance model)}
We choose $n=2\times 10^6$, $d=100$, $m=20$, $k=4$, $r=5$ in the experiment. For each local machine, the data samples $M_i\in \mathbb R^{\frac{n}{m}\times d}$ are generated independently from a multivariate Gaussian distribution with population covariance $\Sigma$ and mean $0$ with
$$\Sigma = U_{d\times k}U^\intercal_{k\times d}+\sigma^2 \mathbb I_d,$$
where $\sigma=0.6$ and $U$ is obtained by orthogonalizing a random $d\times k$ matrix with i.i.d. entries from $N(0.5, 1)$. Each sample is then normalized. The whole data matrix $A$ is formulated using (\ref{2.2}) and (\ref{2.3}).

\paragraph{Model II (Stochastic block model)}
We choose $n=2\times 10^4$, $d=1000$, $m=20$, $k=r=2$. It is easy to check that $\frac{n}{m}=d$. For each $l\in \{1,...,m\}$, the $l$th local machine owns a $1000\times 1000$ network adjacency matrix $M_l$ from SBM. Assume the network nodes in $m$ local machines are aligned and they are consistently assigned to $k$ non-overlapping communities with the community assignment of nodes $l$ denoted by $g_l \in\{1,...,k\}$. Given $g_i$'s, each entry of $M_i$ is generated independently according to $$[M_l]_{ij}\sim {\rm Bernoulli}(B_{g_ig_j}),\quad i<j,$$
$[M_l]_{ji}=[M_l]_{ij}$ and $[M_l]_{ii}=0$. $B$ is the connectivity matrix. For the first half networks, $B^{(1)}=0.8W$ and for the second half networks, $B^{(2)}=0.6W$, where $W=[0.25,0.1;0.1,0.25]$. Each row of $M_i$'s is then normalized and the whole data matrix $A$ is formulated using (\ref{2.2}) and (\ref{2.3}). The rationality of squaring the adjacency matrix can be found in \citep{lei2022bias}.

\paragraph{Experimental setup}
Under Model I and Model II, we study the effect of local iterates $p$, the privacy budget $\varepsilon$, and the proportion of participated local machines $\frac{K}{m}$ (in the partial participation scheme). The number of iterations is fixed to be $T=10$. And we generally choose the number local iterates $p=2$, the total privacy budget $\varepsilon=0.5$, the other privacy parameter $\delta=10^{-4}$, and the number of participated machines $K=m$, which may vary when we study the effect of $p$, $\varepsilon$ and $\frac{K}{m}$. We use the decay $p$ strategy in all the experiments. We study how the projection distance between the eigenvector computed by \textsf{FedPower} and the eigenvector of $A$ varies with the number of communications.

\paragraph{Effect of the number of local iterations $p$:}
Figures \ref{spiked}(a) and \ref{sbm}(a) present the results for $p=1,2,4$ under Model I and Model II, respectively. It was observed that larger values of $p$ led to smaller projection distances compared to the baseline case of $p=1$. This demonstrates the communication efficiency of \textsf{FedPower}, as it achieved lower projection distances with the same number of communications. We here also highlight the requirements for obtaining these results, including the small noise from differential privacy, which is influenced by the number of iterations $T$ and the sensitivity, as well as a small number of local machines to reduce the aggregation bias.


\paragraph{Effect of the number of privacy budget $\varepsilon$.} Figure \ref{spiked}(b) and \ref{sbm}(b) show the results with $\varepsilon=10,1,0.5$ under Model I and Model II, respectively. As expected, increasing the privacy budget $\varepsilon$ leads to improved performance of \textsf{FedPower}, indicating the trade-off between privacy and accuracy. It should be noted that we fix the other privacy parameter $\delta$ to be $10^{-4}$, which ensures that $\delta$ is smaller than the number of samples per-machine and is a desirable choice in DP.

\paragraph{Effect of the proportion of participated machines $\frac{K}{m}$.}
In addition to the full participation protocol, we investigate the impact of the proportion of participated machines $\frac{K}{m}$ in the partial participation scheme. Figures \ref{spiked}(c) and \ref{sbm}(c) demonstrate the results for $\frac{K}{m}=1,0.6,0.4$ under Model I and Model II, respectively. Notably, larger values of $\frac{K}{m}$ tend to yield better results compared to smaller values. However, the overall influence of $\frac{K}{m}$ on the algorithm's accuracy is not significant, thus confirming the effectiveness of \textsf{FedPower} in real-world scenarios with a notable proportion of stragglers.


\begin{figure}[htb]{}
\centering
\subfigure[Effect of $p$]{\includegraphics[height=4.6cm,width=5cm,angle=0]{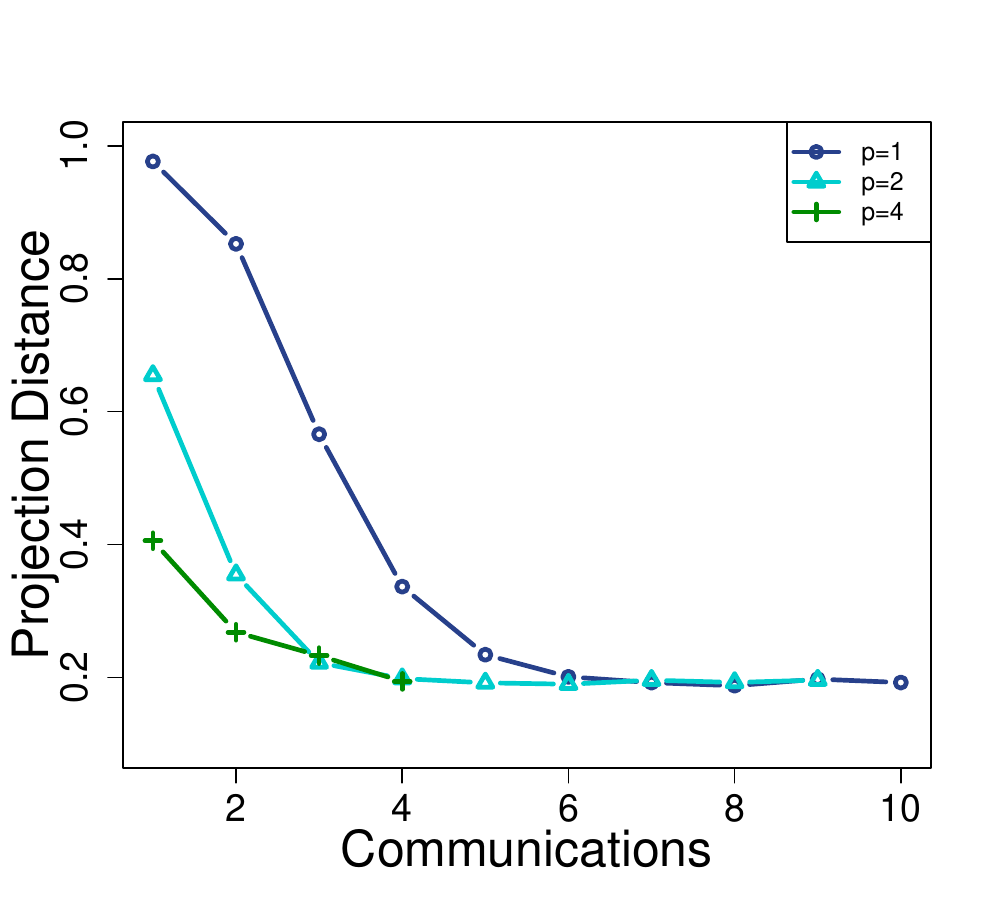}}
\subfigure[Effect of $\varepsilon$]{\includegraphics[height=4.6cm,width=5cm,angle=0]{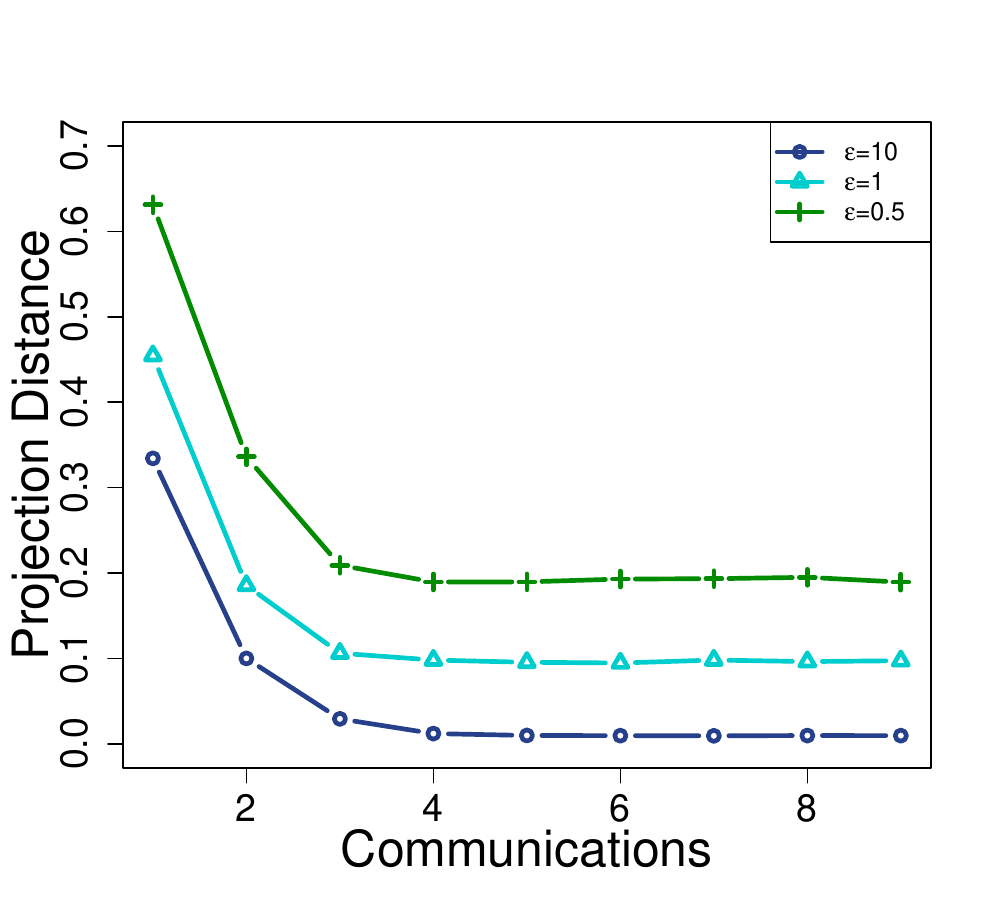}}
\subfigure[Effect of $\frac{K}{m}$]{\includegraphics[height=4.6cm,width=5cm,angle=0]{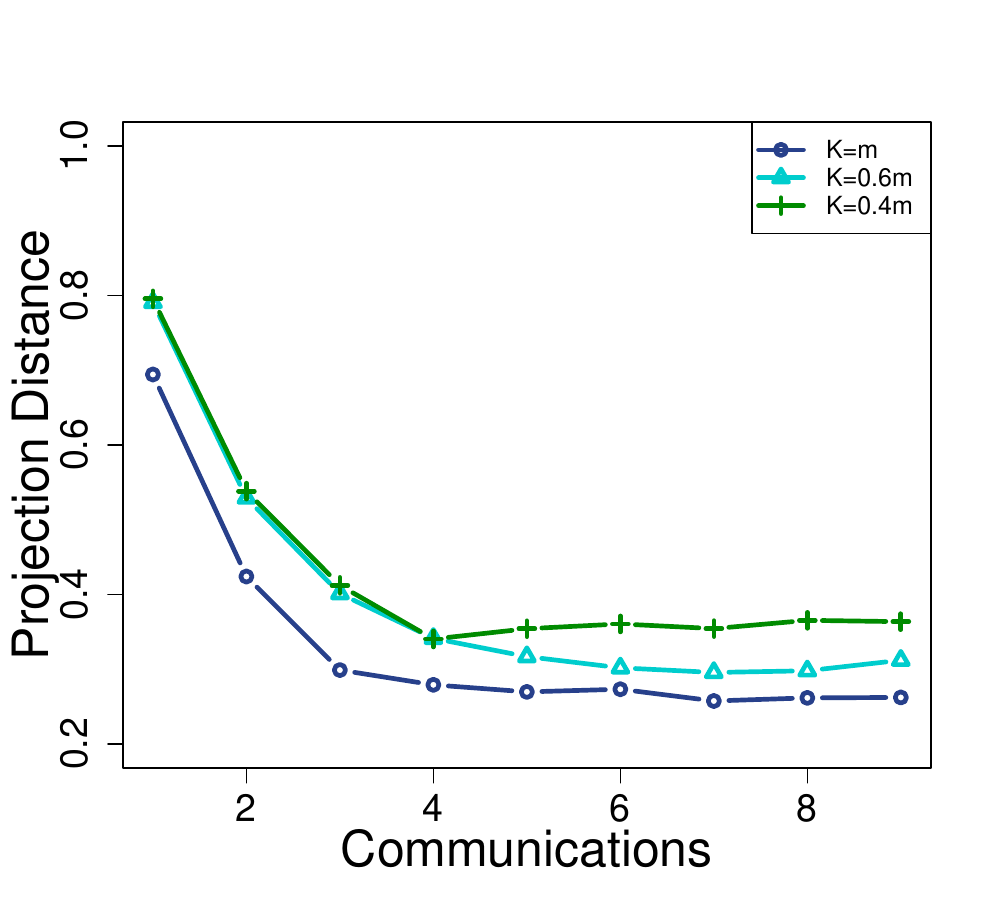}}\\
\caption{ The numerical performance of \textsf{FedPower} under Model I. We vary $p$, $\varepsilon$, and $\frac{K}{m}$ in (a), (b), and (c) respectively to show their effects on the performance of \textsf{FedPower}.  }\label{spiked}
\end{figure}

\begin{figure}[htb]{}
\centering
\subfigure[Effect of $p$]{\includegraphics[height=4.6cm,width=5cm,angle=0]{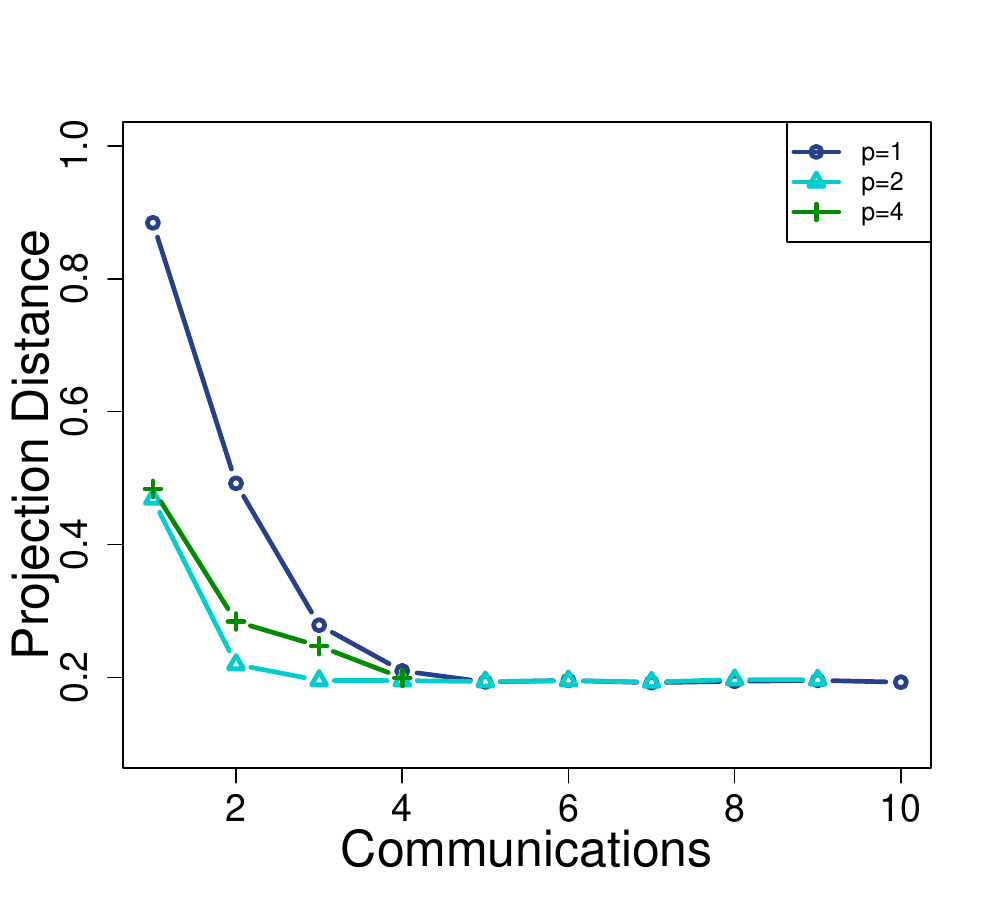}}
\subfigure[Effect of $\varepsilon$]{\includegraphics[height=4.6cm,width=5cm,angle=0]{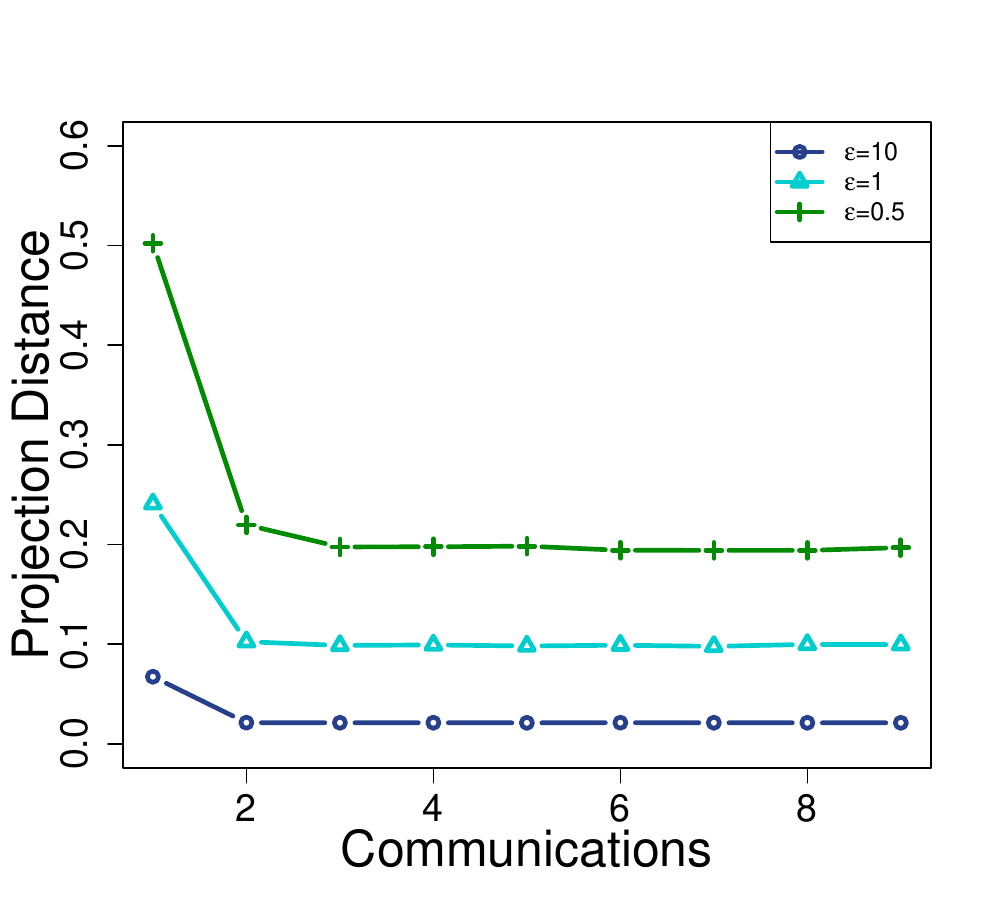}}
\subfigure[Effect of $\frac{K}{m}$]{\includegraphics[height=4.6cm,width=5cm,angle=0]{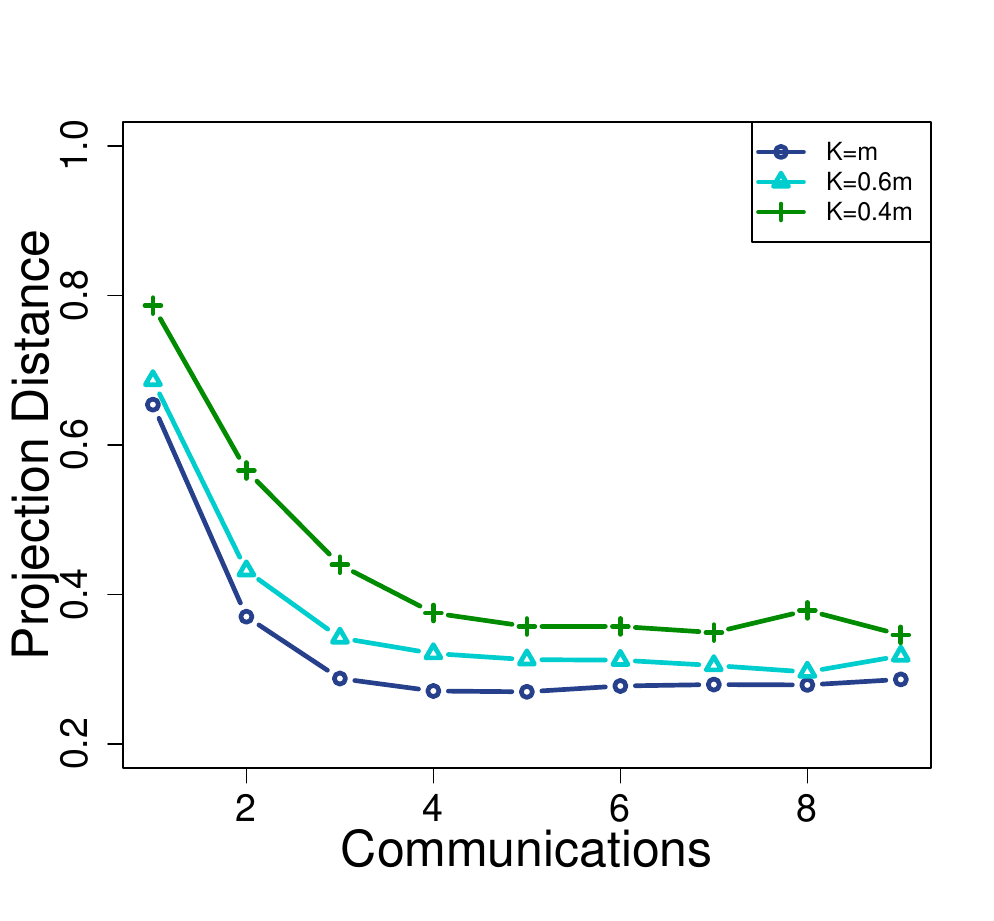}}\\
\caption{ The numerical performance of \textsf{FedPower} under Model II. We vary $p$, $\varepsilon$, and $\frac{K}{m}$ in (a), (b), and (c) respectively to show their effects on the performance of \textsf{FedPower}. }\label{sbm}
\end{figure}

\newpage
\section{Conclusion and Discussion}
\label{sec:conclusion}
We present \textsf{FedPower}, a distributed algorithm designed for the federated learning framework to address the eigenspace estimation problem while ensuring communication efficiency, privacy preservation, and tolerance to stragglers.
Our approach involves performing multiple noisy local power iterations between consecutive communications on each worker machine, where the noise comes from the DP's requirement. Full or partial aggregation scheme is performed after every $p$ iterates. Methodologically, our algorithms provide a flexible and general framework for the computation of leading eigenspace in the modern machine learning setting. Theoretically, our analysis gives a new application of the noisy power method \citep{hardt2014noisy} by combining the perturbed iterate analysis. In addition, the proposed algorithms can be applied to a wide range of statistical and machine learning tasks, including matrix completion, clustering, and ranking, among others.


This work is a first step towards the power-method-based leading eigenspace estimation in the realistic federated learning regime.
There are several challenges and open problems that require further investigation.
First, the coupled factors $Z_t^{(i)}$ and $R_t^{(i)}$ in $W_t:=\sum_i W_t^{(i)}$ (see, e.g., (\ref{A.5})) make $\|W_t\|_2$ challenging to bound. In this work, we bound $\|W_t\|_2$ by taking maximization over all $i\in[m]$ and analyzing each $\|W_t^{(i)}\|_2$ respectively. However, ideally, one may hope that the summation is left inside the spectral norm and the bound of $\|W_t\|_2$ depends on the overall error $\|\sum _i A_i-A\|_2$ like the one-shot framework of \citet{charisopoulos2021communication}. In that case, the theoretical dependence on the heterogeneity parameter $\eta$ can also be relaxed. Therefore, new analyzing tools are in urgent need.

Second, it is well-known that privacy can be amplified by subsampling, that is, a DP mechanism run on a random subsample of a dataset provides higher privacy guarantees than when run on the entire dataset. Most literature considers the \emph{sampling without replacement} strategy, including Possion subsampling and uniform subsampling. The privacy bound of the {sampling without replacement} strategy is easy to analyze. Due to technical challenges, very few literature consider the \emph{sampling with replacement} strategy. In particular, \citet{balle2018privacy} provides a tight analysis of the privacy bound of the Gaussian mechanism by using the tools of \emph{privacy profiles}. In our context, the subsampling of local machines ideally can lead to privacy amplification by a factor $K/m$. However, since our algorithm is not additive Gaussian noise perturbation, the privacy profile is difficult to compute. Thus, the tools of sampling with replacement in \citet{balle2018privacy} are not applicable. Therefore, in terms of privacy analysis, the privacy amplification of the sampling with replacement is harder than that of the sampling without replacement. While in terms of the utility analysis, the non-dependency of local machines in the sampling without replacement strategy would make the analysis challenging. Therefore, whatever the sampling strategy is, the privacy analysis and utility analysis can not be satisfied simultaneously. New concentration inequality on the sum of uniformly sampled (without replacement) matrices and new privacy amplification bound for sampling with replacement strategy without using the privacy profile is imperative.

Third, we mainly consider the curious-but-honest attacker who can see the parameters via communication. In some cases, there also exists an \emph{external adversary} who only can see the final published results with additional prior information. For example, people who participate in the data collection and model design may be such kind of attackers. Hence, it is beneficial to see how much privacy the \textsf{FedPower} leaks to the external adversary. For the same noise, the privacy leakage (namely, the privacy budget $\varepsilon$) to the external adversary is hopefully smaller than that to the curious onlooker. However, in our context, the DP's noise to the external adversary is the average of \emph{OPT transformed} Gaussian random matrices, which makes it difficult to quantify the corresponding privacy leakage. This problem deserves further study.


\section*{Appendix}

Subsection \textbf{A} and \textbf{B} includes the proofs (also the proof sketch if necessary) corresponding to the full and partial participation protocols, respectively. Subsection \textbf{C} contains the technical lemmas. Subsection \textbf{D} presents auxiliary lemmas used in the proofs. Subsection \textbf{E} introduces the formal definitions and lemmas on metrics between two
subspaces.
\subsection*{A. Full participation}
\subsubsection*{Proof of Theorem \ref{dpthm}}
We provide the proof in the following steps.
\paragraph{First step: Bounding the sensitivity.}
Without loss of generality, consider the $i$-th local machine in $t$-th iteration. Let $[Z_{t-1}^{(i)}]_{\cdot l}$ denote the $l$-th ($1\leq l\leq r$) column of $Z_{t-1}^{(i)}$ and let $A_i'= \frac{m}{n}M_i'^\intercal M_i'$ be the neighboring dataset of $A_i= \frac{m}{n}M_i^\intercal M_i$, where $M_i$ and $M_i'$ differ in one row (denoted by $\omega$ and $\omega'$) and all their rows are normalized to have Euclidean norm 1. Conditioning on $[Z_{t-1}^{(i)}]_{\cdot l}$, we have that the $l_2$-sensitivity of $A_i[Z_{t-1}^{(i)}]_{\cdot l}$ is
\begin{align*}
\|A_i [Z_{t-1}^{(i)}]_{\cdot l}- A_i' [Z_{t-1}^{(i)}]_{\cdot l}\|_2&\leq \|A_i -A_i' \|_2\cdot \|[Z_{t-1}^{(i)}]_{\cdot l}\|_2 \leq \|A_i-A_i'\|_{\tiny F}\nonumber\\
& =\frac{m}{n}\|\omega\omega^\intercal-\omega'\omega'^\intercal\|_{\tiny F}\leq \frac{m}{n} (\|\omega\omega^\intercal\|_{\tiny F}+\|\omega'\omega'^\intercal\|_{\tiny F})\nonumber\\
&=\frac{m}{n} (\|\omega\|_2\|\omega\|_2+\|\omega'\|_2\|\omega'\|_2)=\frac{2m}{n}.
\end{align*}
Stacking $r$ such vectors together to obtain a $d\times r$-dimensional vector, we have that the total sensitivity of $A_iZ_{t-1}^{(i)}$ is bounded by $\sqrt{r}\cdot\frac{2m}{n}$.

\paragraph{Second step: Tight composition of RDP.}
Suppose that we add i.i.d. Gaussian noise to each entry of $A_iZ_{t-1}^{(i)}$ according to
\begin{equation*}
{Y_t^{(i)}=A_iZ_{t-1}^{(i)}+\mathcal N(0, (\frac{2\sqrt{r}m}{n}\sigma_g)^2)^{{d\times r}},}
\end{equation*}
then by Proposition \ref{gaussrdp}, conditioning on $Z_{t-1}^{(i)}$, we have that the above mechanism satisfies $(\alpha, \frac{\alpha}{2\sigma_g^2})$-RDP.

After $p$ local iterations and $\lfloor T/p\rfloor$ communication rounds, by invoking the adaptive composition theorem of RDP in Proposition \ref{composition}, we have that Algorithm \ref{fedpowerfull} satisfies $(\alpha, \frac{T\alpha}{2\sigma_g^2})$-RDP.

\paragraph{Third step: Optimizing $\alpha$ and choosing $\sigma_g$.}
To be interpretable, we convert RDP to DP via Proposition \ref{rdptodp}, that is, Algorithm \ref{fedpowerfull} satisfies
$(\frac{T\alpha}{2\sigma_g^2}+\frac{\log (1/\delta)}{\alpha-1},\delta)$-RDP
for any $\alpha>1$ and $\delta>0$. To select an optimal $\alpha$, we consider the following optimization problem,
\begin{equation}
\label{A.1}
\underset{\alpha>1}{\min} \;\varepsilon (\alpha):= \frac{T\alpha}{2\sigma_g^2}+\frac{\log (1/\delta)}{\alpha-1}.
\tag{A.1}
\end{equation}
It is easy to obtain that the optimal $\alpha$ for \eqref{A.1} is $$\alpha_{\rm opt}:=1+\sqrt{2\sigma_g^2\log(1/\delta)/T},$$
and the corresponding $\varepsilon (\alpha_{\rm opt})$ is
\begin{equation}
\label{A.2}
\varepsilon (\alpha_{\rm opt}):= \frac{T}{2\sigma_g^2}+\frac{2\sqrt{T\log (1/\delta)}}{\sqrt{2}\cdot\sigma_g}.
\tag{A.2}
\end{equation}
Finally, letting the two parts of RHS of (\ref{A.2}) both smaller than $\varepsilon/2$, we obtain the following choice of $\sigma_g$
$$\sigma_g\geq \max\{\sqrt{\frac{T}{\varepsilon}},\; \frac{2\sqrt{2T\log (1/\delta)}}{\varepsilon}\}.$$

As a result, Algorithm \ref{fedpowerfull} attains $(\varepsilon, \delta)$-DP with the variance of Gaussian noise $\nu:=\frac{2\sqrt{r}m}{n}\sigma_g\geq  \frac{2\sqrt{r}m}{n}\max\{\sqrt{\frac{T}{\varepsilon}},\; \frac{2\sqrt{2T\log (1/\delta)}}{\varepsilon}\}$.
\QEDA

\subsubsection*{Proof of Theorem \ref{utilitythm}}
\emph{Proof sketch of Theorem \ref{utilitythm}}:  First, we define a virtual sequence
\[\overline{Z}_t = \frac{1}{m}\sum_{i=1}^m  Z_t^{(i)} O_t^{(i)}.\]
Here $O_t^{(i)} \in \mathbb R^{r \times r}$ is defined as
\begin{equation}
O_t^{(i)} =
\begin{cases}
\mathbb I_r & \text{if} \ t \in \mathcal I_T^p\\
D_{t+1}^{(i)} & \text{if} \ t \notin \mathcal I_T^p.
\end{cases}
\nonumber
\end{equation}
Then, we will write $\overline{Z}_t$ in the following recursive manner,
\[\overline{Z}_t=[A\overline{Z}_{t-1}+\mathcal G_t]R_t^{-1},\]
where $R_t$ is a reversible matrix to be defined, and $\mathcal G_t$ is some noisy perturbation coming from the Gaussian noise incurred by DP and local iterates. To analyze the convergence of \textsf{FedPower}, we use the analytical framework of noisy power iterates in \citep{hardt2014noisy}. However, their results require $\overline{Z}_t$ to have orthonormal columns, which is not met in our setting. As a remedy, we obtain the following results, which is a modification of Corollary 1.1 (see Lemma \ref{lemma1}) in \citep{hardt2014noisy}.

\begin{lem}[Informal version of Lemma \ref{lem:errcor}]
Let $\overline{Z}_0$ be the orthonormalized space of a $d\times r$ random matrix with each entry being i.i.d. standard Gaussian. Assume $\overline{Z}_t$ iterates as follows,
\begin{equation}
\phantomsection
\overline{Z}_t\leftarrow  A \overline{Z}_{t-1}+\mathcal G_t.\nonumber
\end{equation}
If $\mathcal G_t$ satisfies
\begin{equation}
\phantomsection
5\| \mathcal G_t\|_2\leq \epsilon (\sigma_k-\sigma_{k+1}){\rm min}_t\|\overline{Z}_t\|_{\tiny
{\rm m}}\quad{\rm and}\quad 5\|U_k^\intercal \mathcal G_t\|_2\leq (\sigma_k-\sigma_{k+1}){\rm min}_t\|\overline{Z}_t\|_{\tiny
{\rm m}}\frac{\sqrt{r}-\sqrt{k-1}}{\tau \sqrt{d}},\nonumber
\end{equation}
for some fixed $\tau$ and $\epsilon <1/2$. Then with high probability, there exists an $T=O(\frac{\sigma_k}{\sigma_k-\sigma_{k+1}}{\rm log}(d\tau/\epsilon))$ so that after $T$ steps $$\|(\mathbb I-\overline{Z}_T\overline{Z}_T^\intercal)U_k\|_2\leq \epsilon.$$
The result also holds for the following iterates with any reversible matrix $R_t$, \[\overline{Z}_t\leftarrow [A \overline{Z}_{t-1}+\mathcal G_t]R_t^{-1}.\]
\end{lem}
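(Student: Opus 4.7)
The plan is to reduce this non-orthonormal noisy iteration to the orthonormal-iterate setting covered by Hardt-Price's Corollary 1.1 (Lemma \ref{lemma1} in the paper). Let $\overline{Z}_t = Q_t R_t'$ be a QR decomposition, where $Q_t \in \mathbb{R}^{d \times r}$ has orthonormal columns and $R_t' \in \mathbb{R}^{r \times r}$ is upper triangular with $\sigma_{\min}(R_t') = \|\overline{Z}_t\|_{\tiny {\rm m}}$. Since $Q_t$ and $\overline{Z}_t$ span the same column space, the projection distance $\sin\theta_k$ appearing in the conclusion is naturally computed through $Q_t$; the statement's expression $\|(I - \overline{Z}_T \overline{Z}_T^\intercal) V_k\|_2$ is to be interpreted via the orthonormal representative $Q_T$.

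Substituting the QR factorizations into the recursion $\overline{Z}_t = M \overline{Z}_{t-1} + \mathcal{G}_t$ yields
\[ Q_t \bigl[ R_t' (R_{t-1}')^{-1} \bigr] \;=\; M Q_{t-1} + \mathcal{G}_t (R_{t-1}')^{-1}. \]
Setting $\tilde{R}_t := R_t' (R_{t-1}')^{-1}$ and $\tilde{\mathcal{G}}_t := \mathcal{G}_t (R_{t-1}')^{-1}$, this is exactly one step of the classical noisy power method with orthonormal iterates and effective perturbation $\tilde{\mathcal{G}}_t$. Using $\|(R_{t-1}')^{-1}\|_2 = 1/\|\overline{Z}_{t-1}\|_{\tiny {\rm m}}$, the operator-norm bounds
\[ \|\tilde{\mathcal{G}}_t\|_2 \le \frac{\|\mathcal{G}_t\|_2}{\|\overline{Z}_{t-1}\|_{\tiny {\rm m}}}, \qquad \|V_k^\intercal \tilde{\mathcal{G}}_t\|_2 \le \frac{\|V_k^\intercal \mathcal{G}_t\|_2}{\|\overline{Z}_{t-1}\|_{\tiny {\rm m}}} \]
follow immediately, and the two hypotheses on $\mathcal{G}_t$ were engineered precisely so that $\tilde{\mathcal{G}}_t$ meets the noise ceilings required by the classical result with the same $\epsilon$ and $\tau$.

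With the effective-noise condition verified, I would invoke Corollary 1.1 of Hardt-Price on the orthonormal iterates $Q_t$. The Gaussian initialization $\overline{Z}_0 \sim \mathcal{N}(0, I)^{d \times r}$ produces a $Q_0$ whose overlap $\|V_k^\intercal Q_0\|_{\tiny {\rm m}}$ with the target subspace is $\Omega((\sqrt{r}-\sqrt{k-1})/(\tau\sqrt{d}))$ with high probability, via the standard lower-tail bound on the smallest singular value of a Gaussian matrix (the same device Hardt-Price already uses). The corollary then delivers $\|(I - Q_T Q_T^\intercal) V_k\|_2 \le \epsilon$ after $T = O\bigl(\frac{\sigma_k}{\sigma_k - \sigma_{k+1}} \log(d\tau/\epsilon)\bigr)$ steps, which is the desired conclusion. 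For the extended iteration $\overline{Z}_t \leftarrow [M \overline{Z}_{t-1} + \mathcal{G}_t] R_t^{-1}$, observe that right-multiplication by an invertible $R_t$ preserves column spans; hence the QR factor $Q_t$ satisfies exactly the same recurrence as before and the proof transfers verbatim.

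The main technical obstacle is handling the quantity $\min_t \|\overline{Z}_t\|_{\tiny {\rm m}}$ implicit in the hypothesis: the reduction is only meaningful if this minimum stays strictly positive throughout the run, and ideally bounded away from zero so that $\tilde{\mathcal{G}}_t$ does not blow up. Intuitively, once $Q_t$ aligns with $V_k$ the top-$k$ action of $M$ amplifies $\overline{Z}_t$ and $R_t'$ remains well-conditioned, but making this rigorous requires an inductive coupling argument that tracks the evolution of $\sigma_{\min}(R_t')$ alongside the subspace convergence produced by the noisy power method analysis — essentially showing that the lower bound hypothesized at step $t$ is preserved at step $t+1$ under the same noise budget.
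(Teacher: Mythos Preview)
Your reduction is correct and takes a genuinely different route from the paper. The paper does \emph{not} pass to QR factors and invoke Hardt--Price's Corollary~1.1 as a black box; instead it re-proves the one-step tangent inequality (their Lemma~\ref{lem:err}, a modification of Lemma~2.2 of \citet{hardt2014noisy}) directly for non-orthonormal $\overline{Z}_t$, absorbing the lack of orthonormality by inserting $\|\overline{Z}_t\|_{\rm m}$ into the denominators that appear when bounding $\|V_k^\intercal \mathcal{G}_t w\|_2 / \|V_k^\intercal \overline{Z}_t w\|_2$. After that, they iterate exactly as Hardt--Price do. Your approach is more modular: by writing $\overline{Z}_t = Q_t R_t'$ and pushing $R_{t-1}'^{-1}$ onto the noise, you recover the orthonormal recursion verbatim and can quote Corollary~1.1 unchanged. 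Both arguments buy the same conclusion with the same constants; yours avoids reopening the tangent-angle proof, while the paper's makes explicit where the extra $\|\overline{Z}_t\|_{\rm m}$ factor enters.

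One remark on your final paragraph: the obstacle you flag is not actually an obstacle for \emph{this} lemma. The positivity of $\min_t\|\overline{Z}_t\|_{\rm m}$ is baked into the hypothesis---if that minimum were zero the hypothesis would force $\mathcal{G}_t\equiv 0$ and you are in the noiseless power method. So no inductive coupling is needed here; the burden of showing $\|\overline{Z}_t\|_{\rm m}$ stays bounded below is deferred to the caller, and the paper discharges it separately via Lemma~\ref{lem:z} when applying the result in Theorems~\ref{utilitythm}--\ref{utilitythmpartial2}. Your proof sketch is complete once you drop that last paragraph.
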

In light of this result, the convergence of Algorithm \ref{fedpowerfull} could be established if we could bound the perturbation error induced from un-synchronization and DP. \QEDA

\begin{proof}
We provide a proof in three steps.
	
\paragraph{First step: Perturbed iterate analysis.}
Recall that we defined a virtual sequence by \[\overline{Z}_t = \frac{1}{m}\sum_{i=1}^m Z_t^{(i)} O_t^{(i)},\]
where $O_t^{(i)}$ is $\mathbb I_r$ if $t\in\mathcal I_T^p$, and is $D_{t+1}^{(i)}$ defined by
\[D_{t+1}^{(i)}=\underset{D\in \mathcal F\cap\mathcal O_r }{{\rm argmin}}\;\|Z_{t}^{(i)}D-Z_{t}^{(1)}\|_o,\]
if $t\notin\mathcal I_T^p$. Now we discuss the recursive iteration of $\overline{Z}_t$ under $t\notin \mathcal I_T^p$ and $t\in \mathcal I_T^p$, respectively. We will use repeatedly the fact that $Y_t^{(i)}=Z_t^{(i)}R_t^{(i)}$ in the following proof.

When $t\notin \mathcal I_T^p$, we note that $Y_t^{(i)}=A_iZ_{t-1}^{(i)}+N_{t}^{(i)}$,
where $N_{t}^{(i)}$ has i.i.d. $\mathcal N(0,\nu^2)$ entries.
Then, given any invertible $R_t$ (to be specified in Lemma \ref{lemmart}), we have
\begin{align}
\phantomsection
\label{A.3}
\overline{Z}_{t}&=\frac{1}{m}\sum_{i=1}^m  Z^{(i)}_{t}O^{(i)}_t\nonumber\\
&=\frac{1}{m}\sum_{i=1}^m A_i Z^{(i)}_{t-1}O^{(i)}_{t-1}R_t^{-1}+\frac{1}{m}\sum_{i=1}^m Z^{(i)}_{t}[O_t^{(i)}R_t-R_t^{(i)}O_{t-1}^{(i)}]R_t^{-1}+\frac{1}{m}\sum_{i=1}^m  N^{(i)}_{t}O_{t-1}^{(i)} R^{-1}_t \nonumber\\
&:=(A\overline{Z}_{t-1}+H_t+W_t+N_t)R_t^{-1},
\tag{A.3}
\end{align}
where
\begin{align}
\phantomsection
\label{A.4}
H_t:=\frac{1}{m}\sum_{i=1}^m H_t^{(i)}:=\frac{1}{m}\sum_{i=1}^m (A_i-A)Z_{t-1}^{(i)}O_{t-1}^{(i)},\nonumber
\tag{A.4}
\end{align}
\begin{align}
\phantomsection
\label{A.5}
W_t:=\frac{1}{m}\sum_{i=1}^m W_t^{(i)}:=\frac{1}{m}\sum_{i=1}^m Z_{t}^{(i)}[O_{t}^{(i)}R_t-R^{(i)}_tO_{t-1}^{(i)}],\nonumber
\tag{A.5}
\end{align}
and
\begin{align*}
\phantomsection
N_t=\frac{1}{m}\sum_{i=1}^m N_t^{(i)}O_{t-1}^{(i)}.
\end{align*}

When $t\in \mathcal I_T^p$, synchronization happens. In such case, $R^{(i)}_t$'s are identical for all $i\in [m]$ and we let them be $R_t$; see Lemma \ref{lemmart} for details, and
\[Y^{(i)}_t=\frac{1}{m}\sum_{l=1}^m [A_l Z^{(l)}_{t-1}D^{(l)}_{t}+ N_t^{(l)}D_t^{(l)}],\]
for all $i\in [m]$. Hence,
\begin{align}
\phantomsection
\label{A.6}
\overline{Z}_{t}&=\frac{1}{m}\sum_{i=1}^m   Z^{(i)}_{t}O^{(i)}_t\nonumber\\
&=(\frac{1}{m}\sum_{i=1}^m A_i Z^{(i)}_{t-1}D^{(i)}_{t}+\frac{1}{m}\sum_{i=1}^m  Z^{(i)}_{t}[O_t^{(i)}R_t-R_t^{(i)}]+\frac{1}{m}\sum_{i=1}^m N_t^{(i)}D_t^{(i)})R_t^{-1}\nonumber\\
&=(\frac{1}{m}\sum_{i=1}^m A_i Z^{(i)}_{t-1}O^{(i)}_{t-1}+\frac{1}{m}\sum_{i=1}^m Z^{(i)}_{t}[R_t-R_t^{(i)}]+\frac{1}{m}\sum_{i=1}^m N_t^{(i)}D_t^{(i)})R_t^{-1}\nonumber\\
&:=(A\overline{Z}_{t-1}+H_t+N'_t)R_t^{-1},
\tag{A.6}
\end{align}
where we used the fact that $O_t^{(i)}=\mathbb I_r$, $O_{t-1}^{(i)}=D^{(i)}_{t}$, and $R_t=R_t^{(i)}$; $H_t$ is defined in (\ref{A.4}), and
$$N_t':=\frac{1}{m}\sum_{i=1}^m N_t^{(i)}D_t^{(i)}.$$

\paragraph{Second step: Bound the noise term.}
We proceed to bound $\|H_t\|_2,\|W_t\|_2$, $\|N_t\|_2$ and $\|N'_t\|_2$, respectively.

$\bullet$ For $\|H_t\|_2$, we have
\begin{align}
\phantomsection
\label{A.7}
		\| H_t \|_2
		&= \| \frac{1}{m}\sum_{i=1}^m  H_t^{(i)} \|_2
		= \frac{1}{m}\sum_{i=1}^m \| \left( A_i -A \right)  Z_{t-1}^{(i)}O_{t-1}^{(i)} \|_2\nonumber\\
		&\leq \frac{1}{m}\sum_{i=1}^m \|A_i -A\|_2\|Z_{t-1}^{(i)}O_{t-1}^{(i)}\|_2
		\leq \eta\sigma_1,
\tag{A.7}
\end{align}
where the last inequality follows from Definition \ref{assu1}.

$\bullet$ For $\| W_t \|_2$, we have
		\begin{align}
		\| W_t \|_2
		&= \big\| \frac{1}{m}\sum_{i=1}^m  W_t^{(i)} \|_2
		= \frac{1}{m}\sum_{i=1}^m \big\|  Z_t^{(i)}   \left(  O_t^{(i)}R_t - R_{t}^{(i)}O_{t-1}^{(i)} \right) \big\|_2\nonumber\\
		&\leq \frac{1}{m}\sum_{i=1}^m  \big\|O_t^{(i)}R_t - R_{t}^{(i)}O_{t-1}^{(i)} \big\|_2.\nonumber
		\end{align}
Here, a good choice of $R_t$ should be specified to ensure a tight bound of $\|W_t\|_2$. Specifically, $R_t$ is chosen in a recursive manner as we show in Lemma \ref{lemmart} in the Appendix C. In particular, we prove in Lemma \ref{lemmart} that for any $i\in[m]$
\begin{align}
\phantomsection
\label{A.8}
	\|O_{t}^{(i)} R_t - R_{t}^{(i)}O_{t-1}^{(i)}\|_2
	&\leq \sigma_1(A_1)\| Z_t^{(i)}O_{t}^{(i)}-  Z_t^{(1)} \|_2+\|A_1 - A_i\|_2  + \sigma_1(A_i)\|Z_{t-1}^{(i)}O_{t-1}^{(i)}-Z_{t-1}^{(1)}\|_2\nonumber\\
    &\leq{2} \sigma_1\left(\rho_t+\eta+(1+\eta)\rho_{t-1}\right),
\tag{A.8}
\end{align}
where we make use of Definition \ref{assu1} and define
\begin{equation}
\phantomsection
\label{A.9}
	\rho_{t}={\rm max}_i\|Z_t^{(i)}O_t^{(i)}-Z_t^{(1)}\|_2={\rm max}_i\|Z_t^{(i)}D_{t+1}^{(i)}-Z_t^{(1)}\|_2,\tag{A.9}
\end{equation}
with $\rho_t$ upper bounded as in (\ref{3.6}) and (\ref{3.7}); see Lemma \ref{lem:rho1} and \ref{lem:rho2} for details.
As a result, we obtain
\begin{equation}
\phantomsection
\label{A.10}
	\|W_t\|_2\leq 2\sigma_1\left(\rho_t+\eta+(1+\eta)\rho_{t-1}\right).\tag{A.10}
\end{equation}

$\bullet$ For $\|N_t\|_2$, we recall that
$$N_t=\frac{1}{m}\sum_{i=1}^m N_t^{(i)}O_{t-1}^{(i)},$$
where $N_t^{(i)}$ is a random matrix with each entry being i.i.d. $\mathcal N(0,\nu^2)$.
We will make use of Bernstein-type concentration inequality developed for sums of a random matrix multiplied by a non-random matrix (\citep{lei2022bias}, see also Lemma \ref{lei-bound1}) to proceed. We first observe that $Y\sim \mathcal N(0,\nu^2)$ is $(2\nu^2,\nu^2)$-Bernstein (see Definition \ref{bernstein}) because for non-negative integer $q$,
\begin{equation}
\label{A.11}
\mathbb E |Y|^q\leq \nu^q (q-1)!\leq \frac{2\nu^2}{2}q! \nu^{q-2},
\tag{A.11}
\end{equation}
where the first inequality follows from Lemma \ref{momentofgaussian}. Then by conditioning on $O_{t-1}^{(i)}$'s and Lemma \ref{lei-bound1}, we have
\begin{equation}
\label{A.12}
\mathbb P(\|N_t\|_2\geq t)=\mathbb P(\|\sum_{i=1}^m N_t^{(i)}O_{t-1}^{(i)}\|_2\geq mt)\leq 4d \exp \left(-\frac{m^2t^2 /2}{2\nu^2 d m+\nu tm}\right)
\tag{A.12}
\end{equation}
Let $$t_{\rm opt}\asymp \sqrt{4 d\nu^2 \log(dT)/m} +\nu \log(dT)/m,$$ we then have with probability larger than $1-(dT)^{-\gamma}$ for some $\gamma>1$ that
\begin{equation}
\label{A.13}
\|N_t\|_2\leq t_{\rm opt} \lesssim {\nu }\max\left\{\sqrt{\frac{d \log(dT)}{m}},\; \frac{\log(dT)}{m}\right\}.
\tag{A.13}
\end{equation}
Further, by the union bound, $\max_t\|N_t\|_2$ is also bounded by the RHS of \eqref{A.13} but with probability $1-(dT)^{-\alpha}$ for some $\alpha >0$.

$\bullet$ For $\|N_t'\|_2$, we can use the same technique as that for bounding $\|N_t\|_2$ and the same bound can be derived.

Combining (\ref{A.7}), (\ref{A.10}) and (\ref{A.13}) and recalling the expression (\ref{A.3}) and (\ref{A.6}), we obtain that the perturbation noise $\mathcal G_t:=H_t+W_t+N_t+N'_t$ satisfies that
\begin{align}
\phantomsection
\label{A.14}
{\rm max}_t\|\mathcal G_t\|_2\lesssim {\nu }\max\left\{\sqrt{{d \log(dT)}/{m}},\; {\log(dT)}/{m}\right\}+ 2\sigma_1\left(\eta+(2+\eta){\rm max}_t\rho_{t}\right),\nonumber
\tag{A.14}
\end{align}
with probability larger than $1-(dT)^{-\alpha}$ for some $\alpha >0$. To lighten the notation, we denote the RHS of (\ref{A.14}) by ${\rm Err} (\nu, \sigma_1,d, m, T, p,k,\eta)$.

\paragraph{Third step: Establish convergence.}
Now we make use of the result in Lemma \ref{lem:errcor} to establish convergence. Note that in Lemma \ref{lem:errcor}, there still exists an unknown term $\|\overline{Z}_t\|_{\rm \tiny m}$. We prove in Lemma \ref{lem:z} that
\[\|\overline{Z}_t\|_{\rm \tiny m}\geq 1-(1-1/m){\rm max}_t\rho_t.\]
Denote $$\epsilon':= \frac{{\rm Err} (\nu, \sigma_1,d, m, T, p,k,\eta)}{(\sigma_k-\sigma_{k+1})(1-(1-1/m){\rm max}_t\rho_t)},$$ then by (\ref{A.14}),
\begin{align}
\phantomsection
\label{A.15}
5{\rm max}_t\|\mathcal G_t\|_2\leq \epsilon'(\sigma_k-\sigma_{k+1})\|\overline{Z}_t\|_{\rm \tiny m}.\nonumber
\tag{A.15}
\end{align}
Hence the first condition in Lemma \ref{lem:errcor} is satisfied. For the second condition, we have that
$$5{\rm max}_t\|U_k^\intercal \mathcal G_t\|_2\leq 5{\rm max}_t\|\mathcal G_t\|_2,$$
which implies that the second condition would be met automatically if $ \epsilon'<\frac{\sqrt{r}-\sqrt{k-1}}{\tau \sqrt{d}}$, which is our condition. Consequently, by Lemma \ref{lem:errcor}, we have after $T=O(\frac{\sigma_k}{\sigma_k-\sigma_{k+1}}{\rm log}(\frac{d}{\epsilon'}))$ iterations,
$$\|(\mathbb I_d-\overline{Z}_T\overline{Z}_T^\intercal)U_k\|_2\leq \epsilon',$$
with probability larger than $1-(dT)^{-\alpha}-\tau^{-\Omega (r+1-k)}-e^{-\Omega(d)}$.
\end{proof}

\subsubsection*{Proof of Theorem \ref{thm:rho}}
By Lemma \ref{lem:rho1} and \ref{lem:rho2}, the following results hold.
\begin{itemize}
		\item If $\mathcal F = \mathcal O_{r}$, then
		\[
		\rho_t
		\le \sqrt{2} \min\left\{\frac{2\kappa^ p \eta (1+\eta)^{p-1} }{(1-\eta)^p} , \frac{\eta\sigma_1}{\delta_k} + 2 \gamma_k^{p/4}  \max_{i \in [m]}\tan \theta_k(Z_{\tau(t)}, U_{k}^{(i)})  \right\}.
		\]
		with the parameters $\delta_k={\rm min}_{i\in[m]}\delta_k^{(i)}$ with $\delta_k^{(i)} = {\rm min} \{ |\sigma_j(A_i) - \sigma_{k}(A)| :j \geq k+1 \}$ and $\gamma_k={\rm max}\{{\rm max}_{i\in[m]} \frac{\sigma_{k+1}(A_i)}{\sigma_{k}(A_i)},\frac{\sigma_{k+1}(A)}{\sigma_{k}(A)}\}$. By requiring $\eta\kappa \leq 1/3$, and Wely's inequality, we have
 \[\sigma_{{j}}(A) - \sigma_{k}(A)-\frac{1}{3}\sigma_{d}(A)\leq\sigma_j(A_i) - \sigma_{k}(A)\leq \sigma_{{j}}(A) - \sigma_{k}(A)+\frac{1}{3}\sigma_{d}(A).\]
 Hence, $\delta_k\asymp  \sigma_{k}(A)-\sigma_{k+1}(A) $.
 By requiring $\eta \leq 1/p$, we have
		$\frac{ (1+\eta)^{p-1}}{(1-\eta)^p} \leq \frac{ (1+1/p)^{p-1}}{(1-1/p)^p} \leq \mathrm{e}^2$.
		Define $C_t = \max_{i \in [m]}{\rm tan} \theta_k(Z_{\tau(t)}, U_{k}^{(i)})$.
		Actually, we have shown in Theorem \ref{utilitythm} that $\sin\theta_k(Z_{\tau(t)}, U_{k})\leq \epsilon'$ for sufficiently large $t$. By the condition that $\epsilon'$ is small enough, we can obtain $\lim\limits_{t \to \infty}\theta_k(Z_{\tau(t)}, U_{k}) = 0$.
		Then, we have
		{\small\[
		\limsup\limits_{t \to \infty} C_t =
		\limsup\limits_{t \to \infty} \max_{i \in [m]} {\rm tan}\,\theta_k(Z_{\tau(t)}, U_{k}^{(i)} )
		\leq\max_{i \in [m]}  {\tan}\,\theta_k(U_{k}, U_{k}^{(i)})
		\leq\max_{i \in [m]}  {\tan} \arg\sin \frac{\eta\sigma_1}{\delta_k}
		= O(\eta),
		\]}
where the last inequality follows from the Davis-Kahan theorem (see Lemma \ref{lem:DK}).
		\item If $\mathcal F = \{\mathbb I_r\}$, then
		\[
		\rho_t \leq 4\sqrt{2k}p \kappa^p \eta (1+\eta)^{p-1}
		\le 4\mathrm{e}\sqrt{2k}p \kappa^p \eta,
		\]
where the last inequality requires $\eta \leq 1/p$.
	\end{itemize}
Simply put together, we confirm that the bounds of $\rho_t$ in Theorem \ref{thm:rho} hold.
\QEDA

\subsection*{B. Partial participation}

\subsubsection*{Proof of Theorem \ref{utilitythmpartial}}
\emph{Proof sketch of Theorem \ref{utilitythmpartial}}:
Similar to the proof under the full participation setting, we consider the virtual sequence
\begin{equation}
\label{B.1}
\overline{Z}_{t}:= \frac{1}{K}\sum_{i\in \mathcal S_{\tau(t)}}Z_t^{(i)} O_t^{(i)},
\tag{B.1}
\end{equation}
where $O_t^{(i)}$ is $\mathbb I_r$ if $t\in\mathcal I_T^p$, and is $D_{t+1}^{(i)}$ defined by
\[D_{t+1}^{(i)}=\underset{D\in \mathcal F\cap\mathcal O_r }{{\rm argmin}}\;\|Z_{t}^{(i)}D-Z_{t}^{(1)}\|_o,\]
if $t\notin\mathcal I_T^p$.
We will write $\overline{Z}_t$ in the following recursive manner,
\[\overline{Z}_t=[A\overline{Z}_{t-1}+\mathcal G_t]R_t^{-1},\]
where $R_t^{-1}$ is the a reversible matrix to be specified, and $\mathcal G_t$ is some noisy perturbations which turns out to come from three sources. Except for the DP's noise and the local iterates' perturbation, the random sampling of local machines also contributes to $\mathcal G_t$. To be specific, when $\tau(t)\neq \tau(t-1)$, it holds with high probability that $\mathcal S_{\tau(t)}\neq \mathcal S_{\tau(t-1)}$. Thus we also need to bound the bias term that the random sampling of local machines brings. With $\mathcal G_t$ properly bounded, the convergence of would be established using Lemma \ref{lem:errcor}. \QEDA

\begin{proof}
We provide a proof in three steps.
	
\paragraph{First step: Perturbed iterate analysis.}
Now we proceed to derive the recursive iteration of $\overline{Z}_t$ under $t\notin \mathcal I_T^p$ and $t\in \mathcal I_T^p$, respectively. We will use repeatedly the fact that $Y_t^{(i)}=Z_t^{(i)}R_t^{(i)}$ in the following proof.

When $t\notin \mathcal I_T^p$, we have $Y_t^{(i)}=A_iZ_{t-1}^{(i)}+N_{t}^{(i)}$, where $N_{t}^{(i)}$ has i.i.d. $\mathcal N(0,\nu^2)$ entries.
Thus, given any invertible $R^t$ (to be specified in Lemma \ref{lemmart}), we have
\begin{align}
\phantomsection
\label{B.2}
\overline{Z}_{t}&=\frac{1}{K}\sum_{i\in \mathcal S_{\tau(t)}}Z_t^{(i)} O_t^{(i)}\nonumber\\
&=\frac{1}{K}\sum_{i\in \mathcal S_{\tau(t)}}A Z^{(i)}_{t-1} O_{t-1}^{(i)}R_t^{-1}+\frac{1}{K}\sum_{i\in \mathcal S_{\tau(t)}}(A_i-A) Z^{(i)}_{t-1}O_{t-1}^{(i)}R_t^{-1}\nonumber\\
&\quad\quad\quad\quad\quad+\frac{1}{K}\sum_{i\in \mathcal S_{\tau(t)}} Z^{(i)}_{t}[O_{t}^{(i)}R_t-R_t^{(i)}O_{t-1}^{(i)}]R^{-1}_t+\frac{1}{K}\sum_{i\in \mathcal S_{\tau(t)}}N_t^{(i)} O_{t-1}^{(i)}R^{-1}_t\nonumber\\
&:=(\mathcal J_t+ H_t +  W_t + N_t)R_t^{-1},
\tag{B.2}
\end{align}
for which $\mathcal J_t$ could be further expressed as
{\scriptsize
\begin{align}
\phantomsection
\label{B.3}
\mathcal J_t&=\frac{1}{K}\sum_{i\in \mathcal S_{\tau(t-1)}}A Z^{(i)}_{t-1}O_{t-1}^{(i)}+\frac{1}{K}\left(\sum_{i\in \mathcal S_{\tau(t)}}A Z^{(i)}_{t-1}O_{t-1}^{(i)}-\sum_{i\in \mathcal S_{\tau(t-1)}}A Z^{(i)}_{t-1}O_{t-1}^{(i)}\right)\nonumber\\
&=A\overline{Z}_{t-1}+ \left(\sum_{i\in \mathcal S_{\tau(t)}}A Z^{(i)}_{t-1}O_{t-1}^{(i)}/K-\mathbb E_{\mathcal S_{\tau(t)}}[\sum_{i\in \mathcal S_{\tau(t)}}A Z^{(i)}_{t-1}O_{t-1}^{(i)}/K]\right)\nonumber\\
&\quad\quad\quad\quad\quad\quad\quad\quad\quad\quad\quad\quad\quad\quad\quad\quad\quad\quad\quad+\left(\mathbb E_{\mathcal S_{\tau(t)}}[\sum_{i\in \mathcal S_{\tau(t)}}A Z^{(i)}_{t-1}O_{t-1}^{(i)}/K]-\sum_{i\in \mathcal S_{\tau(t-1)}}A Z^{(i)}_{t-1}O_{t-1}^{(i)}/K\right)\nonumber\\
&=A\overline{Z}_{t-1}+ \left(\sum_{i\in \mathcal S_{\tau(t)}}A Z^{(i)}_{t-1}O_{t-1}^{(i)}/K-\sum_{i=1}^m A Z_{t-1}^{(i)}O_{t-1}^{(i)}/m\right)+\left(\sum_{i=1}^m A Z_{t-1}^{(i)}O_{t-1}^{(i)}/m-\sum_{i\in \mathcal S_{\tau(t-1)}}A Z^{(i)}_{t-1}O_{t-1}^{(i)}/K\right)\nonumber\\
&:=A\overline{Z}_{t-1}+ E_t+ F_t,
\tag{B.3}
\end{align}
}

\noindent where the expectation only involves the randomness from sampling the local machines.
Combining (\ref{B.3}) with (\ref{B.2}), we obtain when $t\notin \mathcal I^p_T$ that,
\begin{align}
\phantomsection
\label{B.4}
\overline{Z}_t=(A\overline{Z}_{t-1}+E_t+F_t+H_t+W_t+N_t)R_t^{-1}.
\tag{B.4}
\end{align}

When $t\in \mathcal I^p_T$, the synchronization happens. Thereby, for all $i\in \mathcal S_{\tau(t)}$,
\begin{align}
\phantomsection
\label{B.5}
Y_t^{(i)}&=\frac{1}{K}\sum_{l\in \mathcal S_{\tau(t)}}(A_l Z_{t-1}^{(l)}D_{t}^{(l)} +N_{t}^{(l)}D_t^{(l)}):=\frac{1}{K}\sum_{l\in \mathcal S_{\tau(t)}}A_l Z_{t-1}^{(l)}D_{t}^{(l)}+N'_t.\nonumber
\tag{B.5}
\end{align}
Using similar treatments as in (\ref{B.2}) and (\ref{B.3}), we have when $t\in \mathcal I^p_T$ that,
\begin{align}
\phantomsection
\label{B.6}
\overline{Z}_t=(A\overline{Z}_{t-1}+E_t+F_t+H_t+N'_t)R_t^{-1},
\tag{B.6}
\end{align}
where note that similar to the full participation protocol, $W_t$ does not appear when $t\in \mathcal I^p_T$.

\paragraph{Second step: Bound the noise term.}
We proceed to bound $E_t$, $F_t$, $H_t$, $W_t$, $N_t$ and $N'_t$, respectively.

$\bullet$ For $\|E_t\|_2$, we denote
\begin{align}
\phantomsection
\label{B.7}
E_t&:=A\sum_{i\in \mathcal S_{\tau(t)}}\frac{1}{K}\left (Z^{(i)}_{t-1}O^{(i)}_{t-1}-\frac{1}{m}\sum_{i=1}^m Z_{t-1}^{(i)}O^{(i)}_{t-1}\right):=A \sum_{i\in \mathcal S_{\tau(t)}} S_t^{(i)}:= A S_t.\nonumber
\tag{B.7}
\end{align}
We will make use of the matrix Bernstein inequality (\citep{tropp2015introduction}, restated in Lemma \ref{lemmabernstein}), to bound $\|S_t\|_2$. Consider only the randomness that the $S_{\tau(t)}$ brings, we have $$\mathbb E_{S_{\tau(t)}}(S_t^{(i)})=0\quad{\rm and}\quad \|S_t^{(i)}\|_2\leq \frac{2}{K},$$
and let
\begin{align}
\phantomsection
\label{B.8}
\nu(S_t)={\rm max}\{\|\mathbb E(S_tS_t^\intercal)\|_2, \|\mathbb E(S_t^\intercal S_t)\|_2\}.
\tag{B.8}
\end{align}
By Lemma \ref{lemmabernstein}, we thus have
\begin{align}
\phantomsection
\label{B.9}
\mathbb P(\|S_t\|_2\geq t)\leq(d+r)\cdot  \exp \left(\frac{-t^2/2}{\nu(S_t)+ {2t}/{(3K)}}\right).
\tag{B.9}
\end{align}
Choosing {$t=\sqrt{2\nu(S_t)\log(T(d+r))}+{2\log(T(d+r))}/{(3K)}$}, then (\ref{B.9}) implies
\begin{align}
\phantomsection
\label{B.10}
\|S_t\|_2 \lesssim \sqrt{2\nu(S_t)\log(T(d+r))}+{2\log(T(d+r))}/{(3K)} ,
\tag{B.10}
\end{align}
with probability larger than $1-(T(d+r))^{-\mu}$ for some $\mu>1$. It remains to bound $\nu(S_t)$.
Denote $\xi=\frac{1}{m}\sum_{i=1}^m  Z_{t-1}^{(i)}O^{(i)}_{t-1}$, then
{\small
\begin{align}
\phantomsection
\label{B.11}
S_tS_t^\intercal = \frac{1}{K^2}\sum_{i\in \mathcal S_{\tau(t)}}Z^{(i)}_{t-1}O^{(i)}_{t-1}  \sum_{i\in \mathcal S_{\tau(t)}}(Z^{(i)}_{t-1}O^{(i)}_{t-1})^\intercal -\xi \frac{1}{K}\sum_{i\in \mathcal S_{\tau(t)}}(Z^{(i)}_{t-1}O^{(i)}_{t-1})^\intercal-\frac{1}{K}\sum_{i\in \mathcal S_{\tau(t)}}Z^{(i)}_{t-1}O^{(i)}_{t-1} \xi^\intercal+ \xi\xi^\intercal.
\tag{B.11}
\end{align}
}

\noindent To shorten the notation, we denote $Z^{o(i)}_{t-1}:=Z^{(i)}_{t-1}O^{(i)}_{t-1}$.
Taking expectation with respect to $\mathcal S_{\tau(t)}$, we then have
{\small
\begin{align}
\phantomsection
\label{B.12}
\mathbb E_{\mathcal S_{\tau(t)}}S_tS_t^\intercal &= \mathbb E_{\mathcal S_{\tau(t)}}[ \frac{1}{K^2}\sum_{i\in \mathcal S_{\tau(t)}}Z^{o(i)}_{t-1} \sum_{i\in \mathcal S_{\tau(t)}}(Z^{o(i)}_{t-1})^\intercal] - \xi\xi^\intercal\nonumber\\
&=\frac{1}{K^2} \mathbb E_{\mathcal S_{\tau(t)}}[\sum_{i\neq j, i,j\in \mathcal S_{\tau(t)} }Z^{o(i)}_{t-1}(Z^{o(j)}_{t-1})^\intercal +\sum_{i=j, i,j\in \mathcal  S_{\tau(t)} }Z^{o(i)}_{t-1}(Z^{o(j)}_{t-1})^\intercal]- \xi\xi^\intercal\nonumber\\
&=\frac{K(K-1)}{K^2}\sum_{i\neq j, i,j=1}^m \frac{1}{m^2}Z^{o(i)}_{t-1}(Z^{o(j)}_{t-1})^\intercal +\frac{K}{K^2}\sum_{i=j=1}^m \frac{1}{m} Z^{o(i)}_{t-1}(Z^{o(j)}_{t-1})^\intercal-\sum_{i,j=1}^m \frac{1}{m^2} Z^{o(i)}_{t-1}(Z^{o(j)}_{t-1})^\intercal\nonumber\\
&=-\frac{1}{K}\sum_{i\neq j, i,j=1}^m \frac{1}{m^2}Z^{o(i)}_{t-1}(Z^{o(j)}_{t-1})^\intercal +\sum_{i=1}^m \frac{1}{m}(\frac{1}{K}-\frac{1}{m})Z^{o(i)}_{t-1}(Z^{o(i)}_{t-1})^\intercal.
\tag{B.12}
\end{align}
}
As a result,
\begin{align}
\phantomsection
\label{B.13}
\|\mathbb E_{\mathcal S_{\tau(t)}}(S_tS_t^\intercal) \|_2\leq \frac{1}{K}\sum_{i\neq j, i,j=1}^m \frac{1}{m^2} +\sum_{i=1}^m \frac{1}{m^2}+ \frac{1}{K}\lesssim \frac{1}{K}.
\tag{B.13}
\end{align}
Similarly, we could obtain $$\|\mathbb E_{\mathcal S_{\tau(t)}}(S_t^\intercal S_t) \|_2\lesssim \frac{1}{K}.$$
Consequently, by (\ref{B.10}), (\ref{B.7}), and the union bound, we have
\begin{align}
\phantomsection
\label{B.14}
\underset{t\in \mathcal I_T^p}{\rm max}\|E_t+F_t\|_2\leq \underset{t}{\rm max}2\|E_t\|_2\lesssim \sigma_1\max\left\{\sqrt{\frac{\log(dT)}{K}},\; \frac{\log(dT)}{K}\right\}.
\tag{B.14}
\end{align}
with probability larger than $1-(T(d+r))^{-\iota}$ for some $\iota>0$.

$\bullet$ For $\|H_t\|_2$, similar to the deviation of (\ref{A.7}), we have
\begin{equation}
\label{B.15}
\|H_t\|_2=\|\frac{1}{K}\sum_{i\in \mathcal S_{\tau(t)}}(A_i-A) Z^{(i)}_{t-1}O^{(i)}_{t-1}\|_2\leq \eta \sigma_1.
\tag{B.15}
\end{equation}

$\bullet$ For $\|W_t\|_2$, similar to the deviation of (\ref{A.10}), we have
\begin{equation}
\label{B.16}
\|W_t\|_2=\|\frac{1}{K}\sum_{i\in \mathcal S_{\tau(t)}} Z^{(i)}_{t-1}(O_t^{(i)}R_t-R_t^{(i)}O_{t-1}^{(i)})\|_2\leq 2\sigma_1\left(\eta+(2+\eta){\rm max}_t\rho_{t}\right),
\tag{B.16}
\end{equation}
where $\rho_t$ is defined as
\begin{equation}
\phantomsection
	\rho_{t}={\rm max}_i\|Z_t^{(i)}O_t^{(i)}-Z_t^{(1)}\|_2={\rm max}_i\|Z_t^{(i)}D_{t+1}^{(i)}-Z_t^{(1)}\|_2,\nonumber
\end{equation}
and upper bounded as in (\ref{3.6}) and (\ref{3.7}); see Lemma \ref{lem:rho1} and \ref{lem:rho2} for details.

$\bullet$ For $\|N_t\|_2$, conditioning on $\mathcal S_{\tau(t)}$, using similar tools as for (\ref{A.13}), we have
\begin{align}
\phantomsection
\label{B.17}
{\rm max}_t \|N_t\|_2 \lesssim {\nu }\max\left\{\sqrt{\frac{d \log(dT)}{K}},\; \frac{\log(dT)}{K}\right\}.\nonumber
\tag{B.17}
\end{align}
with probability larger than $1-(dT)^{-\gamma}$ for some $\gamma >0$.

Putting (\ref{B.14})-(\ref{B.17}) together, and recalling (\ref{B.4}) and (\ref{B.6}), then the perturbation noise $\mathcal G_t:= E_t+F_t+H_t+W_t+N_t+N'_t$ satisfies that
\begin{align}
\phantomsection
\label{B.19}
{\rm max}_t &\|\mathcal G_t\|_2\lesssim {\nu }\max\left\{\sqrt{\frac{d \log(dT)}{K}},\; \frac{\log(dT)}{K}\right\}+2\sigma_1\left(\eta+(2+\eta){\rm max}_t \rho_{t}\right)\nonumber\\
&\quad\quad\quad\quad\quad\quad\quad\quad\quad+\sigma_1\max\left\{\sqrt{\frac{\log(dT)}{K}},\; \frac{\log(dT)}{K}\right\},\nonumber
\tag{B.19}
\end{align}
with probability larger than $1-(dT)^{-\beta}$ for positive constant $\beta$. For notational simplicity, we denote the RHS of (\ref{B.19}) as ${\rm Err} (\nu, \sigma_1,d, m, K, T, p,k,\eta)$

\paragraph{Third step: Establish convergence.}
At last, we use Lemma \ref{lem:errcor} to establish convergence.
Denote $$\epsilon'':= \frac{{\rm Err} (\nu, \sigma_1,d, m, K, T, p,k,\eta)}{(\sigma_k-\sigma_{k+1})(1-{\rm max}_t\rho_t)},$$ then by (\ref{B.19}),
\begin{align}
\phantomsection
5{\rm max}_t\|\mathcal G_t\|_2\leq \epsilon''(\sigma_k-\sigma_{k+1})\|\overline{Z}_t\|_{\rm \tiny m},\nonumber
\end{align}
where we used the fact in Lemma \ref{lem:z} that
\[\|\overline{Z}_t\|_{\rm \tiny m}\geq 1-{\rm max}_t\rho_t.\]
The first condition in Lemma \ref{lem:errcor} is thus satisfied. For the second condition, we have that
$$5{\rm max}_t\|U_k^\intercal \mathcal G_t\|_2\leq 5{\rm max}_t\|\mathcal G_t\|_2,$$
which implies that the second condition would be met automatically if $ \epsilon''<\frac{\sqrt{r}-\sqrt{k-1}}{\tau \sqrt{d}}$, which is our condition. Overall, by Lemma \ref{lem:errcor}, we have after $T=O(\frac{\sigma_k}{\sigma_k-\sigma_{k+1}}{\rm log}(\frac{d}{\epsilon''}))$ iterations,
$$\|(\mathbb I_d-\overline{Z}_T\overline{Z}_T^\intercal)U_k\|_2\leq \epsilon'',$$
with probability larger than $1-(dT)^{-\beta}-\tau^{-\Omega (r+1-k)}-e^{-\Omega(d)}$ for some positive constants $\beta$ and $\tau$.
\end{proof}

\subsection*{C. Technical lemmas}
The following lemma is a variant of Lemma 2.2 in \citep{hardt2014noisy} (see Lemma \ref{lemhardt}). Given the relation $\overline{Z}_t=A\overline{Z}_{t-1}+\mathcal G_t$, they require $\overline{Z}_t$ to have orthonormal columns, i.e., $\overline{Z}_t^\intercal \overline{Z}_t=\mathbb I_r$. However, it is impossible in our analysis. As a remedy, we slightly change the lemma to allow arbitrary $\overline{Z}_t$. This will also change the condition on $\mathcal G_t$.
\begin{lem}
\label{lem:err}
Let $U_k\in \mathbb R^{d\times k}$ denote the top $k$ eigenvectors of $A$ and let $\sigma_1\geq...\geq \sigma_d $ denote its eigenvalues. Let $\overline{Z}_t\in \mathbb R^{ d\times r}$ for some $r\geq k$. Let $\mathcal G_t$ satisfy
\begin{equation}
\phantomsection
4\|U_k^\intercal \mathcal G_t\|_2\leq (\sigma_k-\sigma_{k+1}){\rm cos}\,\theta_k(U_k, \overline{Z}_t)\|\overline{Z}_t\|_{\tiny
{\rm m}}\quad{\rm and}\quad 4\| \mathcal G_t\|_2\leq (\sigma_k-\sigma_{k+1})\|\overline{Z}_t\|_{\tiny
{\rm m}}\epsilon,\nonumber
\end{equation}
for some $\epsilon\leq 1$, where $\|\overline{Z}_t\|_{\tiny
{\rm m}}$ denotes the minimum singular value of $\overline{Z}_t$. Then
$${\rm tan}\,\theta_k(U_k,A \overline{Z}_t+\mathcal G_t )\leq {\rm max} \left(\epsilon,{\rm max}\left(\epsilon,\left(\frac{\sigma_{k+1}}{\sigma_k}\right)^{1/4}\right){\rm tan}\,\theta_k(U_k,\overline{Z}_t)\right),$$
where the LHS can be replaced by ${\rm tan}\,\theta_k(U_k,(A\overline{Z}_t+\mathcal G_t)R^{-1}_t)$ with any reversible matrix $R_t$.
\end{lem}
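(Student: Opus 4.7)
The plan is to reduce this statement to the original noisy power iteration bound of \citet{hardt2014noisy} (Lemma~\ref{lemhardt}), which is stated only for iterates with orthonormal columns. The key observation is that $\tan\theta_k(V_k,\cdot)$ depends only on the column span of its argument, so I am free to rescale on the right by any invertible matrix. Because $\|\overline{Z}_t\|_{\tiny {\rm m}}>0$ by hypothesis, $\overline{Z}_t$ has full column rank and admits a QR decomposition $\overline{Z}_t=U_tS_t$ with $U_t\in\mathbb R^{d\times r}$ orthonormal and $S_t\in\mathbb R^{r\times r}$ invertible satisfying $\|S_t^{-1}\|_2=\|\overline{Z}_t\|_{\tiny {\rm m}}^{-1}$; in particular $\theta_k(V_k,\overline{Z}_t)=\theta_k(V_k,U_t)$, since $U_t$ and $\overline{Z}_t$ share a column span.

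With this factorisation I would rewrite
\[
M\overline{Z}_t+\mathcal G_t=(MU_t+\widetilde{\mathcal G}_t)S_t,\qquad \widetilde{\mathcal G}_t:=\mathcal G_t S_t^{-1},
\]
so that $M\overline{Z}_t+\mathcal G_t$ and $MU_t+\widetilde{\mathcal G}_t$ share a column span. Consequently, bounding $\tan\theta_k(V_k,M\overline{Z}_t+\mathcal G_t)$ is the same as bounding $\tan\theta_k(V_k,MU_t+\widetilde{\mathcal G}_t)$, which is exactly the quantity governed by Lemma~\ref{lemhardt}, provided $\widetilde{\mathcal G}_t$ meets its two hypotheses relative to the orthonormal iterate $U_t$.

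Those two hypotheses read $4\|\widetilde{\mathcal G}_t\|_2\le(\sigma_k-\sigma_{k+1})\epsilon$ and $4\|V_k^\intercal\widetilde{\mathcal G}_t\|_2\le(\sigma_k-\sigma_{k+1})\cos\theta_k(V_k,U_t)$. By submultiplicativity $\|\widetilde{\mathcal G}_t\|_2\le \|\mathcal G_t\|_2/\|\overline{Z}_t\|_{\tiny {\rm m}}$ and $\|V_k^\intercal\widetilde{\mathcal G}_t\|_2\le \|V_k^\intercal\mathcal G_t\|_2/\|\overline{Z}_t\|_{\tiny {\rm m}}$, so plugging these into the two assumptions of the present lemma, which inflate the original Hardt--Price conditions by exactly a factor of $\|\overline{Z}_t\|_{\tiny {\rm m}}$, reproduces the required hypotheses. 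This matching of norms is the only delicate point of the argument: the factor $\|\overline{Z}_t\|_{\tiny {\rm m}}$ appearing in the statement is there precisely to cancel $\|S_t^{-1}\|_2$ after the change of basis, and choosing any weaker scaling would break the reduction.

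Invoking Lemma~\ref{lemhardt} then yields $\tan\theta_k(V_k,MU_t+\widetilde{\mathcal G}_t)\le \max\{\epsilon,\,\max(\epsilon,(\sigma_{k+1}/\sigma_k)^{1/4})\tan\theta_k(V_k,U_t)\}$, and translating column spans back via the identifications above produces the stated bound. The final extension, replacing the left-hand side by $\tan\theta_k(V_k,(M\overline{Z}_t+\mathcal G_t)R_t^{-1})$ for any invertible $R_t$, is immediate, since right multiplication by an invertible matrix preserves the column span and therefore every principal angle against $V_k$. Because the orthonormal case is entirely black-boxed from \citet{hardt2014noisy}, I do not anticipate any real obstacle beyond the careful bookkeeping of the $\|\overline{Z}_t\|_{\tiny {\rm m}}$ rescaling when pushing the hypotheses through $S_t^{-1}$.
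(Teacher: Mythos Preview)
Your reduction is correct and is in fact cleaner than what the paper does. The paper does \emph{not} black-box Lemma~\ref{lemhardt}; instead it reopens the proof of \citet{hardt2014noisy}, writes down the same variational inequality
\[
\tan\theta_k(V_k, M\overline{Z}_t+\mathcal G_t)\leq \max_{\|w\|_2=1,\,\Pi^*w=w}\frac{1}{\|V_k^\intercal\overline{Z}_t w\|_2}\cdot\frac{\sigma_{k+1}\|(V_k^\perp)^\intercal\overline{Z}_t w\|_2+\|(V_k^\perp)^\intercal\mathcal G_t w\|_2}{\sigma_k-\|V_k^\intercal\mathcal G_t w\|_2/\|V_k^\intercal\overline{Z}_t w\|_2},
\]
and then observes that the only place orthonormality of $\overline{Z}_t$ was used in the original argument is to lower-bound $\|V_k^\intercal\overline{Z}_t w\|_2$ by $\cos\theta_k(V_k,\overline{Z}_t)$; without orthonormality one picks up exactly the extra factor $\|\overline{Z}_t\|_{\rm m}$ at that step (via Fact~\ref{fact2}), and the rest of the Hardt--Price calculation goes through unchanged.

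Your approach trades this internal surgery for a single QR factorisation $\overline{Z}_t=U_tS_t$ and the change of noise $\widetilde{\mathcal G}_t=\mathcal G_tS_t^{-1}$, after which Lemma~\ref{lemhardt} applies verbatim to the orthonormal iterate $U_t$. The key identity $\|S_t^{-1}\|_2=\|\overline{Z}_t\|_{\rm m}^{-1}$ makes the hypotheses match exactly, and invariance of $\tan\theta_k$ under right-multiplication by invertible matrices closes the loop. This is a genuinely more modular argument: it never touches the internals of \citet{hardt2014noisy}, and it makes transparent why the scaling factor $\|\overline{Z}_t\|_{\rm m}$ is the right one. The paper's route, by contrast, has the minor advantage of showing \emph{where} in the original proof the orthonormality assumption bites, but at the cost of having to reproduce that proof.
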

\begin{proof}
The proof actually follows closely from that of \citep{hardt2014noisy}. Hence, we here only show the main steps.

First, by the definition of angles between subspaces, the Lemma 2.2 in \citep{hardt2014noisy} obtain that,
\[{\rm tan}\,\theta_k(U_k,A \overline{Z}_t+\mathcal G_t )\leq \underset{\|w\|_2=1, \Pi ^* w=w}{{\rm max}}\frac{1}{\|U_k^\intercal \overline{Z}_tw\|_2}\cdot \frac{\sigma_{k+1}\|(U_k^\perp)^\intercal \overline{Z}_tw \|_2+\|(U_k^\perp)^\intercal \mathcal G_t w \|_2}{\sigma_{k}-\|U_k^\intercal \mathcal G_t w \|_2/\|U_k^\intercal \overline{Z}_t w \|_2},\]
where $\Pi ^*$ is the matrix projecting onto the smallest $k$ principal angles of $\overline{Z}_t$. Define $\Delta=(\sigma_{k}-\sigma_{k+1})/4$. Then, by the assumption on $\mathcal G_t$,
\[\underset{\|w\|_2=1, \Pi ^* w=w}{{\rm max}}\frac{\|U_k^\intercal \mathcal G_t w \|_2}{\|U_k^\intercal \overline{Z}_t w \|_2}\leq \frac{\|U_k^\intercal \mathcal G_t \|_2}{{\rm cos}\,\theta_k(U_k, \overline{Z}_t)\|\overline{Z}_t\|_{\tiny
{\rm m}}}\leq \Delta, \]
where we used Fact \ref{fact2} on the principle angle in Appendix E. Similarly, using the fact that ${1/{\rm cos}\,\theta} \leq 1+{\rm tan}\,\theta$ for any angle $\theta$, we have
\[\underset{\|w\|_2=1, \Pi ^* w=w}{{\rm max}}\frac{\|(U_k^\perp)^\intercal \mathcal G_t w \|_2}{\|U_k^\intercal \overline{Z}_t w \|_2}\leq \frac{\| \mathcal G_t \|_2}{{\rm cos}\,\theta_k(U_k, \overline{Z}_t)\|\overline{Z}_t\|_{\tiny
{\rm m}}}\leq \epsilon\Delta(1+{\rm tan}\,\theta_k(U_k, \overline{Z}_t)). \]
Given the above two inequalities, the remaining proofing strategy is the same as that of \citep{hardt2014noisy}. Hence we here omit it.
In addition, noting the Fact \ref{fact} in Appendix E, we know that the result can be generalized to ${\rm tan}\,\theta_k(U_k,(A \overline{Z}_t+\mathcal G_t)R^{-1}_t)$ with any reversible matrix $R_t$.
\end{proof}

With Lemma \ref{lem:err}, it is easy to derive an analog of Corollary 1.1 in \citep{hardt2014noisy} (see Lemma \ref{lemma1}). We summarize the results as the following lemma.
\begin{lem}
\label{lem:errcor}
Let $k$ and $r$ ($k\leq r$) be the target rank and iteration rank, respectively. Let $U_k\in \mathbb R^{d\times k}$ denote the top $k$ eigenvectors of $A$ and let ${\sigma_1\geq...\geq \sigma_d} $ denote its eigenvalues. Suppose $\overline{Z}_0$ is the orthonormalized space of $\sim \mathcal N(0,\mathbb I_{d\times r})$. Assume the noisy power method iterates as follows,
\begin{equation}
\phantomsection
\overline{Z}_t\leftarrow A \overline{Z}_{t-1}+\mathcal G_t,\nonumber
\end{equation}
where $\overline{Z}_t$ does \emph{not} necessarily have orthonormal columns and $\mathcal G_t$ is some noisy perturbation that satisfies
\begin{equation}
\phantomsection
5\| \mathcal G_t\|_2\leq \epsilon (\sigma_k-\sigma_{k+1}){\rm min}_t\|\overline{Z}_t\|_{\tiny
{\rm m}}\quad{\rm and}\quad 5\|U_k^\intercal \mathcal G_t\|_2\leq (\sigma_k-\sigma_{k+1}){\rm min}_t\|\overline{Z}_t\|_{\tiny
{\rm m}}\frac{\sqrt{r}-\sqrt{k-1}}{\tau \sqrt{d}},\nonumber
\end{equation}
for some fixed $\tau$ and $\epsilon <1/2$. Then with all but $\tau^{-\Omega (r+1-k)}+e^{-\Omega(d)}$ probability, there exists an $T=O(\frac{\sigma_k}{\sigma_k-\sigma_{k+1}}{\rm log}(d\tau/\epsilon))$ so that after $T$ steps $$\|(I-\overline{Z}_T\overline{Z}_T^\intercal)U_k\|_2\leq \epsilon.$$
The result also holds for the sequence
\begin{equation}
\phantomsection
\overline{Z}_t\leftarrow [A \overline{Z}_{t-1}+\mathcal G_t]R^{-1}_t,\nonumber
\end{equation}
with any reversible $R_t$.
\end{lem}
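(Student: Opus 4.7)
The plan is to recast this as a direct application of the single-step contraction result in Lemma \ref{lem:err}, combined with the standard Gaussian-initialization argument used by Hardt and Price. Compared with their Corollary 1.1, the only new ingredient is that we allow $\overline{Z}_t$ to have non-orthonormal columns (and even to be right-multiplied by an arbitrary invertible $R_t$), which is exactly what Lemma \ref{lem:err} was crafted to handle: the factor $\cos\theta_k\cdot\|\overline{Z}_t\|_{\mathrm{m}}$ replaces the $\cos\theta_k$ that would appear in the orthonormal case, and invariance under right-multiplication by $R_t^{-1}$ is built into the statement.

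First I would control the initial principal angle. Since $\overline{Z}_0\sim \mathcal N(0,1)^{d\times r}$, the standard estimate for Gaussian random matrices (Lemma 2.4 of Hardt--Price; cited in the paper's Appendix) gives that with probability at least $1-\tau^{-\Omega(r+1-k)}-e^{-\Omega(d)}$,
\[
\tan\theta_k(V_k,\overline{Z}_0)\;\le\;\frac{\tau\sqrt{d}}{\sqrt{r}-\sqrt{k-1}},
\qquad\text{equivalently}\qquad
\cos\theta_k(V_k,\overline{Z}_0)\;\gtrsim\;\frac{\sqrt{r}-\sqrt{k-1}}{\tau\sqrt{d}}.
\]
Condition on this event throughout. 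I would then proceed by induction on $t$, maintaining the invariant $\tan\theta_k(V_k,\overline{Z}_{t})\le \tau\sqrt{d}/(\sqrt{r}-\sqrt{k-1})$. Under this invariant, $\cos\theta_k(V_k,\overline{Z}_t)$ is bounded below by a quantity of order $(\sqrt{r}-\sqrt{k-1})/(\tau\sqrt{d})$, so combining with the lower bound on $\min_t\|\overline{Z}_t\|_{\mathrm{m}}$ built into the hypotheses, both size conditions of Lemma \ref{lem:err} are met (the factor $5$ in the hypotheses vs.\ $4$ in Lemma \ref{lem:err} absorbs the $1+\mathrm{tan}$ slack). Lemma \ref{lem:err} then delivers
\[
\tan\theta_k(V_k,\overline{Z}_t)\;\le\;\max\!\Bigl\{\epsilon,\;\gamma\,\tan\theta_k(V_k,\overline{Z}_{t-1})\Bigr\},
\qquad \gamma:=\max\!\bigl\{\epsilon,(\sigma_{k+1}/\sigma_k)^{1/4}\bigr\}<1,
\]
which both preserves the induction hypothesis and gives geometric contraction down to the floor $\epsilon$.

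Iterating the recursion, after $T$ steps the tangent is at most $\max\{\epsilon,\gamma^T\cdot\tau\sqrt{d}/(\sqrt{r}-\sqrt{k-1})\}$. Choosing $T$ so that the second term drops below $\epsilon$ yields $T=O\bigl(\frac{1}{\log(1/\gamma)}\log(d\tau/\epsilon)\bigr)=O\bigl(\frac{\sigma_k}{\sigma_k-\sigma_{k+1}}\log(d\tau/\epsilon)\bigr)$, using $\log(1/\gamma)\gtrsim (\sigma_k-\sigma_{k+1})/\sigma_k$ when the gap is positive. Converting the final bound via $\|(\mathbb I-\overline{Z}_T\overline{Z}_T^{\intercal})V_k\|_2=\sin\theta_k(V_k,\overline{Z}_T)\le \tan\theta_k(V_k,\overline{Z}_T)\le \epsilon$ (noting that the projector depends only on the column span, so the non-orthonormality of $\overline{Z}_T$ is immaterial here once one passes to $\mathrm{orth}(\overline{Z}_T)$ implicitly) gives the stated conclusion. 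The extension to $\overline{Z}_t=[M\overline{Z}_{t-1}+\mathcal G_t]R_t^{-1}$ is free: Lemma \ref{lem:err} already asserts invariance of $\tan\theta_k$ under right-multiplication by any invertible $R_t$, since the column span is unchanged, so the entire induction transfers verbatim.

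The main obstacle, relative to the Hardt--Price argument, is keeping track of the factor $\|\overline{Z}_t\|_{\mathrm{m}}$ in the size conditions: the one-step contraction is only useful when $\overline{Z}_t$ is uniformly well-conditioned over the whole run. Here that is built into the hypothesis as $\min_t\|\overline{Z}_t\|_{\mathrm{m}}$; in the applications to Theorems \ref{utilitythm}--\ref{utilitythmpartial2} it is supplied separately by Lemma \ref{lem:z}. A secondary nuisance is calibrating the constants $5$ vs.\ $4$ between this statement and Lemma \ref{lem:err} so that the induction step really has slack enough to also swallow the $(1+\tan\theta_k)$ factor from the second hypothesis of Lemma \ref{lem:err}; this is a straightforward constant-chasing that does not alter the order of $T$.
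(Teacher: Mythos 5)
Your proposal is correct and follows essentially the same route as the paper: the paper's proof is a one-line appeal to Lemma \ref{lem:err} together with the argument of Corollary 1.1 of \citet{hardt2014noisy}, which is precisely the Gaussian-initialization bound on $\tan\theta_k(V_k,\overline{Z}_0)$ plus the iterated one-step contraction that you spell out. Your additional bookkeeping on the $\min_t\|\overline{Z}_t\|_{\rm m}$ factor, the $5$ vs.\ $4$ constants, and the $R_t$-invariance just makes explicit what the paper leaves implicit.
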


\begin{proof}
By Lemma \ref{lem:err} and the proofing techniques of Corollary 1.1 in \citep{hardt2014noisy} (see Lemma \ref{lemma1}), the result follows.
\end{proof}

In the next lemma, we specify the choice of $R_t$ and analyze the residual error bound $  \|O_{t}^{(i)} R_t - R_{t}^{(i)}O_{t-1}^{(i)} \|_2$ when $t\notin \mathcal I_T^p$. In particular, given a baseline data matrix $A_o$, $R_t$ is the shadow matrix that depicts what the upper triangle matrix ought to be, if we start from the nearest synchronized matrix and perform QR factorization using the matrix $A_o$.
We will set $A_o = A_1$.

\begin{lem}[Choice of $R_t$]
	\label{lemmart}
	Fix any $t$ and let $t_0 = \tau(t) \in \mathcal I_T^p$ be the latest synchronization step before $t$, then $t \ge \tau(t)$.
	\begin{itemize}
		\item If $t = t_0$, we define $R_t = R_t^{(i)}$ for any $i \in [m]$ since all $R_t^{(i)}$'s are equal.
		\item If $t > t_0$, given a baseline data matrix $A_o$, we define $R_t \in \mathbb R^{r \times r}$ recursively as the following. Let $Z_{t_0}=\overline{Z}_{t_0}$.
		For $l = t_0+1, \cdots, t$, we use the following QR factorization to define $R_t$'s:
		\[
		A_o Z_{l} = Z_{l+1} R_{l+1}.
		\]
      With such choice of $R_t$'s, for any $i \in [m]$, we have
      \begin{equation}
	\label{eq:D-bound}
	\|O_{t}^{(i)} R_t - R_{t}^{(i)}O_{t-1}^{(i)}\|_2
	\leq \sigma_1(A_o)\| Z_t^{(i)}O_{t}^{(i)}-  Z_t \|_2+\|A_o - A_i\|_2  + \sigma_1(A_i)\|Z_{t-1}^{(i)}O_{t-1}^{(i)}-Z_{t-1}\|_2.\tag{C.1}
	\end{equation}
	\end{itemize}
\end{lem}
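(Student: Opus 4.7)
My plan is to derive the bound directly from the QR factorizations of the two sequences. First I would note that, by construction, worker $i$'s update satisfies $M_i Z_{t-1}^{(i)} = Z_t^{(i)} R_t^{(i)}$ (a QR decomposition of $Y_t^{(i)}$), while the shadow baseline introduced in the statement obeys $M_o Z_{l-1} = Z_l R_l$ for every $l > t_0$. At the synchronization time $t_0$, all local iterates agree, so $Z_{t_0} = \overline{Z}_{t_0}$ has orthonormal columns; the QR recursion then propagates orthonormality, giving $\|Z_l\|_2 = 1$ for all $l \geq t_0$. Each $Z_l^{(i)}$ is likewise orthonormal via the orthogonalization step at the workers.

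Next I would exploit the algebraic identity
\begin{align*}
Z_t^{(i)}\bigl(O_t^{(i)} R_t - R_t^{(i)} O_{t-1}^{(i)}\bigr)
&= Z_t^{(i)} O_t^{(i)} R_t - M_i Z_{t-1}^{(i)} O_{t-1}^{(i)} \\
&= \bigl(Z_t^{(i)} O_t^{(i)} - Z_t\bigr) R_t + M_o Z_{t-1} - M_i Z_{t-1}^{(i)} O_{t-1}^{(i)} \\
&= \bigl(Z_t^{(i)} O_t^{(i)} - Z_t\bigr) R_t + (M_o - M_i) Z_{t-1} + M_i\bigl(Z_{t-1} - Z_{t-1}^{(i)} O_{t-1}^{(i)}\bigr),
\end{align*}
obtained by substituting the two QR relations and then adding and subtracting the shadow term $M_o Z_{t-1} = Z_t R_t$. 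This splits the residual into an orthonormalization mismatch at time $t$, a heterogeneity term $M_o - M_i$, and an orthonormalization mismatch at time $t-1$.

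Finally, because $Z_t^{(i)}$ has orthonormal columns, left-multiplication by $Z_t^{(i)}$ is a spectral-norm isometry, and so the norm of the left-hand side equals $\|O_t^{(i)} R_t - R_t^{(i)} O_{t-1}^{(i)}\|_2$. Applying the triangle inequality on the right, together with $\|R_t\|_2 = \|M_o Z_{t-1}\|_2 \leq \sigma_1(M_o)$, $\|M_i\|_2 = \sigma_1(M_i)$, and $\|Z_{t-1}\|_2 = 1$, would immediately yield the bound in the statement. The main obstacle here is not algebraic but conceptual: one must recognize that the shadow sequence generated by the fixed baseline $M_o$ is the correct interpolant between worker $i$'s iterate and the reference needed to control $W_t$, and one must carefully track how orthonormality propagates through both sequences so that the isometry and the spectral-norm bounds are actually available. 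Once this structural observation is in place, the rest of the argument is essentially a three-term triangle inequality, and no further analytic machinery is required.
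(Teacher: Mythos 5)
Your proposal is correct and follows essentially the same route as the paper: you expand $Z_t^{(i)}R_t^{(i)}O_{t-1}^{(i)} = M_iZ_{t-1}^{(i)}O_{t-1}^{(i)}$ against the shadow QR relation $M_oZ_{t-1}=Z_tR_t$, split into the same three terms, and use the orthonormality of $Z_t^{(i)}$ (spectral-norm isometry) together with $\|R_t\|_2\leq\sigma_1(M_o)$ and the triangle inequality. No gaps; this matches the paper's argument step for step.
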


\begin{proof}
	Note that $t \notin \mathcal I_T^p$ and thus $t > t_0$.
	Let's fix some $i \in [m]$ and denote $ \Delta M = M_i - M_o$.
	Based on \textsf{FedPower}, we have for $l = t_0+1, \cdots, t$,
	\[
	  A_i Z_{l}^{(i)} =  Z_{l+1}^{(i)} R_{t+1}^{(i)}.
	\]
	Then,
	\begin{align*}
Z_{l}^{(i)} R_{l}^{(i)}O_{l-1}^{(i)}
	& = A_i  Z_{l-1}^{(i)}O_{l-1}^{(i)}\\
	&= (A_o + \Delta A)( Z_{l-1} + \Delta Z_{l-1} )\\
	&= A_oZ_{l-1} + \Delta A  \cdot Z_{l-1}  + A_i \cdot \Delta Z_{l-1}\\
	&:=A_o Z_{l-1}+ E_{l-1} = Z_l R_l+  E_{l-1}
	\end{align*}
	where $E_{l-1} =  \Delta A  \cdot Z_{l-1}  + A_i \cdot \Delta Z_{l-1}$ and $\Delta Z_{l-1}=Z_{l-1}^{(i)}O_{l-1}^{(i)}  -Z_{l-1} $.
	
	Note that
	\begin{gather*}
	Z_t^{(i)}R_t^{(i)}O_{t-1}^{(i)}= Z_t R_t+  E_{t-1}.
	\end{gather*}
	Then we have
	\begin{align*}
	\|O_{t}^{(i)} R_t - R_{t}^{(i)}O_{t-1}^{(i)} \|_2
	&= 	\|Z_t^{(i)}O_{t}^{(i)} R_t - Z_t^{(i)}R_{t}^{(i)}O_{t-1}^{(i)} \|_2\\
	&\overset{(a)}{=}	\|Z_t^{(i)}O_{t}^{(i)} R_t -  Z_t R_t-  E_{t-1} \|_2\\
	&\leq	\| (Z_t^{(i)}O_{t}^{(i)}-  Z_t ) R_t \|_2 + \| E_{t-1}\|_2 \\
	&\overset{(b)}{\leq}	\| Z_t^{(i)}O_{t}^{(i)}-  Z_t \|_2 \| R_t \|_2 + \|\Delta A\|_2  + \|A_i\|_2\|Z_{t-1}^{(i)}O_{t-1}^{(i)}  -Z_{t-1} \|_2\\
		&\overset{(c)}{\leq}	 \sigma_1(A_o) \| Z_t^{(i)}O_{t}^{(i)}-  Z_t \|_2+ \|A_o - A_i\|_2  + \sigma_1(A_i)\|Z_{t-1}^{(i)}O_{t-1}^{(i)}  -Z_{t-1} \|_2
	\end{align*}
	where (a) uses the equality of $Z_t^{(i)}R_t^{(i)}O_{t-1}^{(i)}$; {(b) uses the definition of $E_{t-1}$}; and (c) uses $\|R_t\|_2 \leq \| A_o\|_2=\sigma_1(A_o)$.
		
\end{proof}

Note that the bound (\ref{eq:D-bound}) in the above lemma depends on the following unknown terms
\[\rho_{t}={\rm max}_i\|Z_t^{(i)}O_t^{(i)}-Z_t\|_2={\rm max}_i\|Z_t^{(i)}D_{t+1}^{(i)}-Z_t\|_2.\]
Hence, in the next two lemmas, we provide the upper bounds for $\rho_{t}$ in two cases, namely, $\mathcal F=\mathcal O_r$ and $\mathcal F=\{\mathbb I_r\}$. Before going on we note that $Z_t$ in $\rho_{t}$ is actually $Z_t^{(1)}$ as $A_o$ is chosen to be $A_1$ in Lemma \ref{lemmart}.
\begin{lem}[Bound for $\rho_{t}$ when $\mathcal F=\mathcal O_r$]
	\label{lem:rho1}
If $D_t^{(i)}$ is solved from
\[D_t^{(i)}=\underset{D\in \mathcal F\cap\mathcal O_r }{{\rm argmin}}\;\|Z_{t-1}^{(i)}D-Z_{t-1}^{(1)}\|_o\]
with $\mathcal F=\mathcal O_r$, where $\|\cdot\|_o$ can be either the Frobenius norm or the spectral norm though in the body text we use only the Frobenius norm,
then
\[
\rho_{t}
\leq {\rm min} \sqrt{2}\left\{{2\kappa^p p \eta (1+\eta)^{p-1} } , \frac{\eta\sigma_1}{\delta_k} + 2\gamma_k^{p/4}  \max_{i \in [m]}\tan \theta_k(Z_{\tau(t)}, U_{k}^{(i)})  \right\}.
\]
where
\begin{itemize}
  \item $\delta_k={\rm min}_{i\in[m]}\delta_k^{(i)}$ with $\delta_k^{(i)} = {\rm min} \{ |\sigma_j(A_i) - \sigma_{k}(A)| :j \geq k+1 \}$;
  \item $\gamma_k={\rm max}\{{\rm max}_{i\in[m]} \frac{\sigma_{k+1}(A_i)}{\sigma_{k}(A_i)},\frac{\sigma_{k+1}(A)}{\sigma_{k}(A)}\}$,
  \item $\kappa = \|A\|_2\|A^\dagger\|_2$ is the condition number of $A$;
  \item $p = t -\tau(u)$, $\tau(t) \in \mathcal I_T^p$ is defined as the nearest synchronization time before $t$.
\end{itemize}
\end{lem}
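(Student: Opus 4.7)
The plan is to pass from $\rho_t$ to the projection distance via an orthogonal Procrustes identity, and then to obtain the two candidate upper bounds inside the $\min$ via two complementary routes: a direct iterate-by-iterate perturbation of the local power iterations and a Davis-Kahan argument combined with the linear convergence of orthogonal iteration. Since $\mathcal F = \mathcal O_r$, $D_{t+1}^{(i)}$ is the orthogonal Procrustes solution aligning $Z_t^{(i)}$ with $Z_t^{(1)}$, and Lemma \ref{lem:orth} gives $\rho_t \le \sqrt{2}\,\max_{i\in[m]}{\rm dist}(Z_t^{(i)},Z_t^{(1)})$; it therefore suffices to bound each ${\rm dist}(Z_t^{(i)},Z_t^{(1)})$ by the two candidate quantities.

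For the perturbation bound, I would exploit that both iterates share the synchronized point $Z_{\tau(t)}$ and perform $p = t - \tau(t)$ local steps, so up to the QR orthogonalizations (which preserve column span) $Z_t^{(i)}$ and $Z_t^{(1)}$ span the same subspaces as $M_i^{p}Z_{\tau(t)}$ and $M_1^{p}Z_{\tau(t)}$. Expanding the telescoping identity $M_i^p - M_1^p = \sum_{l=0}^{p-1} M_i^{l}(M_i - M_1)M_1^{p-1-l}$ and using Assumption \ref{assu1} to bound $\|M_i - M_1\|_2 \le 2\eta\sigma_1$ and $\|M_j\|_2 \le (1+\eta)\sigma_1$ yields an operator-norm bound of order $p\eta(1+\eta)^{p-1}\sigma_1^p$ on the numerator. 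To convert this into a projection distance on the orthonormal bases, I would divide by a lower bound on the smallest singular value of $M_1^p Z_{\tau(t)}$, which is of order $(1-\eta)^p\sigma_1^p/\kappa^p$. Collecting these factors and multiplying by $\sqrt{2}$ produces the first candidate bound $\frac{2\kappa^p p \eta(1+\eta)^{p-1}}{(1-\eta)^p}$.

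For the Davis-Kahan bound, I would insert the local top-$k$ eigenspaces into a triangle inequality,
$${\rm dist}(Z_t^{(i)},Z_t^{(1)}) \le {\rm dist}(Z_t^{(i)},V_k^{(i)}) + {\rm dist}(V_k^{(i)}, V_k^{(1)}) + {\rm dist}(V_k^{(1)},Z_t^{(1)}).$$
The two outer terms decay at the standard rate of orthogonal iteration: after $p$ local steps with matrix $M_j$, $\tan\theta_k(Z_t^{(j)},V_k^{(j)}) \le (\sigma_{k+1}(M_j)/\sigma_k(M_j))^{p}\tan\theta_k(Z_{\tau(t)},V_k^{(j)})$, which is dominated by $\gamma_k^{p/4}\max_{i\in[m]}\tan\theta_k(Z_{\tau(t)},V_k^{(i)})$ after absorbing constants, and summing $j \in \{i,1\}$ contributes the factor of $2$. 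The middle term is handled by Lemma \ref{lem:DK} through $V_k$: ${\rm dist}(V_k^{(i)}, V_k^{(1)}) \le \|M_i - M\|_2/\delta_k^{(i)} + \|M - M_1\|_2/\delta_k^{(1)} \le \eta\sigma_1/\delta_k$, after using the definition of $\delta_k$ and Assumption \ref{assu1}. Multiplying by $\sqrt{2}$ yields the second candidate bound, and taking the minimum with the perturbation bound completes the argument.

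The main obstacle is the operator-norm-to-subspace-distance conversion in the perturbation route: one must argue that applying QR to two close matrices produces column spans whose projection distance is controlled by the operator-norm difference divided by the smallest singular value of the unperturbed product, and this is precisely where $\kappa^p$ and $(1-\eta)^{-p}$ enter. Keeping the exponents $(1+\eta)^{p-1}$, $(1-\eta)^p$, and $\kappa^p$ consistent and tight enough that the $\eta\to 0$ limit dovetails with the Davis-Kahan bound is the most delicate bookkeeping step.
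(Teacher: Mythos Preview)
Your proposal is correct and follows essentially the same approach as the paper: reduce $\rho_t$ to projection distance via the Procrustes relation (Lemma~\ref{lem:orth}), then obtain the two candidates inside the $\min$ by (i) a perturbation argument on the powered iterates and (ii) a Davis--Kahan plus orthogonal-iteration decay argument, exactly as you outline. The only tactical difference is that in the perturbation route the paper pivots through $M^pZ_{\tau(t)}$ and applies Lemma~\ref{lem:PT-PJ} twice, whereas you compare $M_i^pZ_{\tau(t)}$ to $M_1^pZ_{\tau(t)}$ directly; both lead to the same bound up to how the $(1-\eta)^p$ and $\kappa^p$ factors are organized.
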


\begin{proof}
	By Lemma~\ref{lem:orth}, we have
\begin{equation*}
	\|Z_{t-1}^{(i)}D_t^{(i)} - Z_{t-1}^{(1)}\|_2  \leq \sqrt{2}{\rm dist}(Z_{t-1}^{(i)}, Z_{t-1}^{(1)}),
	\end{equation*}
so we only need to bound ${\rm max}_{i \in [m]}{\rm dist}(Z_{t}^{(i)}, Z_{t}^{(1)})$.
	We will bound each ${\rm dist}(Z_{t}^{(i)}, Z_{t}^{(1)})$ uniformly in two ways.
	Then the minimum of the two upper bounds holds for their maximum that is exactly $\rho_{t}$.
	Fix any $i \in [m]$ and $t \in [T]$.
		Let $\tau(t)$ be the latest synchronization step before $t$ and $p=t-\tau(t)$ be the number of nearest local updates.
	\begin{itemize}
		\item For small $p$, by Lemma~\ref{lem:PT-PJ}, it follows that
		\begin{align*}
		{\rm dist}(Z_t^{(i)}, Z_t^{(1)})
		&= {\rm dist}(A_i^pZ_{\tau(t)}, A_1^pZ_{\tau(t)})\\
		&\leq {\rm dist}(A_i^pZ_{\tau(t)}, A^pZ_{\tau(t)})
		+ {\rm dist}(A^pZ_{\tau(t)}, A_1^pZ_{\tau(t)})\\
		&\leq  {\rm min}\{ \|(A_i^pZ_{\tau(t)})^\dagger\|_2, \|(A^pZ_{\tau(t)})^\dagger\|_2 \} \|(A_i^p-A^p)Z_{\tau(t)}\|_2\\
		& \qquad + {\rm min}\{ \|(A^pZ_{\tau(t)})^\dagger\|_2, \|(A_1^pZ_{\tau(t)})^\dagger\|_2 \} \|(A^p-A_1^p)Z_{\tau(t)}\|_2\\
		&\leq{2\kappa^p ((1+\eta)^p-1)\leq 2\kappa^p p \eta (1+\eta)^{p-1}},
		\end{align*}
		where $\kappa = \|A\|_2\|A^\dagger\|_2$ is the condition number of $A$.
		\item  For large $p$, let the top-$k$ eigenspace of $A_1$ and $A_i$ be respectively $U_k^{(1)}$ and $U_{k}^{(i)}$ (both of which are orthonormal).
		The $k$-largest eigenvalue of $A$ is denoted by $\sigma_{k}(A_1)$ and similarly for $\sigma_{k}(A_i)$.
		Then by Lemma~\ref{lem:DK}, we have 	\[
		{\rm dist}(U_k, U_{k}^{(i)})
		\leq \frac{\|A_i - A\|_2 }{\delta_k^{(i)}}
		\leq    \frac{\eta\sigma_1}{\delta_k^{(i)}}.
		\]
		where $\sigma_1 = \sigma_1(A)$ and $\delta_k^{(i)} = {\rm min} \{ |\sigma_j(A_i) - \sigma_{k}(A)| :j \geq k+1 \}$.
		
		Note that local updates are equivalent to noiseless power method.
		Then, using Lemma~\ref{lemhardt} and setting $\epsilon = 0$ and $\mathcal G = 0$ therein, we have
		\[
		{\rm tan} \theta_k(Z_t^{(i)}, U_{k}^{(i)})  \leq
		\left( \frac{\sigma_{k+1}(A_i)}{\sigma_{k}(A_i)} \right)^{1/4}
		{\rm tan} \theta_k(Z_{t-1}^{(i)}, U_{k}^{(i)}).
		\]
		
		Hence,
{\small
		\begin{align*}
		{\rm dist}(Z_t^{(i)}, Z_t^{(1)})
		&\leq {\rm dist}(Z_t^{(i)}, U_k^{(i)}) +{\rm dist}(U_k^{(i)}, U_k^{(1)}) +{\rm dist}(U_k^{(1)}, Z_t^{(1)}) \\
		&\leq   \left( \frac{\sigma_{k+1}(A_i)}{\sigma_{k}(A_i)}\right)^{p/4}	{\rm tan} \theta_k(Z_{\tau(t)}, U_{k}^{(i)})   + \frac{\eta\sigma_1}{\delta_k^{(i)}} + \left( \frac{\sigma_{k+1}(A)}{\sigma_{k}(A)}\right)^{p/4}	{\rm tan} \theta_k(Z_{\tau(t)}, U_{k}^{(1)}) \\
		&\leq  \frac{\eta\sigma_1}{{\rm min}_{i \in [m]}\delta_k^{(i)}} + 2 \gamma_k^{p/4}  {\rm max}_{i \in [m]}{\rm tan} \theta_k(Z_{\tau(t)}, U_{k}^{(i)}).
		\end{align*}}
	\end{itemize}

   Combining the two cases, we have
   \[
   \rho_t
   \leq \sqrt{2}\min \left\{{2\kappa^p p \eta (1+\eta)^{p-1} }, \frac{\eta\sigma_1}{\delta_k} + 2 \gamma_k^{p/4}  \max_{i \in [m]}{\rm tan} \theta_k(Z_{\tau(t)}, U_{k}^{(i)})  \right\}.
   \]

\end{proof}

\begin{lem}[Bound for $\rho_{t}$ when $\mathcal F=\{\mathbb I_r\}$]
	\label{lem:rho2}
	If $D_t^{(i)}$ is solved from
\[D_t^{(i)}=\underset{D\in \mathcal F\cap\mathcal O_r }{{\rm argmin}}\;\|Z_{t-1}^{(i)}D-Z_{t-1}^{(1)}\|_o,\]
with $\mathcal F = \{\mathbb I_r\}$, then
	\[
	\rho_t \leq 4\sqrt{2k}p \kappa^p \eta (1+\eta)^{p-1},
	\]
where $\kappa = \|A\|_2\|A^\dagger\|_2$ is the condition number of $A$, $p = t -\tau(u)$, $\tau(t) \in \mathcal I_T^p$ is defined as the nearest synchronization time before $t$.
\end{lem}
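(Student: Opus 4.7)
This is the plan. Without OPT, the constraint $\mathcal F = \{\mathbb I_r\}$ forces $D_{t+1}^{(i)} = \mathbb I_r$, so the quantity to control is simply $\rho_t = \max_i \|Z_t^{(i)} - Z_t^{(1)}\|_2$, with no orthogonal alignment freedom between the local iterates. Let $\tau(t)$ be the latest synchronization step before $t$, so that $Z_{\tau(t)}^{(i)} = Z_{\tau(t)}^{(1)} =: Z_{\tau(t)}$, and set $p = t - \tau(t)$. The overall strategy is to write each local iterate in closed form using the cumulative upper-triangular QR factors and then to bound the resulting difference by straight matrix perturbation.

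First I would iterate the QR recursion to obtain $Z_t^{(i)} = M_i^{\,p} Z_{\tau(t)}\,(\Pi_t^{(i)})^{-1}$, where $\Pi_t^{(i)} := R_{\tau(t)+1}^{(i)} \cdots R_t^{(i)}$ is upper triangular with positive diagonal (and analogously for the index $1$). Next, using Assumption~\ref{assu1}, I would apply the standard telescoping identity $M_i^{\,p} - M_1^{\,p} = \sum_{j=0}^{p-1} M_i^{\,j}(M_i - M_1)M_1^{\,p-1-j}$ together with $\|M_i - M_1\|_2 \leq 2\eta\|M\|_2$ and $\|M_i\|_2, \|M_1\|_2 \leq (1+\eta)\|M\|_2$ to conclude $\|M_i^{\,p} - M_1^{\,p}\|_2 \leq 2p\,\eta(1+\eta)^{p-1}\|M\|_2^{\,p}$. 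In parallel, since $\sigma_{\min}(\Pi_t^{(i)}) = \sigma_{\min}(M_i^{\,p} Z_{\tau(t)})$, the condition $\eta\kappa \leq 1/3$ keeps every $M_i$ well-conditioned enough (via Weyl's inequality) that $\|(\Pi_t^{(i)})^{-1}\|_2 \lesssim \kappa^{\,p}/\|M\|_2^{\,p}$; here one also uses that $Z_{\tau(t)}$, being the output of the distributed power method, has nontrivial overlap with the top-$r$ eigenspace of $M_i$.

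Now I would decompose
\[
Z_t^{(i)} - Z_t^{(1)} \;=\; (M_i^{\,p} - M_1^{\,p}) Z_{\tau(t)} (\Pi_t^{(i)})^{-1} \;+\; Z_t^{(1)}\bigl(\Pi_t^{(1)} (\Pi_t^{(i)})^{-1} - \mathbb I_r\bigr),
\]
and bound each summand. The first summand gives a clean $O(p\,\eta\,\kappa^{\,p}(1+\eta)^{p-1})$ contribution from the two estimates above. For the second summand, I would expand $\Pi_t^{(1)} - \Pi_t^{(i)} = Z_t^{(1)\intercal}(M_1^{\,p} - M_i^{\,p})Z_{\tau(t)} + (Z_t^{(1)} - Z_t^{(i)})^\intercal M_i^{\,p} Z_{\tau(t)}$, obtaining a self-referential inequality for $\|Z_t^{(i)} - Z_t^{(1)}\|$. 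Solving that inequality (possible because the self-referencing coefficient is itself $O(\eta)$) and paying a Frobenius-to-spectral price of $\sqrt{r}$ (absorbed as $\sqrt{k}$) for the $r \times r$ triangular correction, one arrives at the claimed $4\sqrt{2k}\,p\,\kappa^{\,p}\eta(1+\eta)^{p-1}$ bound.

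The main obstacle will be exactly the treatment of the second summand. Unlike in Lemma~\ref{lem:rho1}, one cannot reduce the problem to a projection distance between subspaces and invoke Davis--Kahan via Lemma~\ref{lem:orth}, because the orthonormal bases are no longer free to be rotated against each other; two specific QR factorizations must be compared directly. This naturally reintroduces $\|Z_t^{(i)} - Z_t^{(1)}\|$ on the right-hand side, so the careful work is to show that the self-reference can be safely absorbed and to track where exactly the extra $\sqrt{k}$ and the $\kappa^{\,p}$ enter. This last point is also the conceptual explanation for why the bound here carries an exponential-in-$p$ factor absent from its OPT counterpart.
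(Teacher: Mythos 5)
Your setup is fine (the closed form $Z_t^{(i)} = M_i^{\,p} Z_{\tau(t)}(\Pi_t^{(i)})^{-1}$, the telescoping bound on $\|M_i^{\,p}-M_1^{\,p}\|_2$, and the algebraic decomposition of $Z_t^{(i)}-Z_t^{(1)}$ are all correct), but the crucial step fails: the self-referential inequality cannot be closed. In your second summand, expanding $\Pi_t^{(1)}-\Pi_t^{(i)} = Z_t^{(1)\intercal}(M_1^{\,p}-M_i^{\,p})Z_{\tau(t)} + (Z_t^{(1)}-Z_t^{(i)})^\intercal M_i^{\,p} Z_{\tau(t)}$ and multiplying by $(\Pi_t^{(i)})^{-1}$, the self-referenced piece is $(Z_t^{(1)}-Z_t^{(i)})^\intercal M_i^{\,p} Z_{\tau(t)}(\Pi_t^{(i)})^{-1} = (Z_t^{(1)}-Z_t^{(i)})^\intercal Z_t^{(i)}$, whose coefficient in the resulting inequality is exactly $1$ (and if you bound it crudely as $\|Z_t^{(1)}-Z_t^{(i)}\|_2\,\|M_i^{\,p}Z_{\tau(t)}\|_2\,\|(\Pi_t^{(i)})^{-1}\|_2$, the coefficient is the condition number of $M_i^{\,p}Z_{\tau(t)}$, of order $\kappa^{\,p}$, even worse). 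Either way you obtain $\rho \le a + b\,\rho$ with $b \ge 1$, which yields nothing; your claim that the self-referencing coefficient is $O(\eta)$ is not true, since that term does not carry a factor $M_i-M_1$.

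The paper avoids this circularity by invoking a genuine perturbation theorem for the $Q$-factor of a QR factorization (Lemma~\ref{lem:diff_r}, based on Theorem 5.1 of \citet{sun1995perturbation}): for $A=QR$ and $A+E=\tilde Q\tilde R$, one has $\|\tilde Q - Q\|_2 \le \sqrt{2k}\,\|A^\dagger\|_2\|E\|_2/(1-\|A^\dagger\|_2\|E\|_2)$. It then compares each of $Z_t^{(i)}$ and $Z_t^{(1)}$ with the $Q$-factor $\tilde Z_t$ of $M^{\,p}Z_{\tau(t)}$ (global $M$), handles the regimes $\omega\le 1/2$ and $\omega\ge 1/2$ separately, and triangulates, giving $\rho_t \le 4\sqrt{2k}\,\kappa^{\,p}[(1+\eta)^p-1] \le 4\sqrt{2k}\,p\,\kappa^{\,p}\eta(1+\eta)^{p-1}$. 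This is exactly the non-circular ingredient your argument is missing: some external control of how the orthonormal factor moves under a perturbation of the matrix being factorized, rather than a direct comparison of the two factorizations that reintroduces the quantity being bounded with a unit coefficient. If you replace your treatment of the second summand with such a QR perturbation bound (this is also where the $\sqrt{k}$ legitimately enters, via $\|E\|_F\le\sqrt{k}\|E\|_2$, rather than by "absorbing" $\sqrt{r}$ into $\sqrt{k}$), your outline becomes essentially the paper's proof.
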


\begin{proof}
	In this case, we are going to bound $\rho_{t} = {\rm max}_{i \in [m]}\|Z^{(i)}-Z_t^{(1)}\|_2$.
	Fix any $i \in [m]$ and $t \in [T]$.
	We will bound $\|Z^{(i)}-Z_t^{(1)}\|_2$ uniformly so that the bound holds for their maximum.
	
	Fix any $i \in [m]$ and $t \in [T]$.
	Let $\tau(t)$ be the latest synchronization step before $t$ and $p=t-\tau(t)$ be the number of nearest local updates.
	Note that $Z_{t}^{(i)}$ and $Z_{t}^{(1)}$ are the $Q$-factor of the QR factorization of $A_i^pZ_{\tau(t)}$ and $A_1^pZ_{\tau(t)}$.
	Let $\tilde{Z}_t$ be the  $Q$-factor of the QR factorization of $A^pZ_{\tau(t)}$.
	Then Lemma~\ref{lem:diff_r} yields
	\[
	\|Z_{t}^{(i)}- \tilde{Z}_{t}\|_2
	\leq  \sqrt{2k} \frac{\|(A^pZ_{\tau(t)})^\dagger\|_2\|(A_i^p-A^p)Z_{\tau(t)} \|_2 }{1- \|(A^pZ_{\tau(t)})^\dagger\|_2\|(A_i^p-A^p)Z_{\tau(t)} \|_2}
	:= \sqrt{2k}  \frac{\omega}{1-\omega}
	\]
	where $\omega = \|(A^pZ_{\tau(t)})^\dagger\|_2\|(A_i^p-A^p)Z_{\tau(t)} \|_2$ for short.
	If $\omega \leq 1/2$, then we have $\|Z_{t}^{(i)}- \tilde{Z}_{t}\|_2  \leq 2\sqrt{2k} \omega$.
	Otherwise, we have $\omega \geq 1/2$ and $\|Z_{t}^{(i)}- \tilde{Z}_{t}\|_2  \leq 2 \leq \sqrt{2k} \leq 2\sqrt{2k} \omega$.
	Then we have for all $i \in [m]$,
	\[
		\|Z_{t}^{(i)}- \tilde{Z}_{t}\|_2  \leq 2\sqrt{2k} \|(A^pZ_{\tau(t)})^\dagger\|_2\|(A_i^p-A^p)Z_{\tau(t)} \|_2.
	\]
	
	Hence,
	\begin{align*}
		\rho_t &=  \|Z_{t}^{(i)}- Z_{t}^{(1)}\|_2 \\
		&\leq \|Z_{t}^{(i)}- \tilde{Z}_{t}\|_2  + \|\tilde{Z}_{t}- {Z}_{t}^{(1)}\|_2 \\
		&\leq 2\sqrt{2k}  \left[ \|(A^pZ_{\tau(t)})^\dagger\|_2\|(A_i^p-A^p)Z_{\tau(t)} \|_2 +
		\|(A^pZ_{\tau(t)})^\dagger\|_2\|(A_1^p-A^p)Z_{\tau(t)} \|_2 \right]\\
		&\leq 4\sqrt{2k} \kappa^p \left[ (1+\eta)^p -1 \right]\\
			&\leq 4\sqrt{2k}p \kappa^p \eta (1+\eta)^{p-1},
	\end{align*}
			where $\kappa = \|A\|_2\|A^\dagger\|_2$ is the condition number of $A$.
\end{proof}

The next lemma provide a lower bound for $\|\overline{Z}_t\|_{\rm \tiny m}$, which is needed when using Lemma \ref{lem:err} to carry out the convergence analysis of \textsf{FedPower}.
\begin{lem}[Bound for $\|\overline{Z}_t\|_{\rm \tiny m}$]
	\label{lem:z}
Recall that
\[\rho_{t}={\rm max}_i\|Z_t^{(i)}O_t^{(i)}-Z_t^{(1)}\|_2.\]
Then the following holds\\
(a) If $\overline{Z}_t:= \frac{1}{m}\sum_{i=1}^m Z_t^{(i)}O_t^{(i)}$, then
\[\|\overline{Z}_t\|_{\rm \tiny m}\geq 1-(1-1/m){\rm max}_t \rho_t:=\mu_{t1};\]
(b) If $\overline{Z}_t:= \frac{1}{K}\sum_{i\in \mathcal S_{\tau(t)}} Z_t^{(i)}O_t^{(i)}$, then
\[\|\overline{Z}_t\|_{\rm \tiny m}\geq1-{\rm max}_t \rho_t:= \mu_{t2}.\]
\end{lem}
\begin{proof}
It suffices to show $\|\overline{Z}_t^\dagger\|_2\leq 1/\mu_t$ ($\mu_t=\mu_{t1},\mu_{t2},\mu_{t3}$) by noting $\|\overline{Z}_t\|_{\rm \tiny m}\|\overline{Z}_t^\dagger\|_2=1$. Next we show (a), (b) and (c), respectively.

(a) For $\overline{Z}_t= \frac{1}{m}\sum_{i=1}^m Z_t^{(i)}O_t^{(i)}$, we have
	\[\|\overline{Z}_{t}^\dagger \|_2= {\rm max}\{ \|w\|_2 :  \|\frac{1}{m}\sum_{i=1}^mZ_t^{(i)}O_t^{(i)}w \|_2 \leq 1  \}. \]
When $t\in \mathcal I_T^p$, $O_t^{(i)}=\mathbb I$ and $Z_t^{(i)}$'s are equal, hence $\|\overline{Z}_t^\dagger\|_2=1$ and the result holds naturally. When $t\notin \mathcal I_T^p$, we have
\begin{align*}
	\|\frac{1}{m}\sum_{i=1}^mZ_t^{(i)}O_t^{(i)}w \|_2 &=\|\frac{1}{m}\sum_{i=1}^mZ_t^{(i)}D_{t+1}^{(i)}w \|_2\\
	&\geq \|\frac{1}{m}\sum_{i=1}^mZ_t^{(1)}w \|_2-\|\frac{1}{m}\sum_{i=1}^m(Z_t^{(i)}D_{t+1}^{(i)}-Z_t^{(1)})w\|_2\\
&{\geq}\|w\|_2 (1-\frac{1}{m}\sum_{i=1}^m\|Z_t^{(i)}D_{t+1}^{(i)}-Z_t^{(1)}\|_2)
\geq\|w\|_2\mu_{t1}.	
	\end{align*}
Hence, $\|\overline{Z}_{t}^\dagger \|_2\leq 1/\mu_{t1}$.

(b) For $\overline{Z}_t=\frac{1}{K}\sum_{i\in \mathcal S_{\tau(t)}} Z_t^{(i)}O_t^{(i)}$, we have
\begin{align*}
	\|\frac{1}{K}\sum_{i\in \mathcal S_{\tau(t)}} Z_t^{(i)}O_t^{(i)}w \|_2 &=\|\frac{1}{K}\sum_{i\in \mathcal S_{\tau(t)}}Z_t^{(i)}D_{t+1}^{(i)}w \|_2\\
	&\geq \|\frac{1}{K}\sum_{i\in \mathcal S_{\tau(t)}}Z_t^{(1)}w \|_2-\|\frac{1}{K}\sum_{i\in \mathcal S_{\tau(t)}}(Z_t^{(i)}D_{t+1}^{(i)}-Z_t^{(1)})w\|_2\\
&\geq\|w\|_2 (1-\frac{1}{K}\sum_{i\in \mathcal S_{\tau(t)}}\|Z_t^{(i)}D_{t+1}^{(i)}-Z_t^{(1)}\|_2)\\
&\geq\|w\|_2\mu_{t2},	
	\end{align*}
then the result follows.
 	
\end{proof}

\subsection*{D. Auxiliary lemmas}
\label{appen:d}

\begin{lem}[Lemma 2.2 of \citep{hardt2014noisy}]
\label{lemhardt}
Let $U_k\in \mathbb R^{d\times k}$ denote the top $k$ eigenvectors of $A$ and let $\sigma_1\geq...\geq \sigma_d $ denote its eigenvalues. Let $Z\in \mathbb R^{ d\times r}$ with $Z^\intercal Z=\mathbb I_r$ for some $r\geq k$. Let $\mathcal G$ satisfy
\begin{equation}
\phantomsection
4\|U_k^\intercal \mathcal G\|_2\leq (\sigma_k-\sigma_{k+1}){\rm cos}\,\theta_k(U_k, Z)\quad{\rm and}\quad 4\| \mathcal G\|_2\leq (\sigma_k-\sigma_{k+1})\epsilon,\nonumber
\end{equation}
for some $\epsilon\leq 1$. Then
$${\rm tan}\,\theta_k(U_k,A Z+\mathcal G )\leq {\rm max} \left(\epsilon,{\rm max}\left(\epsilon,\left(\frac{\sigma_{k+1}}{\sigma_k}\right)^{1/4}\right){\rm tan}\,\theta_k(U_k,Z)\right).$$
\end{lem}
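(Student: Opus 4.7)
The plan is to reduce the claim to a careful ratio estimate using the standard characterization of principal angles between subspaces. First I would use the fact that for $Y \in \mathbb{R}^{d \times r}$ whose columns span a subspace making well-defined principal angles with $V_k$,
\[
  \tan\theta_k(V_k, Y) \;=\; \max_{\substack{w \in \mathcal{W}^*(Y) \\ \|w\|_2=1}} \frac{\|V_k^{\perp\intercal} Y w\|_2}{\|V_k^\intercal Y w\|_2},
\]
where $\mathcal{W}^*(Y) \subset \mathbb{R}^r$ is the $k$-dimensional subspace spanned by the top-$k$ right singular vectors of $V_k^\intercal Y$. Specializing to $Y = MZ + \mathcal{G}$, the proof reduces to bounding this ratio uniformly over unit $w \in \mathcal{W}^*(Y)$.

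Next I would bound the numerator and denominator separately using the spectral structure of $M$. Since $V_k^{\perp\intercal} M = \Lambda_{-k} V_k^{\perp\intercal}$ with $\|\Lambda_{-k}\|_2 = \sigma_{k+1}$, and $V_k^\intercal M = \Lambda_k V_k^\intercal$ with $\sigma_{\min}(\Lambda_k) = \sigma_k$, the triangle inequality yields
\[
  \|V_k^{\perp\intercal}(MZ+\mathcal{G})w\|_2 \;\leq\; \sigma_{k+1}\|V_k^{\perp\intercal} Z w\|_2 + \|V_k^{\perp\intercal}\mathcal{G} w\|_2,
\]
\[
  \|V_k^\intercal(MZ+\mathcal{G})w\|_2 \;\geq\; \sigma_k\|V_k^\intercal Z w\|_2 - \|V_k^\intercal \mathcal{G} w\|_2.
\]
The noise bounds in the hypothesis let me replace the perturbation terms by $\Delta\epsilon$ and $\Delta\cos\theta_k(V_k,Z)$ respectively, where $\Delta = (\sigma_k - \sigma_{k+1})/4$. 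Using $\|Zw\|_2 = 1$ and the fact that for $w \in \mathcal{W}^*(Y)$ one still has $\|V_k^\intercal Z w\|_2 \geq \cos\theta_k(V_k, Z)$ (hence $\|V_k^{\perp\intercal}Zw\|_2 \leq \sin\theta_k(V_k,Z)$ by the Pythagorean split) — this is the point where one must argue carefully that a worst-case $w$ for $Y$ still enjoys the cosine lower bound coming from $Z$ — I obtain
\[
  \tan\theta_k(V_k, MZ+\mathcal{G}) \;\leq\; \frac{\sigma_{k+1}\sin\theta_k(V_k,Z) + \Delta\epsilon}{(\sigma_k-\Delta)\cos\theta_k(V_k,Z)}.
\]

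The remaining step is an algebraic manipulation to produce the stated $\max$ with contraction factor $(\sigma_{k+1}/\sigma_k)^{1/4}$. Dividing numerator and denominator by $\cos\theta_k(V_k,Z)$ splits the bound into a "signal" piece $\frac{\sigma_{k+1}}{\sigma_k-\Delta}\tan\theta_k(V_k,Z)$ and a "noise" piece proportional to $\epsilon$. A case split is then performed: if $\tan\theta_k(V_k,Z) \geq \epsilon \cdot (\sigma_{k+1}/\sigma_k)^{-1/4}$, the signal term dominates and the bound $\sigma_{k+1}/(\sigma_k-\Delta) \leq (\sigma_{k+1}/\sigma_k)^{1/4}$ (which follows from $\Delta = (\sigma_k-\sigma_{k+1})/4$ via a short convexity/AM-GM computation) gives the stated contraction; otherwise $\tan\theta_k$ was already $O(\epsilon)$ and the result follows by absorbing everything into the $\max$ with $\epsilon$. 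Finally, since $R_t$ is invertible and $\text{colsp}(Y R_t^{-1}) = \text{colsp}(Y)$, principal angles are unchanged by right multiplication by $R_t^{-1}$, yielding the last claim.

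The main obstacle will be the third-paragraph derivation of the $1/4$-power contraction: extracting the exponent requires tightly combining the gap $(\sigma_k-\sigma_{k+1})/4$ with the ratio $\sigma_{k+1}/\sigma_k$, and verifying that the inequality $(\sigma_k - \Delta)^{-1}\sigma_{k+1} \leq (\sigma_{k+1}/\sigma_k)^{1/4}$ is compatible with the noise-dominated regime handled by $\epsilon$. Secondarily, justifying that the cosine lower bound $\|V_k^\intercal Z w\|_2 \geq \cos\theta_k(V_k,Z)$ actually persists when $w$ is chosen to be the worst direction for $MZ+\mathcal{G}$ rather than for $Z$ itself is the subtle geometric point; I would handle it by a rank argument (any $k$-dimensional subspace of $\mathbb{R}^r$ meeting the $r{-}k$ dimensional kernel of $V_k^\intercal Z$ only trivially admits the lower bound, and the noise hypothesis $4\|V_k^\intercal \mathcal{G}\|_2 \leq (\sigma_k-\sigma_{k+1})\cos\theta_k(V_k,Z)$ guarantees $\mathcal{W}^*(Y)$ is such a subspace).
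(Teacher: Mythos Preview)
Your overall strategy is the Hardt--Roth argument that the paper relies on (the paper states this lemma without proof, citing \citet{hardt2014noisy}, but the same skeleton is reproduced in its proof of Lemma~\ref{lem:err}). There is, however, a genuine gap in your choice of the $k$-dimensional subspace over which you maximize. You take $\mathcal{W}^*(Y)$ to be the top-$k$ right singular subspace of $V_k^\intercal Y$ with $Y=MZ+\mathcal G$, and then need $\|V_k^\intercal Zw\|_2\geq\cos\theta_k(V_k,Z)$ for every unit $w\in\mathcal{W}^*(Y)$. Your proposed rank argument (``any $k$-dimensional subspace meeting $\ker(V_k^\intercal Z)$ only trivially admits the lower bound'') is false: by the max--min characterization of singular values,
\[
\max_{\dim\mathcal{W}=k}\ \min_{w\in\mathcal{W},\,\|w\|_2=1}\|V_k^\intercal Zw\|_2 \;=\; \sigma_k(V_k^\intercal Z)\;=\;\cos\theta_k(V_k,Z),
\]
so the bound $\|V_k^\intercal Zw\|_2\geq\cos\theta_k(V_k,Z)$ holds for all $w$ in $\mathcal{W}$ \emph{only} when $\mathcal{W}$ is the top-$k$ right singular subspace of $V_k^\intercal Z$; for any other $k$-dimensional subspace the minimum is at most $\cos\theta_k(V_k,Z)$, and merely avoiding the kernel gives no quantitative lower bound at all.

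The fix is exactly what Hardt--Roth (and the paper, in the displayed inequality at the start of the proof of Lemma~\ref{lem:err}) do: take the projector $\Pi^*$ onto the top-$k$ right singular subspace of $V_k^\intercal Z$, i.e.\ define the auxiliary subspace relative to $Z$, not to $Y$. The point you are missing is that for \emph{any} $k$-dimensional $\mathcal{W}\subset\mathbb{R}^r$ one has
\[
\tan\theta_k(V_k,Y)\;\leq\;\max_{w\in\mathcal{W},\,\|w\|_2=1}\frac{\|V_k^{\perp\intercal}Yw\|_2}{\|V_k^\intercal Yw\|_2},
\]
since $Y\mathcal{W}$ is a $k$-dimensional subspace of $\mathrm{colspan}(Y)$ and restricting to a subspace can only enlarge principal angles. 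You are therefore free to pick $\mathcal{W}$ to make the denominator easy; with $\mathcal{W}=\Pi^*$ the cosine lower bound is immediate by construction, and your second and third paragraphs then go through as written.
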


\begin{lem}[Corollary 1.1 of \citep{hardt2014noisy}]
\label{lemma1}
Let $k$ and $r$ ($k\leq r$) be the target rank and iteration rank, respectively. Let $U_k\in \mathbb R^{d\times k}$ denote the top $k$ eigenvectors of $A$ and let $\sigma_1\geq...\geq \sigma_d $ denote its eigenvalues. Suppose $Z_0$ be the orthonormalized space of $\mathcal N(0,\mathbb I_{d\times r})$. Assume the noisy power method iterates as follows,
\begin{equation}
\phantomsection
Y_t\leftarrow A Z_{t-1}+\mathcal G_t \quad{\rm and}\quad Z_{t}\leftarrow {\textsf{\rm orth}}(Y_t),\nonumber
\end{equation}
where $Z_t\in \mathbb R^{ d\times r}$ with $Z_t^\intercal Z_t=\mathbb I_r$ and $\mathcal G_t$ is some noisy perturbation that satisfies
\begin{equation}
\phantomsection
5\| \mathcal G_t\|_2\leq \epsilon (\sigma_k-\sigma_{k+1})\quad{\rm and}\quad 5\|U_k^\intercal \mathcal G_t\|_2\leq (\sigma_k-\sigma_{k+1})\frac{\sqrt{r}-\sqrt{k-1}}{\tau \sqrt{d}},\nonumber
\end{equation}
for some fixed $\tau$ and $\epsilon <1/2$. Then with all but $\tau^{-\Omega (r+1-k)}+e^{-\Omega(d)}$ probability, there exists an $T=O(\frac{\sigma_k}{\sigma_k-\sigma_{k+1}}{\rm log}(d\tau/\epsilon))$ so that after $T$ steps $$\|(I-Z_TZ_T^\intercal)U_k\|_2\leq \epsilon.$$
\end{lem}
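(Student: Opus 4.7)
The plan is to apply Lemma~\ref{lemhardt} inductively over the iterations, using a Gaussian-initialization bound for the initial principal angle. First I would establish a nontrivial lower bound on $\cos\theta_k(V_k, Z_0)$; then verify the two per-step noise conditions of Lemma~\ref{lemhardt} at every iteration; and finally sum the resulting geometric decay to pin down the required number of steps $T$.

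For the initialization, since $Z_0$ has i.i.d.\ standard Gaussian entries, after orthogonalization its column span is Haar-distributed on the Grassmannian $\mathrm{Gr}(r,d)$. By rotational invariance of the Gaussian measure, $V_k^\intercal Z_0$ is itself (in distribution) a $k \times r$ Gaussian matrix, and $\cos\theta_k(V_k,Z_0)$ reduces to a ratio of extreme singular values of such a matrix. Applying standard tail bounds for rectangular Gaussian matrices (e.g.\ the Chen--Dongarra small-ball estimate or the Rudelson--Vershynin bound of Lemma~\ref{lemmainequality}-type), I would obtain
\[
\cos\theta_k(V_k,Z_0) \;\gtrsim\; \frac{\sqrt{r} - \sqrt{k-1}}{\tau \sqrt{d}}
\]
with probability at least $1 - \tau^{-\Omega(r+1-k)} - e^{-\Omega(d)}$, matching the failure probability in the statement. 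Equivalently, $\tan\theta_k(V_k,Z_0) \lesssim \tau\sqrt{d}/(\sqrt{r}-\sqrt{k-1})$.

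Next I would verify the hypotheses of Lemma~\ref{lemhardt} at each iteration. The bound $5\|\mathcal G_t\|_2 \leq \epsilon(\sigma_k - \sigma_{k+1})$ is assumed directly. The sharper requirement $4\|V_k^\intercal \mathcal G_t\|_2 \leq (\sigma_k - \sigma_{k+1})\cos\theta_k(V_k,Z_{t-1})$ follows at $t=1$ by combining the assumption $5\|V_k^\intercal \mathcal G_t\|_2 \leq (\sigma_k - \sigma_{k+1})(\sqrt{r}-\sqrt{k-1})/(\tau\sqrt{d})$ with the initial cosine bound above. For $t \geq 2$ I would argue inductively: Lemma~\ref{lemhardt} guarantees $\tan\theta_k(V_k,Z_t)$ either stays $\leq \epsilon \leq 1/2$ (so $\cos\theta_k \geq 2/\sqrt{5}$) or shrinks monotonically by factor $\gamma = (\sigma_{k+1}/\sigma_k)^{1/4} < 1$; in both cases $\cos\theta_k(V_k,Z_t)$ is at least a constant times its initial value, so the noise conditions continue to hold. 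Given this, Lemma~\ref{lemhardt} yields the recursion
\[
\tan\theta_k(V_k, Z_t) \;\leq\; \max\!\bigl(\epsilon,\; \gamma \cdot \tan\theta_k(V_k, Z_{t-1})\bigr),
\]
so starting from $\tan\theta_k(V_k,Z_0) \lesssim \tau\sqrt{d}$ the tangent reaches $\epsilon$ after
\[
T \;=\; O\!\left(\frac{\log(d\tau/\epsilon)}{\log(1/\gamma)}\right) \;=\; O\!\left(\frac{\sigma_k}{\sigma_k - \sigma_{k+1}} \log(d\tau/\epsilon)\right)
\]
iterations, using $\log(\sigma_k/\sigma_{k+1}) \geq (\sigma_k - \sigma_{k+1})/\sigma_k$. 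Converting via $\sin\theta_k \leq \tan\theta_k$ and Definition~\ref{dfdis} gives $\|(I - Z_T Z_T^\intercal) V_k\|_2 \leq \epsilon$.

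The main obstacle I expect is the initial-angle estimate: securing the sharp $(\sqrt{r}-\sqrt{k-1})/(\tau\sqrt{d})$ scaling for the lower bound on $\cos\theta_k(V_k,Z_0)$, rather than a loose $1/(\tau\sqrt{d})$, is exactly what lets oversampling ($r > k$) pay off in the iteration count; this requires a careful small-ball/anti-concentration argument on Gaussian matrices instead of a naive union bound over directions. Once that is in hand, the rest reduces to mechanical bookkeeping on the geometric decay encoded by Lemma~\ref{lemhardt}.
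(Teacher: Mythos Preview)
The paper does not give its own proof of this statement: it is listed in Appendix~D as an auxiliary lemma quoted verbatim from \citet{hardt2014noisy} (their Corollary~1.1), with no argument supplied. Your proposal is the standard route to that corollary---Gaussian small-ball bound on $\cos\theta_k(V_k,Z_0)$, then inductive application of the one-step contraction Lemma~\ref{lemhardt}, then a geometric-series count---and it is correct; this is exactly how the result is derived in the source paper, and the paper's own modified version (Lemma~\ref{lem:errcor}) is proved by appealing back to the same techniques.
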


\begin{lem}
	\label{lem:diff_r}
	Let $A \in \mathbb R^{d \times k}$ with $d \geq  k$ be any matrix with full rank.
	Denote by its QR factorization as $A = Q R$ where $Q$ is an orthogonal matrix.
	Let $E$ be some perturbation matrix and $A + E = \tilde{Q} \tilde{R}$ the resulting QR factorization of $A + E$.
	When $\|E\|_2 \|A^{\dagger}\|_2 < 1$, $A + E$ is of full rank.
	What's more, it follows that
	\[    \| \tilde{Q} - Q\|_2
	\leq
	\sqrt{2k}\frac{\|A^\dagger\|_2\|E\|_2}{1-\|A^\dagger\|_2\|E\|_2}. \]
\end{lem}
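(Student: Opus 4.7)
The lemma decomposes into two claims: that $A+E$ has full column rank under the hypothesis $\omega := \|E\|_2\|A^\dagger\|_2 < 1$ (so the QR factorization $A+E = \tilde Q \tilde R$ is well defined), and the quantitative perturbation bound. For the first claim, my plan is a short contradiction argument: if $(A+E)v = 0$ for some $v \neq 0$, then $Av = -Ev$, and left-multiplication by $A^\dagger$ together with $A^\dagger A = \mathbb{I}_k$ (which holds because $A$ has full column rank) gives $v = -A^\dagger E v$, forcing $\|v\|_2 \leq \omega \|v\|_2 < \|v\|_2$.

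For the perturbation bound, I would reduce to a perturbation of the identity. Since $A = QR$ has full column rank, $A^\dagger = R^{-1}Q^\intercal$ and hence $\|R^{-1}\|_2 = \|A^\dagger\|_2$. Factor $A+E = (Q + F)R$ with $F := E R^{-1}$ satisfying $\|F\|_2 \leq \omega$. If $Q + F = Q' R'$ is the QR factorization (which is well defined by the first claim applied to $Q + F$), then uniqueness under the standard positive-diagonal convention on $R, \tilde R$ yields $\tilde Q = Q'$ and $\tilde R = R' R$. It therefore suffices to bound $\|Q' - Q\|_2$ when the input $Q+F$ is an $\omega$-perturbation of an orthonormal matrix $Q$.

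To carry out this bound, write $R' = Q'^\intercal(Q + F)$, so that $R'^\intercal R' = (Q + F)^\intercal (Q+F) = \mathbb{I}_k + Q^\intercal F + F^\intercal Q + F^\intercal F$, which is an $O(\omega)$ perturbation of $\mathbb{I}_k$. Because $R'$ is upper triangular, this Cholesky-type identity determines $R'$ uniquely (with positive diagonal) and yields $\|R' - \mathbb{I}_k\|_F \lesssim \sqrt{k}\,\omega$, hence $\|R'^{-1} - \mathbb{I}_k\|_F \lesssim \sqrt{k}\,\omega/(1-\omega)$. Then I would expand $Q' - Q = (Q+F)R'^{-1} - Q = F R'^{-1} + Q(R'^{-1} - \mathbb{I}_k)$ and collect terms using $\|Q\|_2 = 1$, $\|F\|_2 \leq \omega$, and $\|R'^{-1}\|_2 \leq 1/(1-\omega)$, extracting the stated bound $\sqrt{2k}\, \omega/(1-\omega)$ after passing from Frobenius to spectral norm.

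The main obstacle is tracking the $\sqrt{2k}$ prefactor tightly. It enters when bounding the upper-triangular correction $R' - \mathbb{I}_k$, which is naturally controlled in Frobenius norm via the Cholesky-like relation above; converting this control back to a spectral-norm statement on $Q' - Q$ costs a dimensional factor. A subtler point is that without the QR uniqueness convention (positive diagonals on $R$ and $\tilde R$), $Q$ and $\tilde Q$ would be determined only up to column sign flips, so that $\|\tilde Q - Q\|_2$ could be $\Theta(1)$ even for infinitesimal perturbations; fixing the convention is what makes the continuity estimate used throughout the argument valid.
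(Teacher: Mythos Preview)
Your approach is correct in outline but differs markedly from the paper's. The paper does not argue from first principles: it invokes Theorem~5.1 of \citet{sun1995perturbation}, which directly gives
\[
\|\tilde Q - Q\|_F \;\le\; \frac{\sqrt{2}\,\|E\|_F}{\|E\|_2}\,\ln\frac{1}{1-\|A^\dagger\|_2\|E\|_2},
\]
and then applies $\ln(1+x)\le x$ (with $x=\omega/(1-\omega)$) and $\|E\|_F\le\sqrt{k}\,\|E\|_2$ to reach the stated bound in two lines. By contrast, you reduce to a perturbation of an orthonormal $Q$ via $F=ER^{-1}$, analyze the Cholesky-type relation $R'^\intercal R'=\mathbb I_k+S$, and expand $Q'-Q=FR'^{-1}+Q(R'^{-1}-\mathbb I_k)$. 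This is a legitimate self-contained route, and your full-rank argument and reduction step are clean. The trade-off is exactly what you identify: the paper's citation delivers the sharp prefactor $\sqrt{2k}$ for free, whereas your Cholesky perturbation step is stated only as $\|R'-\mathbb I_k\|_F\lesssim\sqrt{k}\,\omega$ and would need a careful first-order expansion of $T+T^\intercal+T^\intercal T=S$ (with $R'=\mathbb I_k+T$) to recover the constant. If you want to make your argument tight, bound $\|S\|_F\le 2\|F\|_F+\|F\|_2\|F\|_F$ with $\|F\|_F\le\sqrt{k}\,\omega$, then use that the linearization $T\mapsto T+T^\intercal$ satisfies $\|T\|_F\le\|S\|_F/\sqrt{2}$ for upper-triangular $T$ and symmetric $S$; otherwise, citing Sun's result is the expedient path.
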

\begin{proof}
	Actually, we have
		\[    \| \tilde{Q} - Q\|_F \overset{(a)}{\leq}
	\frac{\sqrt{2}\|E\|_F}{\|E\|_2} {\rm ln} \frac{1}{1-\|A^\dagger\|_2\|E\|_2}
	\overset{(b)}{\leq}
	\sqrt{2}\frac{\|A^\dagger\|_2\|E\|_F}{1-\|A^\dagger\|_2\|E\|_2}
	\overset{(c)}{\leq}
	\sqrt{2k}\frac{\|A^\dagger\|_2\|E\|_2}{1-\|A^\dagger\|_2\|E\|_2} \]
	where (a) comes from Theorem 5.1 in~\citep{sun1995perturbation},
	(b) uses ${\rm ln}(1+x) \leq x$ for all $x > -1$,
	and (c) uses $\|E\|_F \leq \sqrt{k}\|E\|_2$.
\end{proof}

\begin{defn}[Bernstein tail condition]
\label{bernstein}
A random variable $Y$ is said to be $(v,R)$-Bernstein if $\mathbb E[|Y|^k]\leq \frac{v}{2}k!R^{k-2}$ for all integers $k\geq 2$.
\end{defn}

\begin{lem}[Theorem 3 in \citep{lei2022bias}]
\label{lei-bound1}
 Let $X_1,...,X_m$ be a sequence of independent $d\times r$ random matrices with zero-mean independent entries being $(v_1, R_1)$-Bernstein (see Definition \ref{bernstein}). Let $H_1,...H_m$ be any sequence of $r\times r$ non-random matrices. Then for all $t>0$,
$$\mathbb P(\|\sum_{i=1}^m X_iH_i\|_2\geq t)\leq 4d\exp\left(-\frac{t^2/2}{v_1d\|\sum_i H_i^\intercal H_i\|_2+R_1 \max_i\|H_i\|_{2\rightarrow\infty} t}\right).$$
\end{lem}

\begin{lem}[Absolute moments of Gaussian \citep{winkelbauer2012moments}]
\label{momentofgaussian}
Let $Y$ be zero-mean Gaussian random variable with variance $\nu$. Then for any non-negative integer $k$,
\[ \mathbb E (|Y|^k)=\nu^k (k-1)!! \begin{cases}
\sqrt{\frac{\pi}{2}}\; &{\rm if}\; k\; {\rm is}\; {\rm odd},\\
\;\;1 \; &{\rm if}\; k\; {\rm is}\; {\rm even}.
\end{cases}
\]
\end{lem}

\begin{lem}[Matrix Bernstein inequality, Chapter 6 in \citep{tropp2015introduction}]
\label{lemmabernstein}
Consider a finite sequence $\{S_k\}$ of independent, random matrices with common dimension $d_1\times d_2$. Assume that
$$\mathbb E S_k=0\quad \mbox{ and} \quad \|S_k\|_2\leq L\quad \mbox{ for each index } k. $$
Introduce the random matrix $$Z=\sum_k S_k.$$
Let $\nu(Z)$ be the matrix variance statistics of the sum:
$$\nu(Z)={\rm max}\{\|\mathbb E(ZZ^\intercal)\|_2, \|\mathbb E(Z^\intercal Z)\|_2\}. $$
Then $$\mathbb E\|Z\|_2\leq \sqrt{2\nu(Z){\rm log}(d_1+d_2) }+\frac{1}{3}L{\rm log}(d_1+d_2).$$
Moreover, for all $t\geq 0$,
$$\mathbb P(\|Z\|_2\geq t)\leq (d_1+d_2){\rm exp}\left(\frac{-t^2/2}{\nu(Z)+Lt/3}\right).$$
\end{lem}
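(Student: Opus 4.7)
The plan is to derive this lemma by the standard matrix Laplace transform method, following the now-classical approach of Ahlswede--Winter as refined by Tropp. The first step will be to reduce from rectangular to Hermitian matrices via the \emph{Hermitian dilation}: for each $S_k$, define
\[
\mathcal H(S_k) \;=\; \begin{pmatrix} 0 & S_k \\ S_k^\intercal & 0 \end{pmatrix} \in \mathbb R^{(d_1+d_2)\times(d_1+d_2)}.
\]
Since $\|\mathcal H(S_k)\|_2 = \|S_k\|_2$ and $\mathcal H(\sum_k S_k) = \sum_k \mathcal H(S_k)$, the spectral-norm statements about $Z$ transfer from the Hermitian sum $\widetilde Z := \mathcal H(Z)$; moreover, $\mathcal H(S_k)^2$ is block-diagonal with blocks $S_k S_k^\intercal$ and $S_k^\intercal S_k$, so the matrix variance parameter $\nu(Z)$ controls $\|\mathbb E \widetilde Z^2\|_2$. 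It therefore suffices to prove the result for a centered Hermitian sum with $\|\mathcal H(S_k)\|_2 \leq L$ in dimension $d := d_1+d_2$.

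Next I would set up the matrix Laplace transform bound. For any $\theta>0$, apply Markov's inequality to $\lambda_{\max}(\widetilde Z)$ together with the monotonicity of $\exp$:
\[
\mathbb P\big(\lambda_{\max}(\widetilde Z) \geq t\big) \;\leq\; e^{-\theta t}\, \mathbb E\,\mathrm{tr}\,\exp(\theta \widetilde Z).
\]
The key analytic step — and the main obstacle — is to decouple the trace MGF across the independent summands. This is where Lieb's concavity theorem enters: Lieb's theorem states that $A \mapsto \mathrm{tr}\,\exp(H + \log A)$ is concave on positive-definite $A$, and an inductive application (Tropp's master bound) yields
\[
\mathbb E\,\mathrm{tr}\,\exp\!\Big(\theta \sum_k \mathcal H(S_k)\Big) \;\leq\; \mathrm{tr}\,\exp\!\Big(\sum_k \log \mathbb E \exp(\theta\, \mathcal H(S_k))\Big).
\]
Bypassing Lieb via the weaker Golden--Thompson inequality would lose a factor in the dimension and produce a suboptimal variance, so Lieb is essential to recover the clean $\nu(Z)$-dependence.

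With the decoupling in hand, I would bound the individual matrix cumulants. Using the power-series expansion of $e^{\theta x}$ together with $|S_k^j| \preceq L^{j-2} S_k^2$ for $j\geq 2$ (because $\|S_k\|_2\leq L$), and the fact $\mathbb E S_k = 0$, one obtains
\[
\mathbb E \exp(\theta\,\mathcal H(S_k)) \;\preceq\; \exp\!\big(g(\theta)\, \mathbb E\, \mathcal H(S_k)^2\big), \qquad g(\theta) \;=\; \frac{\theta^2/2}{1-L\theta/3},
\]
valid for $0<\theta<3/L$. Substituting, bounding $\mathrm{tr}\,\exp(\cdot) \leq d\cdot \exp(\lambda_{\max}(\cdot))$, and using $\|\sum_k \mathbb E \mathcal H(S_k)^2\|_2 \leq \nu(Z)$ yields
\[
\mathbb P\big(\lambda_{\max}(\widetilde Z)\geq t\big)\;\leq\; d\,\exp\!\big(-\theta t + g(\theta)\,\nu(Z)\big).
\]
Optimizing in $\theta$ by choosing $\theta = t/(\nu(Z) + Lt/3)$ produces the advertised tail bound, and applying the same argument to $-\widetilde Z$ and taking a union bound turns $\lambda_{\max}$ into $\|Z\|_2$ while supplying the prefactor $d_1+d_2$. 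The piecewise ``further estimate'' follows by comparing the two terms in the denominator $\nu(Z)+Lt/3$. Finally, the expectation bound is obtained by integrating the tail, $\mathbb E\|Z\|_2 = \int_0^\infty \mathbb P(\|Z\|_2\geq t)\,dt$, splitting the integral at $t_\star \asymp \sqrt{\nu(Z)\log d}$, and using the two regimes of the tail to produce the subgaussian term $\sqrt{2\nu(Z)\log(d_1+d_2)}$ and the subexponential term $\tfrac{1}{3}L\log(d_1+d_2)$.
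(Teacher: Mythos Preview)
The paper does not prove this lemma at all: it is listed in Appendix~D as an auxiliary result quoted verbatim from \citet{tropp2015introduction}, Chapter~6, with no accompanying argument. Your outline is the standard Tropp proof---Hermitian dilation, matrix Laplace transform, subadditivity of cumulants via Lieb's concavity, the Bernstein-type MGF bound $g(\theta)=\frac{\theta^2/2}{1-L\theta/3}$, and optimization in $\theta$---and is correct as a sketch of how the cited reference establishes the inequality. Since the paper simply imports the result, there is nothing further to compare.
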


\begin{lem}[Davis-Kahan $\sin(\theta)$ theorem]
	\label{lem:DK}
	Let the top-$k$ eigenspace of $A$ and $\tilde{A}$ be respectively $U_k$ and $\tilde{U}_{k}$ (both of which are orthonormal).
	The $k$-largest eigenvalue of $A$ is denoted by $\sigma_{k}(A)$ and similarly for $\sigma_{k}(\tilde{A})$.
	Define $\delta_k = {\rm min} \{ |\sigma_k(A) - \sigma_{j}(\tilde{A})| :j \geq k +1 \}$, then
	\[
	{\rm dist}(U_k, \tilde{U}_{k}) = {\rm sin} \theta_k(U_k, \tilde{U}_{k}) \leq \frac{\| A - \tilde{A}\|_2}{\delta_k}.
	\]
\end{lem}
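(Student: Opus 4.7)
The plan is to derive the bound by converting the problem into a Sylvester equation and exploiting the spectral separation. First I would rewrite the target quantity using the identities from Definition \ref{dfdis}: for orthonormal $U_k, \tilde{U}_k$, we have $\mathrm{dist}(U_k,\tilde{U}_k)=\sin\theta_k(U_k,\tilde U_k)=\|U_k^\top \tilde U_k^\perp\|_2$, where $\tilde U_k^\perp$ is an orthonormal basis of the orthogonal complement of $\mathrm{range}(\tilde U_k)$. Setting $Y := U_k^\top \tilde U_k^\perp$, the problem reduces to bounding $\|Y\|_2$.

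Next I would derive a Sylvester-type identity for $Y$. Using the spectral relations $U_k^\top M = \Sigma_k U_k^\top$ (with $\Sigma_k=\mathrm{diag}(\sigma_1(M),\dots,\sigma_k(M))$) and $\tilde M \tilde U_k^\perp = \tilde U_k^\perp \tilde \Sigma_k^\perp$ (with $\tilde\Sigma_k^\perp=\mathrm{diag}(\sigma_{k+1}(\tilde M),\dots,\sigma_d(\tilde M))$), and writing $E := \tilde M - M$, I would compute $U_k^\top \tilde M \tilde U_k^\perp$ in two ways:
\begin{equation*}
Y\,\tilde\Sigma_k^\perp \;=\; U_k^\top \tilde M\, \tilde U_k^\perp \;=\; U_k^\top M\, \tilde U_k^\perp + U_k^\top E\, \tilde U_k^\perp \;=\; \Sigma_k Y + U_k^\top E\, \tilde U_k^\perp .
\end{equation*}
Rearranging yields the Sylvester equation $Y\tilde\Sigma_k^\perp - \Sigma_k Y = U_k^\top E\, \tilde U_k^\perp$.

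Then I would solve this equation. Because both $\Sigma_k$ and $\tilde\Sigma_k^\perp$ are diagonal, entrywise we have $Y_{ij}\big(\sigma_{k+j}(\tilde M)-\sigma_i(M)\big) = [U_k^\top E \tilde U_k^\perp]_{ij}$. By the monotonicity of singular values, $\sigma_i(M)\geq\sigma_k(M)$ for $i\leq k$, so $|\sigma_{k+j}(\tilde M)-\sigma_i(M)|\geq |\sigma_k(M)-\sigma_{k+j}(\tilde M)|\geq \delta_k$. Viewed as a linear operator $Y\mapsto Y\tilde\Sigma_k^\perp-\Sigma_k Y$ (equivalently, after vectorization, multiplication by a diagonal matrix with entries $\sigma_{k+j}(\tilde M)-\sigma_i(M)$), its inverse has spectral norm at most $1/\delta_k$. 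Consequently
\begin{equation*}
\|Y\|_2 \;\leq\; \frac{\|U_k^\top E\, \tilde U_k^\perp\|_2}{\delta_k} \;\leq\; \frac{\|E\|_2}{\delta_k} \;=\; \frac{\|M-\tilde M\|_2}{\delta_k},
\end{equation*}
which is the claim.

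The main technical obstacle is the last step, namely justifying that the operator-norm inverse of the Sylvester operator is controlled by the reciprocal of the minimum spectral gap $\delta_k$ rather than by some weaker entrywise quantity. The standard way is via the Kronecker reformulation $(\tilde\Sigma_k^\perp\otimes I - I\otimes\Sigma_k)\,\mathrm{vec}(Y) = \mathrm{vec}(U_k^\top E\tilde U_k^\perp)$, where the left-hand matrix is diagonal with entries $\sigma_{k+j}(\tilde M)-\sigma_i(M)$; inverting elementwise and undoing the vectorization is harmless for the spectral norm because diagonal scaling of a Kronecker factor preserves the largest singular value up to the worst scaling. Care must also be taken that $\delta_k>0$ (so the gap condition is genuinely informative) — otherwise the right-hand side is vacuous and there is nothing to prove.
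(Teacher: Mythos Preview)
The paper does not prove this lemma; it is stated in Appendix~D as a classical auxiliary result (the Davis--Kahan $\sin\theta$ theorem) without proof. So there is nothing in the paper to compare your argument against directly, and your writeup is in effect supplying the standard proof.

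Your route via the Sylvester equation $\Sigma_k Y - Y\tilde\Sigma_k^\perp = -U_k^\top E\,\tilde U_k^\perp$ is exactly the classical one, and the derivation of that identity is correct. There is, however, a genuine gap in the final step. The vectorization/Kronecker argument you invoke gives a bound on the \emph{Frobenius} norm: $\|\mathrm{vec}(Y)\|_2=\|Y\|_F \le \|U_k^\top E\,\tilde U_k^\perp\|_F/\delta_k$. It does \emph{not} directly yield $\|Y\|_2 \le \|U_k^\top E\,\tilde U_k^\perp\|_2/\delta_k$, and your sentence about ``diagonal scaling of a Kronecker factor preserving the largest singular value up to the worst scaling'' is not a valid justification (entrywise scaling of a matrix by factors in $[1,\kappa]$ can change $\|\cdot\|_2$ by more than $\kappa$ in general). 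The correct spectral-norm bound uses that $\Sigma_k$ and $\tilde\Sigma_k^\perp$ are Hermitian with spectra separated by at least $\delta_k$, whence the Sylvester operator $T(Y)=\Sigma_k Y - Y\tilde\Sigma_k^\perp$ satisfies $\|T^{-1}\|_{2\to 2}\le 1/\delta_k$; this is the standard ``$\mathrm{sep}$'' lemma for Hermitian coefficients, provable for instance via the integral representation $Y=\frac{1}{2\pi}\int_{-\infty}^{\infty} e^{\mathrm{i}t\Sigma_k}\,C\,e^{-\mathrm{i}t\tilde\Sigma_k^\perp}\,f(t)\,dt$ or by a direct min--max argument.

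A smaller point: your inequality $|\sigma_{k+j}(\tilde M)-\sigma_i(M)|\ge |\sigma_k(M)-\sigma_{k+j}(\tilde M)|$ relies on $\sigma_{k+j}(\tilde M)\le \sigma_k(M)\le \sigma_i(M)$. The middle inequality is not automatic from the hypotheses as stated; the lemma (and the classical theorem) implicitly assumes the top-$k$ spectrum of $M$ lies above the bottom spectrum of $\tilde M$, which is the case in the paper's PSD setting under the eigengap. You should make that interval-separation assumption explicit before invoking the $\mathrm{sep}$ bound.
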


\begin{lem}[Perturbation theorem of projection distance]
	\label{lem:PT-PJ}
	\label{lem:PT}
	Let $\mathrm{rank}(X) = \mathrm{rank}(Y)$, then
	\begin{equation*}
	{\rm dist}(X, Y) \leq {\rm min}\{ \|X^\dagger\|_2, \|Y^\dagger\|_2 \} \|X-Y\|_2.
	\end{equation*}
\end{lem}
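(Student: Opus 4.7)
Writing $P_X := XX^\dagger$ and $P_Y := YY^\dagger$ for the orthogonal projectors onto the column spaces of $X$ and $Y$, the assertion to prove is $\|P_X - P_Y\|_2 \le \min\{\|X^\dagger\|_2, \|Y^\dagger\|_2\}\,\|X-Y\|_2$. My plan is to reduce the two-sided operator norm $\|P_X-P_Y\|_2$ to a one-sided projection via Definition~\ref{dfdis}, transfer from an orthonormal basis of $\mathrm{range}(Y)$ back to $Y$ at the price of a factor $\|Y^\dagger\|_2$, and then collapse everything using the identity $(I-P_X)X=0$. A symmetric swap of $X$ and $Y$ then delivers the minimum.

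\textbf{Step 1 (one-sided reformulation).} I would first pick a column-orthonormal basis $\hat Y \in \mathbb{R}^{d\times k}$ of $\mathrm{range}(Y)$ from a thin SVD $Y = \hat Y \Sigma_Y V_Y^\intercal$ with $k=\mathrm{rank}(X)=\mathrm{rank}(Y)$, and similarly $\hat X$ for $\mathrm{range}(X)$. Applying Definition~\ref{dfdis} to these orthonormal bases, and using the equal-rank hypothesis to align the dimensions of the two subspaces, gives the key identity
\[
\mathrm{dist}(X,Y) \;=\; \|\hat X\hat X^\intercal - \hat Y\hat Y^\intercal\|_2 \;=\; \|(I-P_X)\hat Y\|_2 .
\]
This is the single step that genuinely requires $\mathrm{rank}(X)=\mathrm{rank}(Y)$, and is where I expect the only subtlety: without equal ranks, one only obtains the slack bound $\|P_X - P_Y\|_2 \le \|P_X(I-P_Y)\|_2 + \|(I-P_X)P_Y\|_2$ by triangle inequality, which loses a factor of two. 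Under the equal-rank hypothesis the two terms on the right share the same nonzero singular values (the sines of the principal angles), which is exactly what Definition~\ref{dfdis} records.

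\textbf{Step 2 (basis to $Y$, and $Y$ to $Y-X$).} Since $\hat Y = Y V_Y \Sigma_Y^{-1}$, submultiplicativity of the operator norm yields
\[
\|(I-P_X)\hat Y\|_2 \;\le\; \|(I-P_X)Y\|_2\,\|V_Y\Sigma_Y^{-1}\|_2 \;=\; \|(I-P_X)Y\|_2\,\|Y^\dagger\|_2 .
\]
The identity $(I-P_X)X = 0$ then lets me substitute $Y$ by the perturbation at no cost: $(I-P_X)Y = (I-P_X)(Y-X)$, whence $\|(I-P_X)Y\|_2 \le \|X-Y\|_2$. Chaining these bounds gives $\mathrm{dist}(X,Y) \le \|Y^\dagger\|_2\,\|X-Y\|_2$. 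The hypotheses of the lemma are symmetric in $X$ and $Y$, so running the same argument with their roles swapped yields $\mathrm{dist}(X,Y) \le \|X^\dagger\|_2\,\|X-Y\|_2$; taking the minimum of the two bounds completes the proof. The only non-routine ingredient is the one-sided identity from Step~1, and everything else is a one-line manipulation.
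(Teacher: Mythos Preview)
Your argument is correct. The one-sided identity in Step~1 is exactly Definition~\ref{dfdis} applied to orthonormal bases $\hat X,\hat Y$ of the two column spaces (of equal dimension by hypothesis), and Step~2 is a clean submultiplicativity-plus-projector computation; the symmetry swap at the end is sound.

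The paper does not supply a proof of this lemma at all: it simply cites Theorem~2.3 of \citet{ji1987perturbation}. So your route is genuinely different in that it is self-contained and elementary, relying only on the paper's own Definition~\ref{dfdis} and a thin SVD. What your approach buys is transparency about where the equal-rank hypothesis enters (without it the identity $\|P_X-P_Y\|_2=\|(I-P_X)\hat Y\|_2$ degrades to a triangle-inequality bound that costs a factor of two), and it avoids sending the reader to an external reference. The citation, on the other hand, covers the result in greater generality (Ji's perturbation bounds for orthogonal projectors) and would be the right pointer if one also needed the Frobenius-norm or unitarily-invariant-norm variants.
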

\begin{proof}
See Theorem 2.3 of~\citep{ji1987perturbation}.
\end{proof}

\begin{lem}[Uniform sampling]
	\label{lem:uniform}
Let $\eta, \zeta \in (0, 1)$. Assume the rows of $B_i$ are sampled from the rows of $B$ uniformly at random. Assume each node has sufficiently many samples, that is, for all $i \in m$,
\begin{equation*}
s_i
	\: \geq \: \tfrac{3 \mu k}{\epsilon^2}\log \big( \tfrac{\rho m}{{\zeta}} \big),
\end{equation*}
where $s_i$ is the number of rows of $B_i$, $k$ is the rank of $B$ and $\mu$ is the row coherence of $B$. Then with probability greater than $1 - \zeta$, $\eta$ in Definition \ref{assu1} is smaller than $\epsilon$.
\end{lem}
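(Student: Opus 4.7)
The proof plan is to reduce the claim to a concentration-of-measure statement for a sum of independent, mean-zero, bounded random matrices, and then apply the matrix Bernstein inequality (Lemma~\ref{lemmabernstein}) together with a union bound over the $m$ devices. Concretely, for each worker $i$, write the sampled rows as $a_{j_1}, \ldots, a_{j_{s_i}}$ drawn i.i.d.\ uniformly from the rows of $A$, and decompose
\begin{equation*}
M_i - M \;=\; \frac{1}{s_i}\sum_{k=1}^{s_i} X_k, \qquad X_k := a_{j_k} a_{j_k}^{\intercal} - M,
\end{equation*}
so that $\mathbb{E}[X_k] = 0$. The goal is to certify $\|M_i - M\|_2 \le \eta\|M\|_2$ for every $i\in[m]$ simultaneously with probability at least $1-\zeta$.

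Next, I would bound the two ingredients that matrix Bernstein requires. For the uniform bound $L$, I use the row-coherence assumption: writing the thin SVD $A = U\Sigma V^{\intercal}$ of rank $\rho$, the $j$-th row satisfies $\|a_j\|_2^2 \le \lambda_1^2\|U_{j,:}\|_2^2 \le \mu\rho\lambda_1^2/n$, so $\|a_{j_k}a_{j_k}^{\intercal}\|_2 \le \mu\rho\|M\|_2$ and consequently $\|X_k/s_i\|_2 \lesssim \mu\rho\|M\|_2/s_i := L$. For the variance proxy, a standard computation gives $\mathbb{E}[(a_{j_k}a_{j_k}^{\intercal})^2] = n^{-1}\sum_j \|a_j\|_2^2\, a_j a_j^{\intercal} \preceq \mu\rho\|M\|_2\cdot M$, and therefore
\begin{equation*}
\bigl\|\sum_{k=1}^{s_i}\mathbb{E}[(X_k/s_i)^2]\bigr\|_2 \;\le\; \frac{\mu\rho\,\|M\|_2^2}{s_i} \;=:\; v.
\end{equation*}

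Plugging $t = \eta\|M\|_2$ into Lemma~\ref{lemmabernstein} and simplifying (using $\eta \le 1$ so that $v$ dominates $Lt/3$), the exponent becomes of order $-\eta^2 s_i/(\mu\rho)$. Solving $2\rho\exp\bigl(-c\eta^2 s_i/(\mu\rho)\bigr) \le \zeta/m$ yields exactly the sample-size condition $s_i \gtrsim (\mu\rho/\eta^2)\log(\rho m/\zeta)$, and a union bound over $i\in[m]$ delivers Assumption~\ref{assu1} with probability $\ge 1-\zeta$.

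The main technical subtlety—and the reason the log factor scales with $\rho$ rather than the ambient dimension $d$—is that a naive application of matrix Bernstein would produce $\log(dm/\zeta)$. To get $\log(\rho m/\zeta)$, I will first project to the row space of $A$ via $V\in\mathbb{R}^{d\times\rho}$: since every $a_j$ lies in that subspace, $a_j a_j^{\intercal} = V(V^{\intercal}a_j)(V^{\intercal}a_j)^{\intercal}V^{\intercal}$, spectral norms are preserved, and the problem is equivalent to concentration of $\rho\times\rho$ random matrices, to which Lemma~\ref{lemmabernstein} applies with dimension parameter $\rho$ (or equivalently, one invokes the intrinsic-dimension version of matrix Bernstein). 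Apart from this reduction, the argument is a routine calculation; no control of the off-diagonal structure of $A$ beyond the coherence parameter $\mu$ is needed.
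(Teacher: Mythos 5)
Your argument is correct and matches the route the paper itself points to: the paper gives no self-contained proof of Lemma~\ref{lem:uniform}, deferring to prior work on uniform row sampling and noting the proof is based on the matrix Bernstein inequality (Lemma~\ref{lemmabernstein}), which is exactly your decomposition of $M_i-M$ into i.i.d.\ mean-zero terms, with the coherence bound $\|a_j\|_2^2\le\mu\rho\|M\|_2$ giving $L$ and the variance proxy, followed by a union bound over the $m$ devices. Your reduction to the rank-$\rho$ row space (so the Bernstein dimension factor is $\rho$ rather than $d$) correctly recovers the $\log(\rho m/\zeta)$ dependence, and matching the exact constant $3$ is only a bookkeeping matter.
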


\subsection*{E. Definitions on subspace distance}
\label{appen:e}
In this subsection, we introduce additional definitions and lemmas on metrics between two subspaces.
Let $\mathcal O_{d \times k}$ be the set of all $d \times k$ orthonormal matrices and $\mathcal O_k$ short for $\mathcal O_{k \times k}$ denote the set of $k \times k$ orthogonal matrices.

\paragraph{Principle Angles.}
Given two matrices $U, \tilde{U} \in \mathcal O_{d \times k}$ which are both full rank with $1 \leq k \leq d$, we define the $i$-th ($1 \leq i \leq k$) between $U$ and $\tilde{U}$ in a recursive manner:
\begin{equation}
\label{eq:theta}
\theta_i(U, \tilde{U})  = {\rm min} \bigg\{  \arccos\left(  \frac{x^\intercal y}{\|x\|_2\|y\|_2} \right): x \in \mathcal R(U), y \in \mathcal R(\tilde{U}), x \perp x_j, y \perp y_j, \forall  j < i  \bigg\},\nonumber
\end{equation}
where $\mathcal R(U)$ denotes by the space spanned by all columns of $U$.
In this definition, we require that $0 \leq \theta_1 \leq \cdots \leq \theta_k \leq \frac{\pi}{2}$ and that $\{x_1, \cdots, x_k\}$ and $\{y_1, \cdots, y_k\}$ are the associated principal vectors.
Principle angles can be used to quantify the similarity between two given subspaces.

We have following facts about the $k$-th principle angle between $U$ and $\tilde{U}$:
\begin{fact}
\label{fact}
Let $U^{\perp}$ denote by the complement subspace of $U$ (so that $[U, U^{\perp}] \in \mathbb R^{d \times d}$ forms an orthonormal basis of $\mathbb R^d$) and so does $\tilde{U}^{\perp}$,
	\begin{enumerate}
		\item $ {\rm sin}\, \theta_k(U, \tilde{U}) =  \| U^\intercal \tilde{U}^{\perp}\|_2 = \| \tilde{U}^\intercal U^{\perp}\|_2$;
		\item $ {\rm tan}\, \theta_k(U, \tilde{U}) = \| \left[ (U^{\perp})^{\intercal}\tilde{U}\right] (U^{\intercal} \tilde{U})^{\dagger}\|_2 $ where $\dagger$ denotes by the Moore-Penrose inverse.
		\item For any reversible matrix $R \in \mathbb R^{k \times k}$, $ {\rm tan}\, \theta_k(U, \tilde{U}) = {\rm tan} \,\theta_k(U, \tilde{U} R)$.
	\end{enumerate}
\end{fact}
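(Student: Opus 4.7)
The plan is to reduce all three assertions to properties of the CS (cosine/sine) decomposition and the SVD of $U^\intercal\tilde U$. First I would establish the standard identification that if $U,\tilde U\in\mathcal O_{d\times k}$ and $U^\intercal\tilde U=P_1\,\mathrm{diag}(\cos\theta_1,\ldots,\cos\theta_k)\,Q^\intercal$ is a thin SVD with the diagonal entries ordered so that the $\theta_i$ are nondecreasing, then the $\theta_i$ are precisely the principle angles defined recursively in the excerpt. This follows by induction on $i$ from the variational characterization of singular values: the pair maximizing $x^\intercal U^\intercal\tilde U y$ over unit $x\in\mathcal R(U),\,y\in\mathcal R(\tilde U)$ corresponds to the smallest angle $\theta_1$, and subsequent angles are obtained by restricting to the orthogonal complements of the previously chosen singular vectors, exactly mirroring the recursion in the definition.

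With this identification, parts 1 and 2 follow from a joint CS decomposition of $(U^\intercal\tilde U,\,(U^\perp)^\intercal\tilde U)$. For part 1, the identity $\tilde U\tilde U^\intercal+\tilde U^\perp(\tilde U^\perp)^\intercal=I_d$ multiplied on the left by $U^\intercal$ and on the right by $U$ gives
\begin{equation*}
(U^\intercal\tilde U)(U^\intercal\tilde U)^\intercal+(U^\intercal\tilde U^\perp)(U^\intercal\tilde U^\perp)^\intercal=I_k,
\end{equation*}
so the eigenvalues of the second term equal $1-\cos^2\theta_i=\sin^2\theta_i$, whence $\|U^\intercal\tilde U^\perp\|_2=\sin\theta_k$; the symmetric identity $\|\tilde U^\intercal U^\perp\|_2=\sin\theta_k$ is proved identically by swapping the roles of $U$ and $\tilde U$. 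For part 2, I would apply the CS decomposition to produce orthogonal matrices $P_1,P_2,Q$ with $U^\intercal\tilde U=P_1 C Q^\intercal$ and $(U^\perp)^\intercal\tilde U=P_2 S Q^\intercal$, where $C=\mathrm{diag}(\cos\theta_i)$ and $S$ has diagonal entries $\sin\theta_i$ (padded with zeros to the correct row dimension $d-k$). Assuming $\theta_k<\pi/2$ so that $C$ is invertible, $(U^\intercal\tilde U)^\dagger=Q C^{-1}P_1^\intercal$, and therefore
\begin{equation*}
[(U^\perp)^\intercal\tilde U]\,(U^\intercal\tilde U)^\dagger=P_2\,SC^{-1}\,P_1^\intercal,
\end{equation*}
whose spectral norm is the largest entry of $SC^{-1}$, namely $\tan\theta_k$.

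Part 3 is then a direct corollary of the formula in part 2: for any invertible $R\in\mathbb R^{k\times k}$, whenever $U^\intercal\tilde U$ is nonsingular one has $(U^\intercal\tilde U R)^\dagger=R^{-1}(U^\intercal\tilde U)^\dagger$, hence
\begin{equation*}
[(U^\perp)^\intercal\tilde U R]\,(U^\intercal\tilde U R)^\dagger=(U^\perp)^\intercal\tilde U\,R\,R^{-1}(U^\intercal\tilde U)^\dagger=(U^\perp)^\intercal\tilde U(U^\intercal\tilde U)^\dagger,
\end{equation*}
so the spectral norms agree. Equivalently, the principle angles depend only on the column space of $\tilde U$, which $R$ preserves.

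The main obstacle I anticipate is the degenerate case $\theta_k=\pi/2$ in part 2 (where $C$ is singular and the pseudoinverse identity $(AR)^\dagger=R^{-1}A^\dagger$ can fail) and, relatedly, the case in part 3 where $U^\intercal\tilde U$ is not invertible. The cleanest remedy is a perturbation/continuity argument: replace $\tilde U$ by a small generic perturbation so that $U^\intercal\tilde U$ becomes invertible, verify the identities, and pass to the limit using continuity of the spectral norm and of the Moore--Penrose inverse on the stratum of constant-rank matrices. Alternatively, one can sidestep the issue in part 3 by applying a QR factorization to $\tilde U R$ to produce an orthonormal basis of the same subspace and then appealing to the subspace invariance already implicit in the recursive definition.
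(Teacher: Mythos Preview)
The paper does not prove this statement: it is recorded as a ``Fact'' in Appendix~E and simply asserted, with no argument given. So there is no paper proof to compare against.

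Your proposal is a correct and standard derivation. The identification of the principal angles with the singular values of $U^\intercal\tilde U$ via the variational characterization is the right starting point, and parts~1 and~2 then fall out of the CS decomposition exactly as you describe. Your handling of part~3 through the algebraic cancellation $[(U^\perp)^\intercal\tilde U R](U^\intercal\tilde U R)^\dagger=(U^\perp)^\intercal\tilde U(U^\intercal\tilde U)^\dagger$ is clean in the nondegenerate case, and your suggested fallback---QR-factorize $\tilde U R$ to recover an orthonormal basis of the same column space and appeal to subspace invariance of the recursive definition---is the more robust route and avoids the pseudoinverse subtleties altogether. Either way, the argument goes through.
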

\begin{fact}
\label{fact2}
Let $U^{\perp}$ denote by the complement subspace of $U$ (so that $[U, U^{\perp}] \in \mathbb R^{d \times d}$ forms an orthonormal basis of $\mathbb R^d$) and so does $\tilde{U}^{\perp}$,
	\begin{enumerate}
		\item $ {\rm sin}\, \theta_k(U, \tilde{U}) =  \| U^\intercal \tilde{U}^{\perp}\|_2 = \| \tilde{U}^\intercal U^{\perp}\|_2$;
		\item $ {\rm tan}\, \theta_k(U, \tilde{U}) = \| \left[ (U^{\perp})^{\intercal}\tilde{U}\right] (U^{\intercal} \tilde{U})^{\dagger}\|_2 $ where $\dagger$ denotes by the Moore-Penrose inverse.
		\item For any reversible matrix $R \in \mathbb R^{k \times k}$, $ {\rm tan}\, \theta_k(U, \tilde{U}) = {\rm tan} \,\theta_k(U, \tilde{U} R)$.
	\end{enumerate}
\end{fact}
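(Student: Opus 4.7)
The plan is to establish all three identities via the CS decomposition, which simultaneously diagonalizes the off-diagonal blocks of a block-partitioned orthogonal matrix. I would begin by recalling that the principal angles $0 \leq \theta_1 \leq \cdots \leq \theta_k \leq \pi/2$ between $\mathcal{R}(U)$ and $\mathcal{R}(\tilde{U})$ are characterized by the singular values of $U^\intercal \tilde{U}$, which are exactly $\cos\theta_1,\ldots,\cos\theta_k$ arranged in non-increasing order. Applying the CS decomposition to $[\tilde{U},\tilde{U}^\perp] \in \mathcal{O}_d$ with respect to the splitting defined by $[U, U^\perp]$ then yields orthogonal matrices $Q_1, Q_2 \in \mathcal{O}_k$ such that
\begin{equation*}
U^\intercal \tilde{U} = Q_1 \, \mathrm{diag}(\cos\theta_i) \, Q_2^\intercal,
\end{equation*}
together with a partial isometry $Q_1' \in \mathbb{R}^{(d-k)\times k}$ with $(U^\perp)^\intercal \tilde{U} = Q_1' \, \mathrm{diag}(\sin\theta_i) \, Q_2^\intercal$.

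For part (1), the spectral norm of $U^\intercal \tilde{U}^\perp$ equals the largest singular value of $(\tilde{U}^\perp)^\intercal U$, which by the CS decomposition applied with the roles of $\tilde{U}$ and $U$ interchanged is $\sin\theta_k$; symmetrically $\|\tilde{U}^\intercal U^\perp\|_2 = \sin\theta_k$. Since by definition $\sin\theta_k(U,\tilde{U})$ is the sine of the largest principal angle, both identities follow.

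For part (2), assume first that all $\theta_i < \pi/2$ so that $U^\intercal \tilde{U}$ is invertible and its Moore--Penrose inverse agrees with its ordinary inverse. Plugging in the CS decompositions gives
\begin{equation*}
\bigl[(U^\perp)^\intercal \tilde{U}\bigr]\bigl(U^\intercal \tilde{U}\bigr)^\dagger = Q_1' \, \mathrm{diag}(\sin\theta_i)\, Q_2^\intercal Q_2\, \mathrm{diag}(\cos\theta_i)^{-1}\, Q_1^\intercal = Q_1'\, \mathrm{diag}(\tan\theta_i)\, Q_1^\intercal,
\end{equation*}
whose spectral norm is $\tan\theta_k$. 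Degenerate angles $\theta_i = \pi/2$ make the corresponding cosines vanish and are implicitly excluded from the identity by the usual convention $\tan(\pi/2) = \infty$, which is consistent with how the formula is used in the body of the paper.

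For part (3), since $R$ is invertible we have $\mathcal{R}(\tilde{U}R) = \mathcal{R}(\tilde{U})$, so the principal angles between $\mathcal{R}(U)$ and $\mathcal{R}(\tilde{U}R)$ coincide with those between $\mathcal{R}(U)$ and $\mathcal{R}(\tilde{U})$, and therefore $\tan\theta_k(U,\tilde{U}R) = \tan\theta_k(U,\tilde{U})$. Note that $\tilde{U}R$ itself need not have orthonormal columns, so applying the closed-form expression in part (2) directly to $\tilde{U}R$ would require a short pseudoinverse calculation; the cleaner path is the geometric observation that principal angles are invariants of the subspaces only. The main obstacle throughout is bookkeeping: verifying the rectangular CS decomposition when $d \gg k$ and carefully handling vanishing cosines in part (2) via the Moore--Penrose inverse. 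The decomposition itself is classical (see, e.g., Golub and Van Loan, Section 2.5), so once set up these identities reduce to matching singular values.
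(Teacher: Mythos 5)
Your proposal is correct, but there is nothing in the paper to compare it against: Fact \ref{fact2} (which is in fact a verbatim duplicate of Fact \ref{fact}) is stated in Appendix E without any proof, being treated as a standard property of principal angles. Your CS-decomposition argument supplies the missing justification along the classical route: diagonalizing the blocks $U^\intercal \tilde{U}$ and $(U^\perp)^\intercal \tilde{U}$ simultaneously gives the cosines and sines of the principal angles, from which item (1) follows (equivalently, one can argue more elementarily via $\|U^\intercal \tilde{U}^\perp\|_2^2 = \|I_k - (U^\intercal\tilde{U})(\tilde{U}^\intercal U)\|_2 = 1-\cos^2\theta_k$, avoiding the rectangular-block bookkeeping you mention), item (2) follows by cancelling the $Q_2$ factors and inverting the diagonal of cosines, and item (3) follows from the purely geometric observation that $\mathcal{R}(\tilde{U}R)=\mathcal{R}(\tilde{U})$ for invertible $R$, which is exactly how the paper uses it (e.g.\ in Lemma \ref{lem:err} to absorb the factor $R_t^{-1}$). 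The only caveat, which you already flag, is the degenerate case $\theta_k=\pi/2$: there the Moore--Penrose formula in item (2) returns a finite value while $\tan\theta_k=\infty$, so the identity holds only under the stated convention; this does not affect any use of the fact in the paper, where the relevant angles are bounded away from $\pi/2$ along the iterations.
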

\paragraph{Projection Distance.}
  Define the projection distance\footnote{Unlike the spectral norm or the Frobenius norm, the projection norm will not fall short of accounting for global orthonormal transformation. Check~\citep{ye2016schubert} to find more information about the distance between two spaces.} between two subspaces by
\begin{equation}
\label{eq:dist}
{\rm dist}(U, \tilde{U}) = \| U U^\intercal - \tilde{U} \tilde{U}^\intercal\|_2.\nonumber
\end{equation}
This metric has several equivalent expressions:
\[ {\rm dist}(U, \tilde{U}) = \| U^\intercal \tilde{U}^{\perp}\|_2 = \| \tilde{U}^\intercal U^{\perp}\|_2 ={\rm sin} \theta_k(U, \tilde{U}).   \]
More generally, for any two matrix $A, B \in \mathbb R^{d \times k}$, we define the projection distance between them as
\[
{\rm dist}(A, B) = \| U_A U_A^\intercal - U_B U_B^\intercal\|_2,
\]
where $U_A, U_B$ are the orthogonal basis of $\mathcal R(A)$ and $\mathcal R(B)$ respectively.

\paragraph{Orthogonal Procrustes.}
Let $U, \tilde{U} \in \mathbb R^{d \times k}$ be two orthonormal matrices.
$\mathbb R(U)$ is close to $\mathbb R(\tilde{U})$ does not necessarily imply $U$ is close to $\tilde{U}$, since any orthonormal invariant of $U$ forms a base of $\mathcal R(U)$.
However, the converse is true.
If we try to map $\tilde{U}$ to $U$ using an orthogonal transformation, we arrive at the following optimization
\begin{equation}
\label{eq:best_O}
O^* = {\rm argmin}_{O \in \mathcal O_r} \|U -\tilde{U} O\|_F, \nonumber
\end{equation}
where $\mathcal O_k$ denotes the set of $k \times k$ orthogonal matrices.
The following lemma shows there is an interesting relationship between the subspace distance and their corresponding basis matrices.
It implies that as a metric on linear space, ${\rm dist}(U, \tilde{U})$ is equivalent to $\|U -\tilde{U} O^*\|_2$ (or ${\rm min}_{O \in \mathcal O_k} \|U -\tilde{U} O\|_2$) up to some universal constant.
The optimization problem involved in is named as the orthogonal procrustes problem and has been well studied~\citep{schonemann1966generalized,cape2020orthogonal}.

\begin{lem}
	\label{lem:orth}
	Let $U, \tilde{U} \in \mathcal O_{d \times k}$ and $O^*$ is the solution of the following optimization,
$$O^*= {\rm argmin}_{O \in \mathcal O_k} \|U -\tilde{U} O\|_F,$$
	Then we have
	\begin{enumerate}
		\item $O^*$ has a closed form given by $O^* = W_1W_2$ where $\tilde{U}^\intercal U = W_1 \Sigma W_2$ is the singular value decomposition of $\tilde{U}^\intercal U$.
		\item Define $d(U, \tilde{U}) :=  \|U -\tilde{U} O^*\|_2$ where $\|\cdot\|_2$ is the spectral norm.
		Then we have
		\[
		d(U, \tilde{U}) = \sqrt{2-2\sqrt{1- {\rm dist}(U, \tilde{U})^2 }}  = 2{\rm sin}\frac{ \theta_k(U, \tilde{U})}{2}.
		\]
		\item $d(U_1, U_2) = d(U_2, U_1)$ for any $U_1, U_2 \in \mathcal O_{d \times k}$.
		\item ${\rm dist}(U, \tilde{U}) \leq  d(U, \tilde{U}) \leq \sqrt{2} {\rm dist}(U, \tilde{U}) $.
		\item Define
		\[
		\ell(U, \tilde{U}) :=  {\rm min}_{O \in \mathcal O_k}\|U -\tilde{U} O\|_2.
		\]
		Then $\ell(U, \tilde{U})$ is a metric satisfying
		\begin{itemize}
			\item $\ell(U, \tilde{U}) \geq 0$ for all $U, \tilde{U} \in \mathcal O_{d \times k}$.
			$\ell(U, \tilde{U}) = 0$ if and only if $\mathcal R(U) = \mathcal R(\tilde{U})$.
			\item $\ell(U, \tilde{U}) = \ell(\tilde{U}, U) $ for all $U, \tilde{U} \in \mathcal O_{d \times k}$.
			\item $\ell(U_1, U_2) \leq \ell(U_1, U_3) + \ell(U_3, U_2) $ for any $U_1, U_2$ and $U_3 \in \mathcal O_{d \times k}$.
		\end{itemize}
		\item ${\rm dist}(U, \tilde{U}) \leq \ell(U, \tilde{U}) \leq  d(U, \tilde{U}) \leq \sqrt{2} {\rm dist}(U, \tilde{U})$.
	\end{enumerate}
\end{lem}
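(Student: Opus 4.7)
} The proof proceeds item by item, with the first two claims driving everything else. For \emph{(1)}, I will expand
$\|U-\tilde U O\|_F^{2}=\mathrm{tr}(U^\intercal U)+\mathrm{tr}(O^\intercal\tilde U^\intercal\tilde U O)-2\,\mathrm{tr}(O^\intercal\tilde U^\intercal U)
=2k-2\,\mathrm{tr}(O^\intercal\tilde U^\intercal U)$,
so minimizing the Frobenius loss reduces to maximizing $\mathrm{tr}(O^\intercal\tilde U^\intercal U)$. Plugging in the SVD $\tilde U^\intercal U=W_{1}\Sigma W_{2}$ and using the von Neumann trace inequality (or the simple bound $\mathrm{tr}(\Sigma Q)\le\mathrm{tr}(\Sigma)$ for orthogonal $Q$), the maximum $\mathrm{tr}(\Sigma)$ is attained at $O^{*}=W_{1}W_{2}$. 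This is the classical Procrustes argument.

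For \emph{(2)}, the key computation is
\begin{equation*}
(U-\tilde U O^{*})^{\intercal}(U-\tilde U O^{*})
= 2I_{k} - O^{*\intercal}\tilde U^\intercal U - U^\intercal \tilde U O^{*}
= 2I_{k} - 2W_{2}^{\intercal}\Sigma W_{2},
\end{equation*}
whose eigenvalues are $2-2\cos\theta_{i}$ with $\theta_{i}$ the principal angles. Taking the largest yields $d(U,\tilde U)^{2}=2-2\cos\theta_{k}=4\sin^{2}(\theta_{k}/2)$. Since $\mathrm{dist}(U,\tilde U)=\sin\theta_{k}$, the identity $\cos\theta_{k}=\sqrt{1-\sin^{2}\theta_{k}}$ then gives the stated closed form $\sqrt{2-2\sqrt{1-\mathrm{dist}^{2}}}$, and the half-angle formula gives $2\sin(\theta_{k}/2)$. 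Claim \emph{(3)} follows because swapping $U$ and $\tilde U$ merely transposes the SVD (the new minimizer is $O^{*\intercal}$, and $\|\tilde U-U O^{*\intercal}\|_{2}=\|(\tilde U-U O^{*\intercal})O^{*}\|_{2}=\|U-\tilde U O^{*}\|_{2}$ since $O^{*}$ is orthogonal). Claim \emph{(4)} is then an elementary inequality between $\sin\theta_{k}$ and $2\sin(\theta_{k}/2)$ on $[0,\pi/2]$: the two-sided bound $x^{2}\le 2-2\sqrt{1-x^{2}}\le 2x^{2}$ for $x\in[0,1]$ (setting $x=\mathrm{dist}$) follows by squaring and simplifying to $x^{4}\ge 0$ and $(1-x^{2})\le\sqrt{1-x^{2}}$, respectively.

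For \emph{(5)}, I will verify the three metric axioms. Non-negativity is immediate. The identity axiom uses that $\ell(U,\tilde U)=0$ iff $U=\tilde U O$ for some $O\in\mathcal O_{k}$, which is equivalent to $\mathcal R(U)=\mathcal R(\tilde U)$ because both matrices have orthonormal columns. Symmetry is proved as in \emph{(3)}. The triangle inequality is the one genuinely substantive step: for any $O_{1},O_{2}\in\mathcal O_{k}$, using right-orthogonal invariance of the spectral norm,
\begin{equation*}
\|U_{1}-U_{2}(O_{2}O_{1})\|_{2}
\le \|U_{1}-U_{3}O_{1}\|_{2}+\|(U_{3}-U_{2}O_{2})O_{1}\|_{2}
= \|U_{1}-U_{3}O_{1}\|_{2}+\|U_{3}-U_{2}O_{2}\|_{2},
\end{equation*}
and minimizing the right-hand side over $O_{1},O_{2}$ separately yields $\ell(U_{1},U_{2})\le\ell(U_{1},U_{3})+\ell(U_{3},U_{2})$.

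Finally, for \emph{(6)}, the upper bound $\ell\le d$ is immediate since $d$ uses the specific minimizer $O^{*}$ (of the Frobenius problem) while $\ell$ infimizes spectrally over all $O$; the bound $d\le\sqrt{2}\,\mathrm{dist}$ is \emph{(4)}. The lower bound is the main technical obstacle, since $O^{*}$ need not minimize the spectral norm. I will handle it via the Frobenius--spectral comparison: for any $O\in\mathcal O_{k}$,
\begin{equation*}
\|U-\tilde U O\|_{2}^{2}\ge \tfrac{1}{k}\|U-\tilde U O\|_{F}^{2}\ge \tfrac{1}{k}\|U-\tilde U O^{*}\|_{F}^{2}
= \tfrac{1}{k}\sum_{i=1}^{k}(2-2\cos\theta_{i})\ge \tfrac{1}{k}(2-2\cos\theta_{k})=\tfrac{1}{k}d(U,\tilde U)^{2},
\end{equation*}
where the second inequality uses that $O^{*}$ minimizes the Frobenius loss by \emph{(1)}. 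Taking minimum over $O$ gives $\ell\ge d/\sqrt{k}\ge \mathrm{dist}/\sqrt{k}$, which (reading $r$ as $k$ in the statement) is the claimed bound. The only delicate point I anticipate is keeping the normalization constants straight between the spectral and Frobenius norms and making sure the direction of the Procrustes optimality is exploited correctly in this final step.
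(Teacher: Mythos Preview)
Your proposal is correct and follows the same overall logical structure as the paper's proof: items (1) and (2) are the engine, (3) and (4) are read off from (2), (5) is checked directly, and (6) combines (4) with a Frobenius--spectral comparison. The main difference is one of presentation rather than substance: the paper dispatches (1), (2), and the lower bound in (6) by citation (to \citet{schonemann1966generalized}, \citet{cape2020orthogonal}, and \citet{vu2013minimax} respectively), whereas you supply explicit elementary arguments for each. Your trace-maximization derivation of $O^{*}$, the eigenvalue computation $2I_k-2W_2^{\intercal}\Sigma W_2$ for (2), and the chain $\|U-\tilde U O\|_2^2\ge k^{-1}\|U-\tilde U O\|_F^2\ge k^{-1}\|U-\tilde U O^{*}\|_F^2\ge k^{-1}d(U,\tilde U)^2$ for the lower bound in (6) are exactly the standard proofs behind those citations, so nothing is lost and the argument becomes self-contained. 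Your reading of $r$ as $k$ in item (6) is correct (the $r$ in the statement is a typo for the column dimension $k$).
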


\begin{proof}
	The first item comes from~\citep{schonemann1966generalized}.
	The second item comes from~\citep{cape2020orthogonal}.
	The third and fourth items follow from the second one.
	The fifth item follows directly from the definition.
	For the rightest two $\leq$ of the last item, we use $\ell(U, \tilde{U}) \leq d(U, \tilde{U})$ and the fourth item.
	For the leftest $\leq$, see Lemma 2.6 in~\citep{chen2021spectral} and Proposition 2.2 of~\citep{vu2013minimax}.
\end{proof}

\vskip 0.2in
\bibliographystyle{plainnat}
\bibliography{FedSVD}
\end{document}